\definecolor{DarkGreen}{rgb}{0.1,0.5,0.1}
\definecolor{DarkRed}{rgb}{0.5,0.1,0.1}
\definecolor{DarkBlue}{rgb}{0.1,0.1,0.5}
\newcommand{\SI}{{\sc SubsIt}\xspace}
\newcommand{\NSI}{{\sc NSI}\xspace}
\newcommand{\AltLS}{{\sc S-M-AltLS}\xspace}
\newcommand{\Deflate}{{\sc SoftDeflate}\xspace}
\newcommand{\GS}{{\sc QR}\xspace}
\newcommand{\SmoothGS}{{\sc SmoothQR}\xspace}
\renewcommand{\tilde}{\widetilde}
\newcommand{\mustar}{\mu^*}
\newcommand{\widebar}[1]{\overline{#1}}
\renewcommand{\bar}{\widebar}
\newcommand{\trans}{T}
\newcommand{\Truncate}{\mathcal{T}}
\newcommand{\hyps}{the inductive hypotheses (\ref{h1}), (\ref{h2}), (\ref{h3}), and (\ref{eq:s}) }
\newcommand{\Yprev}{Y_{t-1}}
\newcommand{\Yt}{Y_t}
\newcommand{\Xprev}{X_{t-1}}
\newcommand{\Xt}{X_t}
\newcommand{\Mprev}{M^{(<t)}}
\newcommand{\Mt}{M^{(\leq t)}}
\newcommand{\Nprev}{N_{t-1}}
\newcommand{\Nt}{ N_t }
\newcommand{\Vt}{ V_t }
\newcommand{\NprevEst}{\widetilde{ \Nprev} }
\newcommand{\xlj}[1]{{#1}^{(\leq j)}}
\newcommand{\xlt}[1]{{#1}^{(\leq t)}}
\newcommand{\tY}{Y^T}
\newcommand{\PiX}{\Pi_X}
\newcommand{\PiXPerp}{\Pi_{X_\perp}}
\newcommand{\Smax}{\ensuremath{s_{\max}}}
\newcommand{\Cerr}{C_0}
\newcommand{\Cgapconstant}{C_1}
\newcommand{\Cmu}{C_2 } 
\newcommand{\Cinc}{C_5}
\newcommand{\Cnew}{C_3}
\newcommand{\cU}{\ensuremath{\mathcal{U}}}
\newcommand{\cD}{\ensuremath{\mathcal{D}}}
\newcommand{\R}{{\mathbb R}}
\newcommand{\PR}[1]{{\mathbb{P}}\left\{ #1\right\}}
\newcommand{\EE}{\mathbb{E}}
\newcommand{\norm}[1]{\left\|#1\right\|}
\newcommand{\twonorm}[1]{\left\|#1\right\|_2}
\newcommand{\infnorm}[1]{\left\|#1\right\|_\infty}
\newcommand{\opnorm}[1]{\left\|#1\right\|}
\newcommand{\decnorm}[2]{\left\|#1\right\|_{#2}}
\newcommand{\fronorm}[1]{\decnorm{#1}{F}}
\newcommand{\inabs}[1]{\left|#1\right|}
\newcommand{\ip}[2]{\ensuremath{\left\langle #1,#2\right\rangle}}
\newcommand{\inset}[1]{\left\{#1\right\}}
\newcommand{\inbrac}[1]{\left\{#1\right\}}
\newcommand{\inparen}[1]{\left(#1\right)}
\newcommand{\inbrak}[1]{\left[#1\right]}
\newcommand{\suchthat}{\,:\,}
\newcommand{\argmin}{\mathrm{argmin}}
\newcommand{\poly}{\mathrm{poly}}
\newcommand{\median}{\ensuremath{\operatorname{median}}}
\newcommand{\diag}{\ensuremath{\operatorname{diag}}}
\newcommand{\ind}[1]{\ensuremath{\mathbf{1}_{#1}}}
\newcommand{\eps}{\varepsilon}
\renewcommand{\epsilon}{\varepsilon}
\newtheorem{theorem}{Theorem} 
\newtheorem{lemma}[theorem]{Lemma}
\newtheorem{remark}{Remark}
\newtheorem{claim}[theorem]{Claim}
\newtheorem{proposition}[theorem]{Proposition}
\def\draft{1} 
\def\submit{0} 
    \def\ShowAuthNotes{1}
    \def\ShowAuthNotes{0}
\newcommand{\forsubmit}[1]{#1}
\newcommand{\forreals}[1]{}
\newcommand{\forreals}[1]{#1}
\newcommand{\forsubmit}[1]{}
\newcommand{\authnote}[2]{{ \footnotesize \bf{\color{DarkRed}[#1's Note:
{\color{DarkBlue}#2}]}}}
\newcommand{\authnote}[2]{}
\newcommand{\chapterref}[1]{\hyperref[ch:#1]{Chapter~\ref{ch:#1}}}
\newcommand{\claimref}[1]{\hyperref[claim:#1]{Claim~\ref{claim:#1}}}
\newcommand{\corollaryref}[1]{\hyperref[cor:#1]{Corollary~\ref{cor:#1}}}
\newcommand{\definitionref}[1]{\hyperref[def:#1]{Definition~\ref{def:#1}}}
\newcommand{\equationlabel}[1]{\label{eq:#1}}
\newcommand{\equationref}[1]{\hyperref[eq:#1]{Equation~\ref{eq:#1}}}
\newcommand{\factref}[1]{\hyperref[fact:#1]{Fact~\ref{fact:#1}}}
\newcommand{\figureref}[1]{\hyperref[fig:#1]{Figure~\ref{fig:#1}}}
\newcommand{\itemref}[1]{\hyperref[item:#1]{Item~(\ref{item:#1})}}
\newcommand{\lemmaref}[1]{\hyperref[lem:#1]{Lemma~\ref{lem:#1}}}
\newcommand{\propref}[1]{\hyperref[prop:#1]{Proposition~\ref{prop:#1}}}
\newcommand{\propositionref}[1]{\hyperref[prop:#1]{Proposition~\ref{prop:#1}}}
\newcommand{\remarkref}[1]{\hyperref[rem:#1]{Remark~\ref{rem:#1}}}
\newcommand{\sectionlabel}[1]{\label{sec:#1}}
\newcommand{\sectionref}[1]{\hyperref[sec:#1]{Section~\ref{sec:#1}}}
\newcommand{\theoremlabel}[1]{\label{thm:#1}}
\newcommand{\theoremref}[1]{\hyperref[thm:#1]{Theorem~\ref{thm:#1}}}
\newcommand{\Set}[1]{\left\{#1\right\}}
\newcommand{\mper}{\,.}
\newcommand{\mcom}{\,,}
\title{Fast Matrix Completion Without the Condition Number}
\author{ Moritz Hardt \and Mary Wootters\thanks{MW's work was supported in part by the Simons Institute and by a Rackham predoctoral fellowship.} }
\date{\today}
\begin{document}
\maketitle
\begin{abstract} 
We give the first algorithm for Matrix Completion whose running time
and sample complexity is polynomial in the rank of the unknown target matrix,
\emph{linear} in the dimension of the matrix, and \emph{logarithmic} in the 
condition number of the matrix. 
To the best of our knowledge, all previous algorithms either incurred a
quadratic dependence on the condition number of the unknown matrix or a
quadratic dependence on the dimension of the matrix in the running time.

Our algorithm is based on a novel extension of Alternating Minimization which we
show has theoretical guarantees under standard assumptions even in the presence
of noise. 
\end{abstract}

\section{Introduction}\label{intro}
Matrix Completion is the problem of recovering an unknown real-valued low-rank
matrix from a possibly noisy subsample of its entries. The problem has received a
tremendous amount of attention in signal processing and machine learning
partly due to its wide applicability to recommender systems. A beautiful line
of work showed that a particular convex program---known as nuclear
norm minimization---achieves strong recovery guarantees under certain
reasonable feasibility assumptions~\cite{CandesR09,CandesT10,RechtFP10,Recht11}.
Nuclear norm minimization boils down to solving a semidefinite program and
therefore can be solved in polynomial time in the dimension of the matrix.
Unfortunately, the approach is not immediately practical due to the large
polynomial dependence on the dimension of the matrix. An ongoing research
effort aims to design large-scale algorithms for nuclear norm
minimization~\cite{JiY09,MazumderHT10,JaggiS10,AvronKKS12,HsiehO14}. 
Such fast solvers, generally speaking, involve heuristics that improve
empirical performance but may no longer preserve the strong theoretical
guarantees of the nuclear norm approach.

A successful scalable algorithmic alternative to Nuclear Norm Minimization is
based on Alternating Minimization~\cite{BellK07,HaldarH09,KorenBV09}. 
Alternating Minimization aims to recover the unknown low-rank
matrix by alternatingly optimizing over one of two factors in a purported low-rank
decomposition. Each update is a simple least squares regression problem that 
can be solved very efficiently. As pointed out in~\cite{HsiehO14}, even state
of the art nuclear norm solvers often cannot compete with Alternating
Minimization with regards to scalability. A shortcoming of Alternating Minimization is
that formal guarantees are less developed than for Nuclear Norm Minimization.
Only recently has there been progress in this
direction~\cite{Keshavan12,JainNS13,GunasekarAGG13,Hardt13b}. 

Unfortunately, 
despite this recent progress all known convergence
bounds for Alternating Minimization have at least a quadratic dependence on the
\emph{condition number} of the matrix. Here, the condition number refers to
the ratio of the first to the $k$-th singular value of the matrix, where $k$ is the
target rank of the decomposition. This dependence on the condition number can be a 
serious shortcoming. After all, Matrix Completion rests on the assumption that the
unknown matrix is approximately low-rank and hence we should expect its
singular values to decay rapidly. Indeed, strongly decaying singular values are a 
typical feature of large real-world matrices. 

The dependence on the condition number in Alternating Minimization is
not a mere artifact of the analysis. It arises naturally with the use of
the Singular Value Decomposition (SVD). Alternating Minimization is
typically intialized with a decomposition based on a truncated SVD of the
partial input matrix. Such an approach must incur a polynomial dependence on
the condition number. Many other approaches also crucially rely on the SVD as a
sub-routine, e.g., \cite{JainMD10,KeshavanMO10,KeshavanMO10b}, as well as most fast
solvers for the nuclear norm. 
In fact, there appears to be a kind of dichotomy in the current literature on
Matrix Completion: either the algorithm is \emph{not fast} and has at least a
quadratic dependence on the dimension of the matrix in its running time, or it
is \emph{not well-conditioned} and has at least a quadratic dependence on the
condition number in the sample complexity. We emphasize that here we focus on formal 
guarantees rather than observed empirical performance which may be better on certain
instances. This situation leads us to the following problem.
\begin{center}
{\bf Main Problem:} Is there a sub-quadratic time algorithm for Matrix
Completion\\
with a sub-linear dependence on the condition number?
\end{center}
In fact, eliminating the polynomial dependence on the condition number was posed explicitly as an open 
problem in the context of Alternating Minimization by Jain, Netrapalli and Sanghavi~\cite{JainNS13}.

In this work, we resolve the question in the affirmative. Specifically, we
design a new variant of Alternating Minimization that 
achieves a \emph{logarithmic} dependence on the condition number 
while retaining the fast running
time of the standard Alternating Minimization framework. This is an
exponential improvement in the condition number compared with all subquadratic
time algorithms for Matrix Completion that we are aware of. Our algorithm works even in the noisy 
Matrix Completion setting and under standard assumptions---specifically, the same
assumptions that support theoretical results for the nuclear norm. That is, we assume
that the first $k$ singular vector of the matrix span an incoherent subspace
and that each entry of the matrix is revealed independently with a certain
probability. While strong, these assumptions led to an interesting theory of
Matrix Completion and have become a de facto standard when comparing
theoretical guarantees.

\subsection{Our Results}

For the sake of exposition we begin by explaining our results in the
\emph{exact} Matrix Completion setting, even though our results here are a
direct consequence of our theorem for the noisy case.
In the exact problem
the goal is to recover an unknown rank~$k$ matrix~$M$ from a subsample~$\Omega \subset [n] \times [n]$ of its
entries where each entry is included independently with probability~$p.$ 
We assume that the unknown matrix~$M=U\Lambda U^\trans$ is a 
symmetric $n\times n$ matrix with nonzero singular values $\sigma_1\ge\dots\ge
\sigma_k>0.$ Following~\cite{Hardt13b}, our result generalizes straightforwardly to rectangular matrices.
To state our result we need to define the \emph{coherence} of the subspace
spanned by~$U.$ Intuitively, the coherence controls how large the projection
is of any standard basis vector onto the space spanned by~$U.$ Formally, for a $n\times k$ matrix $U$ with orthonormal columns,
we define the coherence of $U$ to be
\[
\mu(U) = \max_{i\in[n]}\frac nk\|e_i^\trans U\|_2^2\mcom
\]
where $e_1,\dots,e_n$ is the standard basis of $\R^n.$ Note that this
parameter varies between~$1$ and $n/k.$ With this definition, we can state the
formal sample complexity of our algorithm. 

We show that our algorithm outputs a low-rank factorization $XY^\trans$ such
that with high probability $\|M-XY^\trans\|^2\le\eps\opnorm{M}$ provided 
that the expected size of $\Omega$ satisfies
\begin{equation}\equationlabel{exact-result}
pn^2 =
O\left(nk^c\mu(U)^2\log\left(\frac{\sigma_1}{\sigma_k}\right)\log^2\left(\frac
n{\eps}\right)\right) \mper
\end{equation}
Here, the exponent $c>0$ is bounded by an absolute constant. While we did not
focus on minimizing the exponent, our results imply that the value of $c$ can
be chosen smaller if the singular values of $M$ are well-separated. The formal
statement follows from \theoremref{main}. A notable advantage of our algorithm
compared to several fast algorithms for Matrix Completion is that the
dependence on the error $\eps$ is only poly-logarithmic. This linear convergence rate makes 
near exact recovery feasible with a small number of steps.

We also show that the running time of our algorithm is bounded by
$\tilde O(\poly(k)pn^2).$ That is, the running time is nearly linear in the
number of revealed entries except for a polynomial overhead in~$k$.
For small values of $k$ and $\mu(U),$ the total running time is
nearly linear in~$n.$

\paragraph{Noisy Matrix Completion.}
We now discuss our more general result that applies to the noisy or robust
Matrix Completion problem. Here, the unknown matrix
is only close to low-rank, typically in Frobenius norm. 
Our results apply to any matrix of the form
\begin{equation}\label{eq:easydecomp}
A = M + N = U \Lambda U^\trans + N,
\end{equation}
 where $M=U\Lambda
U^\trans$ is a matrix of rank~$k$ as before and $N=(I-UU^\trans)A$ is the part
of $A$ not captured by the dominant singular vectors. We note that $N$ can be an
arbitrary deterministic matrix. The assumption that we will make is 
that~$N$ satisfies the following incoherence conditions:
\begin{equation}\equationlabel{muN}
\max_{i\in[n]}\twonorm{e_i^\trans N}^2\le \frac{\mu_N}n\cdot \min\inset{ \fronorm{N}^2, \sigma_k^2}
\quad\text{and}\quad
\max_{i,j \in N} |N_{ij}| \leq \frac{ \mu_N \fronorm{N} }{n}\mper
\end{equation}
%
%
Recall that $e_i$ denotes the $i$-th standard basis vector so that $\twonorm{e_i^\trans N}$ is
the Euclidean norm of the $i$-th row of $N.$ 
The conditions state no entry of $N$ should be too large compared to 
the norm of the corresponding row in~$N,$ and no row of~$N$ should be too large 
compared to~$\sigma_k.$
Our bounds will be in terms of a combined coherence parameter $\mustar$ satisfying
\begin{equation}\label{eq:defofmustar}
 \mustar \geq \max\Set{\mu(U),\mu_N}.
\end{equation}
We show that our algorithm outputs a rank $k$ factorization $XY^\trans$ such
that with high probability 
\[\|A-XY^\trans\| \le \eps\opnorm{M} + (1+o(1))\|N\|,\]
where $\opnorm{\cdot}$ denotes the spectral norm.
It follows from our argument that we can have the same guaranteee in Frobenius norm as well.
To achieve the above bound we show that it is sufficient to have an expected
sample size
\begin{equation}
pn^2
= 
O\left(n\cdot\poly(k/\gamma_k)(\mustar)^2\log\left(\frac{\sigma_1}{\sigma_k}\right)
\left(\log^2\left(\frac n{\eps}\right)
+ \inparen{\frac{\|N\|_F}{\eps\fronorm{M}}}^2\right)
\right) 
\mper
\end{equation}
Here, $\gamma_k=1-\sigma_{k+1}/\sigma_k$ indicates the separation between the
singular values $\sigma_k$ and $\sigma_{k+1}.$ 
The theorem is a strict generalization of
the noise-free case, which we recover by setting $N=0$ and hence
$\gamma_k=1.$
The formal statement is \theoremref{main}. Compared to our noise-free bound above, there are two new parameters that enter the sample
complexity. The first one is~$\gamma_k.$ The second is the term
$\|N\|_F/\eps\|M\|_F.$ To interpret this quantity, suppose that  
that $A$ has a good low-rank approximation in Frobenius norm: formally, 
$\|N\|_F\le \eps\|A\|_F$ for $\eps\le 1/2.$   Then it must also be the case that
$\|N\|_F/\eps\le 2\|M\|_F.$ Our algorithm then finds a good
rank~$k$ approximation with at most $O(\poly(k)\log(\sigma_1/\sigma_k)(\mustar)^2n)$ samples
assuming $\gamma_k=\Omega(1).$ 
Thus, in the case that $A$ has a good rank~$k$ approximation in Frobenius norm and that $\sigma_k$ and $\sigma_{k+1}$ are well-separated, our bound recovers the noise-free bound up to a constant factor.

For an extended discussion of related work see \sectionref{related}. We
proceed in the next section 
with a detailed proof overview and a description of our notation.

\section{Preliminaries}
In this section, we will give an overview of our proof, give a more in-depth survey of previous work, and set notation.
\subsection{Technical Overview}
As the proof of our main theorem is somewhat complex we will begin with an
extensive informal overview of the argument. In order to understand our main algorithm, 
it is necessary to understand the basic Alternating Minimization algorithm first.

\paragraph{Alternating Least Squares.}
Given a subsample~$\Omega$ of entries drawn from an unknown matrix~$A,$
Alternating Minimization starts from a poor approximation $X_0Y_0^\trans$ to
the target matrix and iteratively refines the approximation by fixing one of
the factors and minimizing a certain objective over the other factor. Here,
$X_0,Y_0$ each have $k$ columns where $k$ is the target rank of the
factorization.  The least squares objective is the typical choice. In this
case, at step~$\ell$ we solve the optimization problem
\[
X_\ell = \arg\min_X\sum_{(i,j)\in\Omega} \left[A_{ij} - (XY_{\ell-1}^\trans)_{ij}\right]^2.
\]
This optimization step is then repeated with $X_\ell$ fixed in order to
determine $Y_\ell.$ 
Since we assume without loss of generality that $A$ is symmetric these steps can be combined into one
least squares step at each point.  What previous work exploited is that this
Alternating Least Squares update can be interpreted as a noisy power method
update step. That is, $Y_\ell = AX_{\ell-1}+G_\ell$ for a noise matrix $G_\ell$.
In this view, the convergence of the algorithm can be controlled by $\|G_\ell\|,$ the spectral norm of
the noise matrix. To a rough approximation, this spectral norm initially behaves 
like $O(\sigma_1/\sqrt{pn})$, ignoring factors of $k$ and $\mu(U).$ 
Since we would like to discover singular vectors
corresponding to singular values of magnitude $\sigma_k,$ we need that the
error term satisfies $\|G_\ell\|\ll \sigma_k$:  otherwise we cannot rule out
that the noise term wipes out any correlation between $X$ and the $k$-th
singular vector. In order to achieve this, we would need to set 
$pn=O((\sigma_1/\sigma_k)^2)$ and this is where a quadratic dependence on
the condition number arises.  This is not the only reason for this dependence:
Alternating Minimization seems to exhibit a linear convergence rate only once
$X_\ell$ is already ``somewhat close'' to the desired subspace~$U.$  This is
why typically the algorithm is initialized with a truncated SVD of the matrix
$P_\Omega(A)$ where $P_\Omega$ is the projection onto the subsample~$\Omega.$
We again face the issue that $\|A-P_\Omega(A)\|$ behaves roughly like
$O(\sigma_1/\sqrt{pn})$ and so we run into the same problem here as well.

A natural idea ot fix these problems is the so-called \emph{deflation}
approach. If it so happens that $\sigma_1\gg \sigma_k,$ then there must be an
$r< k$ such that $\sigma_1\approx\sigma_r\gg \sigma_k.$ In this case, we can
try to first run Alternating Minimization with $r$ vectors instead of $k$
vectors. This results in a rank $r$ factorization $XY^\trans.$ We then
subtract this matrix off of the original matrix and continue with
$A'=A-XY^\trans.$ This approach was in particular suggested by Jain et
al.~\cite{JainNS13} to eliminate the condition number dependence.
Unfortunately, as we will see next, this approach runs into serious issues.

\paragraph{Why standard deflation does not work.}
Given any algorithm \textsc{NoisyMC} for noisy
matrix completion, whose performance depends on the condition number of $A$,
we may hope to use \textsc{NoisyMC} in a black-box way to obtain a deflation-based algorithm
which does not depend on the condition number, as follows.  Suppose that we
know that the spectrum of $A$ comes in blocks, 
\[
\sigma_1 = \sigma_2 = \ldots
= \sigma_{r_1} \gg \sigma_{r_1 + 1} = \sigma_{r_1 +2} = \cdots = \sigma_{r_2}
\gg \sigma_{r_2+1} = \cdots
\] 
and so on.  We could imagine running
\textsc{NoisyMC} on $P_\Omega(A)$ with target rank $r_1$, to obtain an
estimate $M^{(1)}$.  Then we may run \textsc{NoisyMC} again on  $P_\Omega(A -
M^{(1)}) = P_\Omega(A) - P_\Omega(M^{(1)})$ with target rank $r_2-r_1$, to
obtain $M^{(2)}$, and so on.  At the end of the day, we would hope to
approximate $A \approx M^{(1)} + M^{(2)} + \cdots$.  Because we are
focusing only on a given ``flat" part of the spectrum at a time, the
dependence of \textsc{NoisyMC} on the condition number should not matter.  
A major problem with this approach is that the error builds up rather quickly.  More
precisely, any matrix completion algorithm run on $A$ with target rank $r_1$
must have error on the order of $\sigma_{r_1 + 1}$ since this is the spectral
norm of the ``noise part'' that prevents the algorithm from converging
further. Therefore, the matrix $A - M^{(1)}$ might now have $2r_1$
problematic singular vectors corresponding to relatively 
large singular values, namely those vectors arising from the residuals of the
first $r_1$ singular vectors, as well as those arising from the approximation
error. This multiplicative blow-up makes it difficult to ensure convergence. 

\paragraph{Soft deflation.}
The above intuition may make a ``deflation''-based argument seem hopeless. We
instead use an approach that looks similar to deflation but makes an important
departure from it. Intuitively, our algorithm is a single execution of
Alternating Minimization. However, we dynamically grow the number of vectors
that Alternating Minimization maintains until we've reached $k$ vectors. At
that point we let the algorithm run to convergence. More precisely, the
algorithm proceeds in at most $k$ epochs. Each epoch roughly proceeds as
follows:
\begin{description}
\item[Inductive Hypothesis:]
At the beginning of epoch $t,$ the
algorithm has a rank $r_{t-1}$ factorization $X_{t-1} Y_{t-1}^\trans$ that has converged
to within error $\sigma_{r_{t-1}+1}/100.$ At this point, the $(r_{t-1}+1)$-th singular
vector prevents further convergence. 
\item[Gap finding:]
What can we say about the matrix $A_t =
A-X_{t-1}Y_{t-1}^\trans$ at this point? We know that the first $r_{t-1}$ singular vectors of $A$ are
removed from the top of the spectrum of $A_t.$ Moreover, each of the remaining
singular vectors in $A$ is preserverd so long as the corresponding singular value
is greater than $\sigma_{r_{t-1}+1}/10.$ This follows from perturbation
bounds and we ignore a polynomial loss in~$k$ at this point.  Importantly, the
top of the spectrum of $A_t$ corresponds is correlated with the next block of
singular vectors in~$A.$ This motivates the next step in epoch $t,$ which is
to compute the top $k-r_{t-1}$ singular vectors of $A_t$ up to an approximation
error of $\sigma_{r_{t-1}+1}/10.$ Among these singular vectors we now identify a
gap in singular values, that is we look for a number
 $d_t$ such that $\sigma_{r_{t-1}+d_t}\le\sigma_{r_{t-1}+1}/2.$ 
\item[Alternating Least Squares:]
At this point we have identified a new block of $d_t$ singular vectors and we
arrange them into an orthognormal matrix $P_t\in\R^{n\times d_t}.$ We can now
argue that the matrix $W=[ X_{t-1} | P_t]$ is close (in principal angle) to the
first $r_{t}=r_{t-1}+d_t$ singular vectors of $A.$ What this means is that
$W$ is a good initializer for the Alternating Minimization
algorithm which we now run on $W$ until it converges to a rank $r_{t}$
factorization $X_{t}Y_{t}^\trans$ that satisfies the induction hypothesis
of the next epoch. 
\end{description}
We call this algorithm \Deflate.  The crucial difference to the deflation
approach is that we always run Alternating Minimization on a subsampling $P_\Omega(A)$ of the original matrix $A$.
We only ever compute a deflated matrix $P_\Omega(A-XY^\trans)$
for the purpose of initializing the next epoch of the algorithm. This prevents
the error accumulation present in the basic deflation approach.

This simple description glosses over many details and there are a few
challenges to be overcome in order to make the idea work.
For example, we have not said how to determine the appropriate ``gaps" $d_t$.  This
requires a little bit of care.  Indeed, these gaps might be quite small: if
the (additive) gap between $\sigma_{r}$ and $\sigma_{r+1}$ is on the order of,
say, $\frac{ \log^2(k) }{k} \sigma_r$, for all $r \leq k$, then the condition
number of the matrix may be super-polynomial in $k$, a price we are not
willing to pay.  Thus, we need to be able to identify gaps between $\sigma_r$
and $\sigma_{r+1}$ which are on the order of $\sigma_r/k$.  To do this, we
must make sure that our estimates of the singular values of $A - \Xprev\Yprev^\trans$
are sufficiently precise.

%
%

\paragraph{Ensuring Coherence.}
Another major issue that such an algorithm faces is that of coherence.  As mentioned
above, incoherence is a standard (and necessary) requirement of matrix
completion algorithms, and so in order to pursue the strategy outlined above,
we need to be sure that the estimates $X_{t-1}$ stay incoherent.  
For our first ``rough estimation" step, our algorithm carefully
truncates (entrywise) its estimates, in order to preserve the incoherence
conditions, without introducing too much error. In particular, we cannot reuse
the truncation analysis of Jain et al.~\cite{JainNS13} which incurred a
dependence on the condition number.
Coherence in the Alternating Minimization step is handled
by the algorithm and analysis of~\cite{Hardt13b}, upon which
we build. Specifically, Hardt used a form of regularization by noise addition 
called \SmoothGS, as well as an extra step which involves taking medians, 
which ensures that various iterates of Alternating
Minimization remain incoherent.
\subsection{Further Discussion of Related Work}
\sectionlabel{related}

Our work is most closely related to recent works on convergence bound for
Alternating Minimization~\cite{Keshavan12,JainNS13,GunasekarAGG13,Hardt13}. 
Our bounds are in general incomparable. We achieve an exponential improvement in the condition
number compared to all previous works, while losing polynomial factors in~$k$.
Our algorithm and analysis crucially builds on~\cite{Hardt13b}. In particular
we use the version and analysis of Alternating Minimization derived in that
work more or less as a black box. We note that the analyses of Alternating Minimization in other 
previous works would not be sufficiently strong 
to be used in our algorithm. In particular, the use of noise addition to
ensure coherence already gets rid of one source of the condition number that
all previous papers incur. 

We are not aware of a fast nuclear norm solver that has theoretical guarantees
that do not depend polynomially on the condition number. The work of Keshavan
et al.~\cite{KeshavanMO10,KeshavanMO10b} gives another alternative to nuclear
norm minimization that has theoretical guarantees. However, these bounds have
a quartic dependence on the condition number.
We are not aware of any fast nuclear norm solver with theoretical guarantees
that do not depend polynomially on the condition number. The work of Keshavan
et al.~\cite{KeshavanMO10,KeshavanMO10b} gives another alternative to nuclear
norm minimization that has theoretical guarantees. However, these bounds have
a quartic dependence on the condition number.
There are a number of fast algorithms for matrix completion: for example, based on (Stochastic)
Gradient Descent~\cite{RechtR13}; (Online) Frank-Wolfe~\cite{JaggiS10,HazanK12}; or CoSAMP~\cite{LeeB10}.
However, the theoretical guarantees for these algorithms are typically in terms of the 
error on the observed entries, rather than on the error between the recovered matrix and the unknown matrix itself.
For the matrix completion problem, convergence on observations does not imply convergence on the entire matrix.\footnote{
For some matrix recovery problems---in particular, those where the observations obey a rank-restricted isometry property---convergence on the observations is enough to imply convergence on the entire matrix.  However, for matrix completion, the relevant operator does \em not \em satisfy this condition~\cite{CandesR09}.}
Further, these algorithms typically have polynomial, rather than logarithmic, dependence on the
accuracy parameter $\eps$.  Since setting $\eps \approx \sigma_k/\sigma_1$ is required in order
to accurately recover the first $k$ singular vectors of $A$, a polynomial dependence in $\eps$ implies a polynomial dependence on the condition number.

\subsection{Notation}
For a matrix $A$, $\opnorm{A}$ denotes the spectral norm, and $\fronorm{A}$
the Frobenius norm.  We will also use $\infnorm{A} = \max_{i,j}|A_{i,j}|$ to mean the entry-wise $\ell_\infty$ norm.
For a vector $v$, $\twonorm{v}$ denotes the $\ell_2$
norm.  Throughout, $C,C_0,C_1,C_2,\ldots$ will denote absolute constants, and $C$ may change from instance to instance.
We also use standard asymptotic notation $O(\cdot)$ and $\Omega(\cdot)$, and we occasionally use $f \lesssim g$ (resp. $\gtrsim$) 
	to mean $f = O(g)$ (resp. $f = \Omega(g)$) to remove notational clutter.  Here, the asymptotics are taken as $k,n \to \infty$.
For a matrix $X \in \R^{n \times k}$, $\mathcal{R}(X)$ denotes
the span of the columns of $X$, and $\PiX$ denotes the orthogonal projection
onto $\mathcal{R}(X)$.  Similarly, $\PiXPerp$ denotes the projection onto
$\mathcal{R}(X)^\perp$.
For a set random $\Omega \subset [n] \times [n]$ and a matrix $A \in \R^{n \times n}$, we define the (normalized) projection operator $P_\Omega$ as 
\[P_\Omega(A) := \frac{n^2}{\EE |\Omega|} \sum_{(i,j) \in \Omega} A_{i,j} e_ie_j^\trans \] 
to the be matrix $A$, restricted to the entries indexed by
$\Omega$ and renormalized.

\subsubsection{Decomposition of $A$}
Our algorithm, and its proof, will involve choosing
a sequence of integers $r_1 < \cdots < r_t \leq k$, which will mark the
significant ``gaps'' in the spectrum of $A$.
Given such a sequence, we will decompose~$A$ as
\begin{equation}\label{eq:decomp}
 A = \Mt + \Nt = M^{(1)} + M^{(2)} + \cdots + M^{(t)} + \Nt, 
\end{equation}
where $\Mt$ has the spectral decomposition $\Mt = \xlt{U} \Lambda_{(\leq t)} (\xlt{U})^\trans$ and 
$\Lambda_{(\leq t)}$ contains the eigenvalues corresponding to
singular values $\sigma_1\ge\cdots\ge\sigma_{r_t}$.  We may decompose $\Mt$ as
the sum of $M^{(j)}$ for $j=1\dots t,$  where each $M^{(j)}$ has the spectral decomposition
$M^{(j)} = U^{(j)} \Lambda_j \inparen{U^{(j)}}^\trans$
corresponding to the singular values $\sigma_{r_{j-1}+1},\ldots,\sigma_{r_j}$.
Similarly,  the matrix $\Nt$ may be written as
$\Nt = (\Vt) \Lambda_{(>t)} (\Vt)^\trans,$
and contains the singular values $\sigma_{r_t + 1}, \ldots, \sigma_n$. 
Eventually, our algorithm will stop at some maximum $t = T$, for which $r_t = k$, and we will have
$A = M + N = M^{(\leq T)} + N_T$ as in \eqref{eq:easydecomp}.
We will use the notation $U^{(\leq j)}$ to denote the concatenation
\begin{equation}\label{eq:leq}
 U^{(\leq j)} = [ U^{(1)} | U^{(2)} | \cdots | U^{(j)} ]. 
\end{equation}
Observe that this is consistent with the definition of $\xlt{U}$ above.
Additionally, for a matrix $X \in \R^{n \times r_t}$, we will write
$ X = [ X^{(1)} | X^{(2)} | \cdots | X^{(t)} ], $
where $X^{(j)}$ contains the $r_{j-1} +1,\ldots, r_j$ columns of $X$, and we will write
$ \xlj{X} = [ X^{(1)} | X^{(2)} | \cdots |X^{(j)} ].$
Occasionally, we will wish to use notation like $U^{(\leq r)}$ to denote the first $r$ columns (rather than the first $r_r$ columns).  This will be pointed out when it occurs.


For an index $r \leq n$, we quantify the gap between $\sigma_r$ and $\sigma_{r+1}$ by
\begin{equation}\label{eq:gammar}
 \gamma_r := 1 - \frac{ \sigma_{r+1}}{\sigma_r}. 
\end{equation}
and we will define
\begin{equation}\label{eq:gamma}
 \gamma :=  \min \inset{ \gamma_r \suchthat r \in [n], \gamma_r \geq \frac{1}{4k} }.
\end{equation}
By definition, we always have $\gamma \geq 1/4k$; for some matrices $A$, it may be much larger, and this will lead to improved bounds.
Our analysis will also depend on the ``final" gap quantified by $\gamma_k$, whether or not it is larger than $1/4k$.  To this end, we define
\begin{equation}\label{eq:gammastar}
\gamma^* := \min \inset{ \gamma, \gamma_k }.
\end{equation}

\section{Algorithms and Results}\label{sec:algs}

%
In Algorithm \ref{alg:deflate} we present our main algorithm \Deflate. It uses
several subroutines that are presented in \sectionref{subroutines}.
\newcommand{\comm}[1]{{\color{DarkBlue}\hspace{5mm}\small{// #1}}}
\begin{algorithm}
\KwIn{
Target dimension $k$;
Observed set of indices
$\Omega\subseteq [n]\times[n]$ of an unknown symmetric matrix $A\in\R^{n\times
n}$ with entries $P_\Omega(A)$;
Accuracy parameter $\eps$;
Noise parameter $\Delta$ with $\|A - A_k\|_F\leq \Delta$;
Coherence parameter $\mu^*$, satisfying \eqref{eq:defofmustar}, and a
parameter $\mu_0$;
Probabilities $p_0$ and $p_t,p_t'$ for $t = 1,\ldots ,k$;
Number of
iterations $L_t \in\mathbb{N},$ for $t=1,\ldots,k$ runs of \AltLS, and a parameter $\Smax \in \mathbb{N}$ for \AltLS, and a number of iterations $L$ for runs of \SI.
}
Let $p = \sum_t (p_t + p_t')$.

Break $\Omega$ randomly into $2k + 1$ sets, $\Omega_0$ and $\Omega_1,\Omega_1', \ldots, \Omega_k, \Omega_k'$,
so that $\EE |\Omega_t| = \frac{ p_t}{p}|\Omega|$ and $\EE |\Omega'_t| = \frac{p_t'}{p}|\Omega|$  (See Remark \ref{rem:subsample}).

$ s_0 \gets  \opnorm{ P_{\Omega_0}(A) }$ \comm{ Estimate $\sigma_1(A)$}

Initialize $X_0=Y_0=0,$ $r_0=0$

\For{ $t=1\dots k$}
{
$\tau_t \gets \frac{ \mu^* }{np_t} \inparen{ 2k s_{t-1} + \Delta}$ \nllabel{line:start}

$T_t \gets \text{\textsc{Truncate}}\inparen{ P_{\Omega_t}(A) - P_{\Omega_t}(\Xprev \Yprev^\trans) , \tau_t}$
\comm{\textsc{Truncate}$(M,c)$ truncates $M$ so that $|M_{ij}| \leq c$}

$\widetilde{U}_t, \vec{\tilde{\sigma}} \gets \text{\SI}( T_t, k-r_{t-1}, L )$\nllabel{line:svd}
\comm{Estimate the top $k - r_{t-1}$ spectrum of $T_t$.}

{\bf If} $\widetilde{\sigma}_1  < 10 \eps s_0 $ {\bf then return} $X_{t-1},Y_{t-1}$ \nllabel{line:done}

$d_t\leftarrow \min\inset{ i\le k- r_{t-1}\suchthat  \sigma_{i+1}(\widetilde{T}_t) \leq \inparen{ 1 - \frac{1}{4k}} \sigma_i(\widetilde{T}_t)} \cup \inset{k-r_{r-1}}$\nllabel{eq:dt}

$r_t \leftarrow r_{t-1} +d_t$
\comm{$r_t$ is an estimate of the next ``gap" in the spectrum of $A$}

$s_t \gets \widetilde{\sigma}_{d_t}$
\comm{ $s_t$ is an estimate of $\sigma_{r_t}(A)$}

$\widetilde{Q}_t \gets \inparen{ \widetilde{U}_t }^{(\leq d_t)}$
\comm{Keep the first $d_t$ columns of $\widetilde{U}_t$}

$\overline{Q}_t \gets \text{\textsc{Truncate}}\inparen{\widetilde{Q}_tB, 8\sqrt{ \frac{ \mu^* log(n) }{n}}}$ \comm{where $B \in \R^{n \times n}$ is a random orthonormal matrix.}


$W_t \leftarrow \text{\GS}( [ X_{t-1} \mid \overline{Q}_t ] )$ \comm{$W_t$ is a rough estimate of $U^{(\leq t)}$}\nllabel{line:wt}

$\mu_t \gets \inparen{ \sqrt{\mu_0} + (t-1) \sqrt{ \mu^* k } }^2$\nllabel{line:start2} 

$(X_t,Y_t) \leftarrow \text{\AltLS}(A,\Omega_t',R_0=W_t,L=L_t,\Smax=\Smax, k=r_t, \zeta=\eps s_0 k^{-5}, \mu=\mu_t)$ \nllabel{line:altls}
\comm{$X_t$ is a good estimate of $U^{(\leq t)}$}

{\bf If} $r_t \ge k$ {\bf then return}$(X_t,Y_t)$

}
\KwOut{ Pair of matrices $(X,Y).$}
\caption{\Deflate: Approximates an approximately low-rank matrix from a few entries. } 
\label{alg:deflate}
\end{algorithm}

\begin{remark}\label{rem:subsample} In the Matrix Completion literature, the
most common assumption on the distribution of the set $\Omega$ of observed
entries is that each index $(i,j)$ is included independently with some
probability $p$.  Call this distribution $\cD(p)$.  In order for our results to
be comparable with existing results, this is the model we adopt as well.
However, for our analysis, it is much more convenient to imagine that $\Omega$
is the union of several subsets $\Omega_t$, so that the $\Omega_t$ themselves
follow the distribution $\cD(p_t)$ (for some probability $p_t$, where $\sum_t
p_t = p$), and so that all of the $\Omega_t$ are independent.  Algorithmically,
the easiest thing to do to obtain subsets $\Omega_t$ from $\Omega$ is to
partition $\Omega$ into random subsets of equal size.  However, if we do this,
the subsets $\Omega_t$ will not follow the right distribution; in particular
they will not be independent.  For theoretical completeness, we show in
Appendix \ref{app:subsample} (Algorithm \ref{alg:splitup}) 
how to split up the set $\Omega$ in the correct
way.  More precisely, given $p_t$ and $p$ so that $\sum_t p_t = p$, we show how
to break $\Omega \sim\cD(p)$ into (possibly overlapping) subsets $\Omega_t$, so that
the $\Omega_t$ are independent and each $\Omega_t \sim \cD(p_t)$.  
\end{remark}

\subsection{Overview of Subroutines}
\sectionlabel{subroutines}

\Deflate uses a number of subroutines that we outline here before explicitly
presenting them:
\begin{itemize}
\item \AltLS (Algorithm \ref{alg:altmin}) is the main Alternating Least Squares procedure that was given
and analyzed in~\cite{Hardt13b}. We use this algorithm and its analysis.
\AltLS by itself has a quadratic dependence on the condition number which is
why we can only use it as a subroutine.
\item \SmoothGS (Algorithm \ref{alg:smoothGS}) is a subroutine of \AltLS which is used to control the
coherence of intermediate solutions arising in \AltLS. Again, we reuse the
analysis of \SmoothGS from~\cite{Hardt13b}. \SmoothGS orthonormalizes its
input matrix after adding a Gaussian noise matrix. This step allows tight
control of the coherence of the resulting matrix. We defer the description of
\SmoothGS to Section \ref{sec:proofofmainlemma} where we need it for the first
time. 
\item \SI is a standard textbook version of the Subspace Iteration algorithm
(Power Method). We use this algorithm as a fast way to approximate the top
singular vectors of a matrix arising in \Deflate. 
We use only standard properties of \SI in our analysis. For this
reason we defer the description and analysis of \SI to \sectionref{subsit}.
\end{itemize}

\begin{algorithm}
\KwIn{Number of
iterations $L\in\mathbb{N},$ parameter $\Smax \in \mathbb{N}$, target dimension $k,$ observed set of indices
$\Omega\subseteq [n]\times[n]$ of an unknown symmetric matrix $A\in\R^{n\times
n}$ with entries $P_\Omega(A),$ initial orthonormal matrix $R_0\in\R^{n\times k},$
and parameters $\zeta, \mu$}

Break $\Omega$ randomly into sets $\Omega_1,\ldots, \Omega_L$ with equal expected sizes.  
(See Remark \ref{rem:subsample}).

\For{ $\ell = 1$ to $L$}{

Break $\Omega_\ell$ randomly into subsets $\Omega_\ell^{(1)}, \dots, \Omega_\ell^{(T)}$ with equal expected sizes.

\For {$s = 1$ to $\Smax$}
{
	$S^{(s)}_\ell \leftarrow \arg\min_{S\in\R^{n\times
k}}\fronorm{P_{\Omega_\ell}(A-R_{\ell-1}S^\trans)}^2$
}
$S_\ell \gets \median_s( S^{(s)}_\ell )$ \comm{ The median is applied entry-wise. }

$R_\ell \leftarrow \text{\SmoothGS}(S_\ell, \zeta, \mu)$

}
\KwOut{Pair of matrices $(R_{L-1},S_L)$ }
\caption{ $\text{\AltLS}(P_\Omega(A),\Omega,R_0,L,\Smax,k,\zeta,\mu)$ (Smoothed-Median-Alternating Least Squares) }
\label{alg:altmin}
\end{algorithm}

\subsection{Statement of the main theorem}
Our main theorem is that, when the number of samples is $\poly(k) n$, \Deflate returns a good estimate of $A$, with at most logarithmic dependence on the condition number.
\begin{theorem}\label{thm:mainthm}
\theoremlabel{main}
There is a constant $C$ so that the following holds.
Let $A \in \R^{n \times n}$, $k \leq n$, and write $A = M + N$, where $M$ is the best rank-$k$ approximation to $A$.
Let $\gamma, \gamma^*$ be as in \eqref{eq:gamma}, \eqref{eq:gammastar}.
Choose parameters for Algorithm \ref{alg:deflate} so that $\eps > 0$ and
\begin{itemize}
\item $\mu^*$ satisfies \eqref{eq:defofmustar} and 
$\mu_0 \geq \frac{ C }{(\gamma^*)^2}\inparen{ \mu^* \inparen{ k + \inparen{ \frac{ k^4\Delta }{\eps \sigma_1}}^2 } + \log(n) }$ 
\item $\Delta \geq \fronorm{N}$ 
\item $L_t \geq \frac{C}{\gamma^*} \log\inparen{\frac{k\sigma_{r_t}}{\sigma_{r_t+1} +\eps\sigma_1}} , $ and $L \geq C k^{7/2} \log(n)$
\item $\Smax \geq C\log(n)$.
\end{itemize}
There is a choice of $p_t, p_t'$ (given in the proof below) so that
\[ p = \sum p_t + \sum p_t' \leq C \frac{ k^{9}}{(\gamma^*)^3 n }  \log\inparen{ k\cdot  \frac{\sigma_1}{\sigma_k + \eps\sigma_1}  } \inparen{ 1 + \inparen{\frac{ \Delta }{\eps\opnorm{M}}}^2 } \inparen{ \mu_0 + \mu^*k\log(n) } \log^2(n) \] 
so that the following holds.

Suppose that each element of $[n] \times [n]$ is included in $\Omega$ independently with probability $p$.
Then the matrices $X, Y$ returned by \Deflate satisfy
with probability at least $1 - 1/n,$
\[ \opnorm{A - XY^\trans} \leq \inparen{1 + o(1)}\opnorm{N} + \eps \opnorm{M}.\]
\end{theorem}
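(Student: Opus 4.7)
The plan is to induct on the epoch index $t$ of \Deflate. At the start of epoch $t$ I would maintain three invariants: (i) an error bound $\opnorm{A - \Xprev \Yprev^\trans - \Nprev} \lesssim \sigma_{r_{t-1}+1}/\poly(k)$ where $\Nprev$ is the tail defined in \eqref{eq:decomp}; (ii) incoherence $\max_i \twonorm{e_i^\trans \Xprev}^2 \lesssim \mu_{t-1} r_{t-1}/n$, and analogously for $\Yprev$; and (iii) that $s_{t-1}$ is a constant-factor multiplicative estimate of $\sigma_{r_{t-1}}(A)$. These play the role of the inductive hypotheses referenced by the paper's \texttt{hyps} macro. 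The step's goal is to establish the corresponding invariants at epoch $t$, with error controlled by $\sigma_{r_t+1}$ and incoherence $\mu_t$ as chosen on line \ref{line:start2}.

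Within one inductive step I would argue in four substeps. First, I would control the truncated sampled residual $T_t$: by the incoherence invariants on $\Xprev, \Yprev$ together with \eqref{eq:defofmustar}, the entries of $A - \Xprev \Yprev^\trans$ are bounded by roughly $\tau_t$, so the entrywise \textsc{Truncate} step distorts it negligibly, and a matrix Bernstein bound applied to $P_{\Omega_t}(A - \Xprev \Yprev^\trans) - (A - \Xprev \Yprev^\trans)$ yields $\opnorm{T_t - (A - \Xprev \Yprev^\trans)} \lesssim \sigma_{r_t+1}/\poly(k)$ for the chosen $p_t$. Second, I would appeal to the standard subspace-iteration guarantee (from \sectionref{subsit}) together with Weyl's inequalities to conclude that the estimated $\widetilde{\sigma}_i$ approximate the singular values of $A - \Xprev \Yprev^\trans$ to the same precision, so the multiplicative gap of $1 - 1/(4k)$ detected on line \ref{eq:dt} reflects a genuine gap in the spectrum of $A$ between $\sigma_{r_t}$ and $\sigma_{r_t+1}$. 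Third, I would apply Davis--Kahan to $T_t$ to bound the principal angle between $\widetilde Q_t$ and the true block $U^{(t)}$; the random-rotation truncation that produces $\bar Q_t$ is crucial and deserves separate handling, using the random orthonormal $B$ so that $\widetilde Q_t B$ has nearly uniform $\ell_\infty$ norm and entrywise truncation contributes negligible spectral noise while producing a matrix of coherence $\lesssim \mu^* \log(n)$. Finally, with $W_t = \GS([\Xprev \mid \bar Q_t])$ close in principal angle to $U^{(\leq t)}$ and incoherent at level $\mu_t$, I would invoke the main theorem of \cite{Hardt13b} on \AltLS as a black box: with the chosen $L_t$, $\Smax$, and $p_t'$, it refines $W_t$ into $(\Xt,\Yt)$ with error $\lesssim \sigma_{r_t+1}/\poly(k) + \eps s_0$ while preserving incoherence, re-establishing the invariants at epoch $t$.

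The base case $t=1$ is immediate: $\Xprev \Yprev^\trans = 0$ and $s_0 = \opnorm{P_{\Omega_0}(A)}$ is a $\Theta(1)$-approximation of $\sigma_1$ by standard concentration under the incoherence hypothesis. Termination occurs either when line \ref{line:done} triggers (so $\widetilde{\sigma}_1 < 10\eps s_0$ implies $\sigma_{r_T+1}(A) + \opnorm{N} \lesssim \eps \opnorm{M}$) or when $r_T = k$ so that $\Mt = M$; in either case the final bound $\opnorm{A - XY^\trans} \leq (1+o(1))\opnorm{N} + \eps \opnorm{M}$ drops out of the invariant at epoch $T$. Summing $p_t$ and $p_t'$ across the at most $k$ epochs yields the stated bound on $p$.

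The main technical obstacle, as I see it, is quantifying error propagation across up to $k$ epochs when the gaps identified might be as small as $\sigma_r/k$. The tiny multiplicative slack per epoch forces the concentration bound for $T_t$ to be accurate up to $\sigma_{r_t+1}/\poly(k)$ rather than merely $\sigma_{r_t+1}$; it forces $\mu_t$ to grow only additively in $t$ (which is why line \ref{line:start2} chooses $\mu_t = (\sqrt{\mu_0} + (t-1)\sqrt{\mu^* k})^2$); and it requires that the number of epochs be bounded by $O(k \log(\sigma_1/\sigma_k))$, which is ultimately the source of the logarithmic condition-number dependence. Balancing these constants, verifying that $L \geq C k^{7/2} \log(n)$ iterations of \SI genuinely suffice when the detected spectral gap is only $1/(4k)$, and confirming that the final per-epoch sample budget $p_t + p_t'$ sums to the stated $p$, is where the bulk of the technical work will be.
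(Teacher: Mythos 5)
Your high-level structure closely mirrors the paper's: both proceed epoch-by-epoch, both split each epoch into a ``gap finding'' phase (truncate, subsample, concentrate, run \SI, identify $d_t$, build $W_t$) followed by a refinement phase via \AltLS, and both tally the sample budget across at most $k$ epochs. However, there are two substantive gaps and a couple of smaller slips worth calling out.

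The larger gap is that your invariant list omits the per-block principal-angle hypothesis that the paper carries as \eqref{h1}: $\sigma_{r_j}\sin\theta(X_{t-1}^{(\leq j)}, U^{(\leq j)}) \lesssim (\sigma_{r_{t-1}+1} + \eps\opnorm{M})/k^4$ \emph{for every} $j\le t-1$, not just for $j=t-1$. Your invariants (i)--(iii) correspond roughly to (H2), (H3), (H4), but none of them gives per-block control. This matters because of the way you propose to handle the refinement step: you say you would ``invoke the main theorem of \cite{Hardt13b} on \AltLS as a black box.'' The paper explicitly does \emph{not} do this --- it re-runs the \NSI\ inductive analysis (Theorem~\ref{thm:induct}) block by block, maintaining the per-block tangent invariant \eqref{i1} at each inner iteration. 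The base case of \eqref{i1} for $j\le t-1$ is precisely supplied by the outer hypothesis \eqref{h1}, and re-establishing \eqref{h1} at epoch $t$ is one of the three conclusions of Lemma~\ref{lem:parameterchasing}. A black-box appeal to \cite{Hardt13b} would return an overall angle bound on $\mathcal{R}(X_t)$ vs.\ $\mathcal{R}(U^{(\le t)})$, but not the per-block bounds; deriving those from \eqref{h2} via Wedin at each intermediate gap is not immediate (the top-$r_j$ left singular subspace of $X_{t-1}Y_{t-1}^\trans$ need not coincide with $X_{t-1}^{(\le j)}$, and the division by $\gamma_{r_j}$ eats an extra $k$ that the paper's $k^4$ margin is calibrated against). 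You need to make this part of the induction, or prove a replacement lemma.

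The second issue is in the truncation argument. You assert that ``the entries of $A - X_{t-1}Y_{t-1}^\trans$ are bounded by roughly $\tau_t$.'' That is not the claim that makes Lemma~\ref{lem:truncate1} work, and it is generally false: $A - X_{t-1}Y_{t-1}^\trans = N_{t-1} + E_{t-1}$ with $E_{t-1} = M^{(<t)} - X_{t-1}Y_{t-1}^\trans$, and the entries of $E_{t-1}$ can exceed $p_t\tau_t$. The paper's argument is that only $N_{t-1}$ is entrywise bounded by $p_t\tau_t$ (from \eqref{eq:incNt} plus \eqref{eq:s}), and therefore entrywise truncation at level $p_t\tau_t$ can only shave mass off the $E_{t-1}$ component, so the Frobenius distortion of the truncation is at most $\fronorm{E_{t-1}} \le \sqrt{2k}\opnorm{E_{t-1}}$, which \eqref{h2} then controls. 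If you try to bound the truncation error by an entrywise bound on the whole residual you will not be able to close the estimate.

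Two smaller points. Your termination/accounting note that ``the number of epochs be bounded by $O(k\log(\sigma_1/\sigma_k))$'' is misplaced: the number of epochs is at most $k$ unconditionally (each epoch strictly increases $r_t$), and the $\log(\sigma_1/\sigma_k)$ factor enters through the choice of $L_t$ inside each epoch, not through the number of epochs. Finally, you state $s_{t-1}$ estimates $\sigma_{r_{t-1}}(A)$; the hypothesis the paper actually uses in \eqref{eq:s} (to calibrate $\tau_t$) is an estimate of $\sigma_{r_{t-1}+1}$, which is what the threshold on $\infnorm{N_{t-1}}$ requires. These are a constant factor apart only because of the $\le e$ multiplicative gap established in \eqref{eq:gaps}, so the distinction should be handled explicitly.
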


\begin{remark}[Error guarantee]
The guarantee of $\opnorm{A - XY^\trans} \leq \inparen{1 + o(1)} \opnorm{N} + \eps\opnorm{M}$ is what naturally falls out of our analysis: the natural value for the $o(1)$ term is polynomially small in $k$.  It is not hard to see in the proof that we may make this term as small as we like, say, $(1 + \alpha)\opnorm{N}$, by paying a logarithmic penalty $\log(1/\alpha)$ in the choice of $p$.
It is also not hard to see that we may have a similar conclusion for the Frobenius norm.
\end{remark}

\begin{remark}[Obtaining the parameters]
As written, then algorithm requires the user 
to know several parameters which depend on the unknown matrix $A$.  For some parameters, these requirements are innocuous.
For example, to obtain $p_t'$ or $L_t$ (whose values are given in Section \ref{sec:puttingittogether}), the user is required to have a bound on $\log( \sigma_{r_t} /
\sigma_{r_t + 1})$.  Clearly, a bound on the condition number of $A$ will
suffice, but more importantly, the estimates $s_t$ which appear in Algorithm
\ref{alg:deflate} may be used as proxies for $\sigma_{r_t}$, and so the
parameters $p_t'$ can actually be determined relatively precisely on the fly.
For other parameters, like $\mu^*$ or $k$, we assume that the user has a good estimate from other sources.  While this is standard in the Matrix Completion literature, we acknowledge that these values may be difficult to come by. 

\end{remark}

\subsection{Running Time}
\label{sec:runningtime}
The running time of \Deflate is linear in $n$, polynomial in $k$, and logarithmic in the condition number $\sigma_1 / \sigma_k$ of $A$.
Indeed, the outer loop performs at most $k$ epochs, and the nontrivial operations in each epoch are \AltLS, \GS, and \SI.  
All of the other operations (truncation, concatenation) are done on matrices which are either $n \times k$ (requiring at most $nk$ operations) or on the subsampled matrices $P_{\Omega_t}(A)$, requiring on the order of $pn^2$ operations.

Running \SI requires $L = O(k^{7/2}\log(n))$ iterations; each iteration
includes multiplication by a sparse matrix, followed by \GS.  The matrix
multiplication takes time on the order of 
\[p_tn^2 = n\,\poly(k)\,\log(n)\, \inparen{ 1 + \frac{\Delta}{ \eps \sigma_1 }},\] 
the number of nonzero
entries of $A$, and \GS takes time $O(k^2n)$. 
Each time \AltLS is run, it takes $L_t$ iterations, and we will show (following the analysis of \cite{Hardt13b}) that it requires $\poly(k) n \log(n)\log(n/\eps)$ operations per iteration.
Thus, given the choice of $L_t$ in Theorem \ref{thm:mainthm}, the total running time  of \Deflate
on the order of
\[  \tilde{O} \inparen{ n \cdot\poly(k) \cdot \inparen{ 1 + \frac{ \Delta }{ \eps \sigma_1 } }\cdot \log\inparen{ \frac{\sigma_1}{\sigma_k + \eps\sigma_1} }  }, \]
where the $\tilde{O}$ hides logarithmic factors in $n$.

\section{Proof of Main Theorem}
In this section, we prove Theorem \ref{thm:mainthm}.
The proof proceeds by maintaining a few inductive hypotheses, given below, at each epoch.
When the algorithm terminates, we will show that the fact that these hypotheses still hold imply the desired results.
Suppose that at the beginning of step $t$ of Algorithm \ref{alg:deflate},
we have identified some indices $r_1, \ldots, r_{t-1}$, and recovered estimates $\Xprev,\Yprev$ which capture the singular values
$\sigma_1, \ldots, \sigma_{r_{t-1}}$ and the corresponding singular vectors.
The goals of the current step of Algorithm \ref{alg:deflate} are then to
(a) identify the next index $r_t$ which exhibits a large ``gap" in the spectrum, and
(b) estimate the singular values $\sigma_{r_{t-1}+1},\ldots,\sigma_{r_t}$ and the corresponding singular vectors.

Letting $r_t$ be the index obtained by Algorithm \ref{alg:deflate}, 
we will decompose $A = \Mprev +\Nprev = \Mt +\Nt$ as in \eqref{eq:decomp}.
To help keep the notation straight, we include a diagram below, which indicates which singular values of $A$ are included in which matrix.
\begin{center}

\begin{tikzpicture}
\draw[thick] (0,0) -- (5,0);
\draw (0,-.1) -- (0,.1);
\draw (2,-.1) -- (2,.1);
\draw (3,-.1) -- (3,.1);
\draw (5,-.1) -- (5,.1);
\node(z) at (0, -.3) {$0$};
\node(rprev) at (2, -.3) {$r_{t-1}$};
\node(rt) at (3, -.3) {$r_t$};
\node(n) at (5, -.3) {$n$};
\draw [
    decoration={
        brace,
        mirror,
        raise=0cm,
	amplitude=5pt
    },
    decorate
] (rprev.south) -- (rt.south) 
node[pos=0.5,anchor=north,yshift=-0.2cm] {$M^{(t)}$};
\draw [
    decoration={
        brace,
        mirror,
        raise=0cm,
	amplitude=5pt
    },
    decorate
] (rt.south) -- (n.south) 
node[pos=0.5,anchor=north,yshift=-0.2cm] {$N_t$}; 
\draw [
    decoration={
        brace,
        mirror,
        raise=1cm,
	amplitude=8pt
    },
    decorate
] (z.south) -- (rt.south) 
node[pos=0.5,anchor=north,yshift=-1.5cm] {$\Mt$}; 
\draw [
    decoration={
        brace,
        raise=.4cm,
	amplitude=8pt
    },
    decorate
] (z.north) -- (rprev.north) 
node[pos=0.5,anchor=north,yshift=1.2cm] {$\Mprev$}; 
\draw [
    decoration={
        brace,
        raise=.4cm,
	amplitude=8pt
    },
    decorate
] (rprev.north) -- (n.north) 
node[pos=0.5,anchor=north,yshift=1.2cm] {$\Nprev$}; 
\end{tikzpicture}

\end{center}

Following Remark \ref{rem:subsample}, we treat the $\Omega_t$ and $\Omega_t'$ as independent random sets, with each entry included with probability $p_t$ or $p_t'$, respectively.
We will keep track of the \em principal angles \em between the subspaces $\mathcal{R}(\xlj(X_{t-1}))$ and $\mathcal{R}(\xlj(U))$.  More precisely, for matrices $A, B \in \R^{n \times r_j}$ with orthogonal columns, we define
\[ {\sin \theta( A, B )} := \opnorm{ {(A_\perp)^{\trans}} B }. \]

We will maintain the following inductive hypotheses.  At the beginning of epoch $t$ of \Deflate, we assert
\begin{equation}\label{h1}
\sigma_{r_j} {\sin\theta ( \xlj{X_{t-1}},\xlj{U} )} \leq \frac{1}{k^{4}}\inparen{ \sigma_{r_{t-1}+1} + \eps\opnorm{M} } \qquad \forall j\leq t-1 
\tag{H1}
\end{equation}
and
\begin{equation}\label{h2}
\opnorm{ \Mprev - X_{t-1}Y_{t-1}^\trans } \leq \frac{\sigma_{r_{t-1} + 1} + \eps\opnorm{M}}{ \Cerr k^{3} } \tag{H2}
\end{equation}
for some sufficiently large constant $\Cerr$ determined by the proof.
We also maintain that the current estimate $X_{t-1}$ is incoherent:
\begin{equation}\label{h3}
\max_{i \in [n]} \twonorm{ e_i^\trans X_{t-1} } \leq \sqrt{\frac{k}{n}} \inparen{\sqrt{\mu_{0}}(1 + \Cinc/k)^{t-1} + (t-1)16\sqrt{ \mu^* \log(n) } } =: \sqrt{ \frac{k\mu_{t-1}}{n}} \tag{H3}
\end{equation}
for a constant $\Cinc$.
Above,
equation \eqref{h3} defines $\mu_{t-1}$.
Observe that when $t=1$, everything in sight is zero and the hypotheses \eqref{h1}, \eqref{h2},\eqref{h3} are satisfied.
Finally, we assume that the estimate $s_{t-1}$ of $\sigma_{r_{t-1} +1}$ is good. 
\begin{equation}\label{eq:s}
\frac{1}{2} \sigma_{r_{t-1} + 1} \leq  s_{t-1} \leq 2 \sigma_{r_{t-1}+1} \tag{H4}
\end{equation}
The base case for \eqref{eq:s} is handled by the choice of $s_0$ in Algorithm \ref{alg:deflate}.
Indeed, Lemma \ref{lem:matrixbernstein} in the appendix implies that, with probability $1 - 1/\poly(n)$, 
\begin{align*}
\opnorm{A - P_{\Omega_0}(A) } &\lesssim \sqrt{ \frac{ \max_i \twonorm{e_i^\trans A}^2 \log(n) }{p_0}} + \frac{\infnorm{A}\log(n)}{p_0}\\
&\leq  \sqrt{ \frac{ \mu^* (\sqrt{k}\sigma_1 + \Delta)^2\log(n)}{np_0} } + \frac{ \mu^* (k\sigma_1 + \Delta)\log(n) }{np_0 } \\
&\leq \inparen{ \frac{\sigma_1}{2} }\inparen{ \sqrt{\frac{ 4 \mu^* \inparen{ \sqrt{k} + (\Delta/\sigma_1) }^2\log(n) }{np_0}} + \frac{ 2\mu^* \inparen{ k + \Delta/\sigma_1 } \log(n) }{np_0 }},
\end{align*}
where we used 
 the incoherence bounds \eqref{eq:incoherenceofA} and \eqref{eq:incoherenceofA2} in the appendix to bound $\infnorm{A}$ and $\twonorm{e_i^\trans A}$.
Thus, as long as
\begin{equation}\label{eq:p0}
 p_0 \gtrsim \frac{ \mu^*  \log(n)\inparen{ \sqrt{k} + \frac{\Delta}{\sigma_1} }^2 } {n},
\end{equation}
then
\[ \frac{1}{2}\sigma_1  \leq \opnorm{  P_{\Omega_0}( A ) } \leq 2\sigma_1. \]
and so \eqref{eq:s} is satisfied.

Now, 
suppose that \hyps hold.  We break up the inner loop of \Deflate into two main steps.  In the first step, lines \ref{line:start} to \ref{line:wt} in Algorithm \ref{alg:deflate}, the goal is to obtain an estimate $r_t$ of the next ``gap," as well as an estimate $W_t$ of the subspace $U^{(\leq t)}$.
We analyze this step in Lemma \ref{lem:correlate} below.  

\begin{lemma}\label{lem:correlate}
There exists a constants $C,\Cgapconstant$ so that the following holds.
Suppose that 
\[ p_t \geq  \frac{C (\mu^*)^2\log(n) \inparen{ k^2 + \inparen{ \frac{ \Delta}{\eps\opnorm{M}}}^2 }}{n \eps_0^2 } , \]
where
$\eps_0 \leq \frac{1}{4 \Cgapconstant k^{5/2}}.$
Further assume that \hyps hold.
Then with probability at least $1 - 1/n^2$ over the choice of $\Omega_t$ and the randomness in \SI, one of the following statements must hold:
\begin{enumerate}
\item[(a)] Algorithm \ref{alg:deflate} terminates at line \ref{line:done}, and returns $\Xprev,\Yprev$ so that
$\opnorm{A - \Xprev \Yprev^T} \leq C\eps\opnorm{M} $; or
\item[(b)] Algorithm \ref{alg:deflate} does not terminate at line \ref{line:done}, 
and the following conditions hold:
\begin{itemize}
\item The error level $\eps$ has not yet been reached:
\begin{equation}
\label{eq:assumeeps1}
\eps\opnorm{M} \leq \sigma_{r_{t-1}+1}\mper
\end{equation}
\item The index $r_t$ recovered obeys
\begin{equation}\label{eq:gaps}
	\frac{ \sigma_{r_t+1} }{\sigma_{r_t}} \leq 1 - \gamma 
\qquad \text{and} \qquad 
	\frac{ \sigma_{r_{t-1} + 1} }{ \sigma_{r_t}} \leq e \mper
\end{equation}
\item
The matrix $W_t$ has orthonormal columns, and satisfies
\begin{equation}\label{eq:muWt}
 \sin \theta( {W_t}, U^{(\leq t)} ) \leq \frac{1}{k} \qquad
\text{and} \qquad
\max_i \twonorm{ e_i^\trans W_t} \leq \sqrt{ \frac{k\mu_t }{n}} ,
\end{equation}
where $\mu_t$ is defined as in \eqref{h3}.
\item
The estimate $s_t$ satisfies \eqref{eq:s}.
\end{itemize}
\end{enumerate}
\end{lemma}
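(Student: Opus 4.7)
\medskip

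\textbf{Proof proposal for Lemma \ref{lem:correlate}.}
The plan is to reduce everything to a spectral-norm comparison between the truncated subsampled matrix $T_t$ and the residual $A - M^{(<t)}$, and then pipe that comparison through the standard analysis of Subspace Iteration. First I would bound $\opnorm{T_t - (A - M^{(<t)})}$. Write
\[
T_t = \Truncate\bigl(P_{\Omega_t}(A) - P_{\Omega_t}(\Xprev \Yprev^\trans),\,\tau_t\bigr)
     = \Truncate\bigl(P_{\Omega_t}(A - \Mprev) + P_{\Omega_t}(\Mprev - \Xprev\Yprev^\trans),\,\tau_t\bigr).
\]
The first term is controlled by Matrix Bernstein (Lemma~\ref{lem:matrixbernstein} in the appendix) applied to $A - \Mprev = M^{(t)} + \cdots + N$ together with the incoherence of $A$ and the choice of $\tau_t$; the second is bounded using (H2) and the fact that $X_{t-1}Y_{t-1}^\trans$ is entrywise small by the incoherence bound on $\Xprev$ (H3) and analogous control of $\Yprev$. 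The truncation step only reduces the spectral deviation since the truncation threshold $\tau_t \sim \mu^*(ks_{t-1}+\Delta)/(np_t)$ is chosen to just exceed the row-norm-based bound on $|[A-\Mprev]_{ij}|$, so at most an $O(\sigma_{r_{t-1}+1}/n^{\Theta(1)})$ spectral-norm blow-up is introduced by truncating out the few very large (atypical) entries of the subsampled matrix. The lemma's hypothesis on $p_t$ is calibrated precisely so that the resulting error is $\leq \eps_0\,\sigma_{r_{t-1}+1}$ for the $\eps_0 = 1/(4\Cgapconstant k^{5/2})$ appearing in the statement.

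Next, standard Subspace Iteration analysis (which we defer to the \SI section but use here as a black box) gives $\tilde\sigma_i \approx \sigma_i(T_t)$ to relative accuracy $1/\poly(k)$ after $L \geq C k^{7/2}\log(n)$ iterations, and gives $\tilde U_t$ approximating the leading $k-r_{t-1}$ left singular vectors of $T_t$. Combining with the spectral-norm bound above and Weyl's inequality, each $\tilde\sigma_i$ is within $O(\eps_0\sigma_{r_{t-1}+1})$ of the $(r_{t-1}+i)$-th singular value of $A$. Now case on the termination test. If $\tilde\sigma_1 < 10\eps s_0$, then by (H4) and the above, $\sigma_{r_{t-1}+1} \lesssim \eps\sigma_1 \lesssim \eps\opnorm{M}$, so $\opnorm{A - \Mprev} \leq \sigma_{r_{t-1}+1} + \opnorm{N} \lesssim \eps\opnorm{M} + \opnorm{N}$, and combining with (H2) yields case~(a). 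Otherwise $\tilde\sigma_1 \geq 10\eps s_0$, which gives \eqref{eq:assumeeps1}, and we proceed.

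For the gap-finding step, the algorithm halts the search at the first relative drop by a factor $1-\frac{1}{4k}$ among $\tilde\sigma_1,\ldots,\tilde\sigma_{k-r_{t-1}}$. If no such drop occurred among the first $d_t -1$ indices then their product of consecutive ratios exceeds $(1-\frac{1}{4k})^{k-1} \geq e^{-1/4}$, giving $\tilde\sigma_{d_t} \geq e^{-1/4}(1-\frac{1}{4k})\tilde\sigma_1$, which by the singular-value accuracy above yields $\sigma_{r_{t-1}+1}/\sigma_{r_t} \leq e$ (the second inequality in \eqref{eq:gaps}). For the first inequality, the detected ratio in $\tilde T_t$ transfers (via the $\eps_0 \sigma_{r_{t-1}+1} \ll \sigma_{r_t}/k^2$ precision from the previous step) to $\sigma_{r_t+1}/\sigma_{r_t} \leq 1 - \frac{1}{4k}+ o(1/k)$, hence $\gamma_{r_t} \geq \frac{1}{4k}$ and by definition of $\gamma$ in \eqref{eq:gamma} we get $\gamma_{r_t} \geq \gamma$, i.e., $\sigma_{r_t+1}/\sigma_{r_t} \leq 1-\gamma$. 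Finally $s_t = \tilde\sigma_{d_t}$ satisfies (H4) at level $t$ by the singular-value accuracy and $\sigma_{r_{t-1}+1}/\sigma_{r_t} \leq e$ (converting relative error from $\sigma_{r_{t-1}+1}$ to $\sigma_{r_t}$).

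The remaining piece is \eqref{eq:muWt}. The principal-angle bound follows from a Davis--Kahan style argument: the top $d_t$ left singular vectors of $T_t$ are $\tilde\sigma_{d_t}/(\tilde\sigma_{d_t}-\tilde\sigma_{d_t+1}) \cdot O(\eps_0 \sigma_{r_{t-1}+1}/\sigma_{r_t}) = O(\eps_0 k)$ close (in $\sin\theta$) to the span of $U^{(t)}$, since the gap detected has size $\geq \sigma_{r_t}/(4k)$. Combining with (H1), which says $\Xprev$ is $1/k^4$-close to $U^{(<t)}$, and orthogonalizing via $\QR$, gives $\sin\theta(W_t, U^{(\leq t)}) \leq 1/k$ for the chosen $\eps_0$. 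For the coherence bound on $W_t$, the explicit truncation of $\tilde Q_t B$ at threshold $8\sqrt{\mu^*\log(n)/n}$, together with the fact that the random orthogonal rotation $B$ spreads any concentrated mass uniformly (so that the truncation only removes $\tilde O(1/\poly(n))$ Frobenius-norm mass with high probability), ensures each row of $\bar Q_t$ has norm $\leq 16\sqrt{\mu^* k \log(n)/n}$; combining with the incoherence of $\Xprev$ from (H3) and a bound on the singular values of $[\Xprev \mid \bar Q_t]$ (which are close to $1$ because $\bar Q_t$ is nearly orthogonal to $\Xprev$ by the principal-angle calculation), the $\QR$ step preserves incoherence up to a factor $1+O(1/k)$, yielding the row-norm bound in \eqref{eq:muWt} with $\mu_t$ as defined in (H3). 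The main obstacle is the bookkeeping between the three noise sources — Bernstein sampling error in $P_{\Omega_t}$, carryover error from (H2) in $\Xprev\Yprev^\trans$, and Subspace Iteration error in $\tilde U_t$ — and ensuring that the final product-of-relative-errors is small enough to both transfer the $\frac{1}{4k}$ gap from $\tilde T_t$ to $A$ and to yield the $\sin\theta \leq 1/k$ conclusion; the chosen $\eps_0 = O(1/k^{5/2})$ is essentially the minimum precision that makes every one of these perturbation steps go through.
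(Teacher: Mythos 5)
Your high-level outline tracks the paper's proof quite closely — bound $\opnorm{T_t - \Nprev}$, push through a subspace-iteration accuracy bound, do gap-finding with the estimated singular values, apply a Wedin/Davis--Kahan argument to get the principal-angle bound, and handle coherence of $W_t$ via the random rotation and truncation of $\tilde{Q}_t$ together with Gram--Schmidt. However, the very first step — the truncation analysis behind $\opnorm{T_t - \Nprev} \lesssim \eps_0 \sigma_{r_{t-1}+1}$ — is where you have a genuine gap, and essentially everything downstream depends on it.

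The issue is in how you propose to control the effect of \textsc{Truncate}. You split $\NprevEst := A - \Xprev\Yprev^\trans = \Nprev + E_{t-1}$ and want to (i) Bernstein-bound each sampled piece separately and (ii) argue that truncation at $\tau_t$ ``only reduces the spectral deviation,'' blowing things up by at most $O(\sigma_{r_{t-1}+1}/n^{\Theta(1)})$ because only ``a few very large (atypical) entries'' get clipped. Neither part goes through as stated. For (i), applying Lemma \ref{lem:matrixbernstein} to $P_{\Omega_t}(\Mprev - \Xprev\Yprev^\trans)$ requires entrywise and row-norm bounds on $E_{t-1} = \Mprev - \Xprev\Yprev^\trans$, and you propose getting an $\ell_\infty$ bound on $\Xprev\Yprev^\trans$ from (H3) and ``analogous control of $\Yprev$'' — but there is no incoherence hypothesis on $\Yprev$ among (H1)--(H4), nor does (H2) give one (a spectral norm bound on $E_{t-1}$ says nothing about its row norms without incoherence). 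For (ii), $E_{t-1}$ is not a sparse matrix of a few atypical entries; it is a dense, low-rank, small-spectral-norm matrix, and truncation at $\tau_t$ can clip a constant fraction of its entries. The correct quantitative control is not a $1/n^{\Theta(1)}$ blow-up, it is a $\sqrt{k}$-type factor coming from the rank.

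The paper avoids both problems at once with two observations you did not use. First, $P_{\Omega_t}$ and \textsc{Truncate} commute after renormalization, so $T_t = P_{\Omega_t}\bigl(\Truncate(\NprevEst,\,p_t\tau_t)\bigr)$; this lets one apply Bernstein \emph{to the already-truncated matrix}, whose entries are bounded by $p_t\tau_t$ and row norms by $\sqrt{n}\,p_t\tau_t$ by construction, sidestepping the need for any $\ell_\infty$ or incoherence control of $\Yprev$. Second, since $\infnorm{\Nprev} \le p_t\tau_t$ (by \eqref{eq:incNt} and (H4)), clipping $\NprevEst = \Nprev + E_{t-1}$ at level $p_t\tau_t$ removes at most $|[E_{t-1}]_{ij}|$ from each entry, so $\fronorm{\NprevEst - \Truncate(\NprevEst,\,p_t\tau_t)} \le \fronorm{E_{t-1}} \le \sqrt{2k}\,\opnorm{E_{t-1}}$, and the rank-$2k$ structure of $E_{t-1}$ together with (H2) delivers the bound of order $\sigma_{r_{t-1}+1}/k^{5/2}$. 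You noticed the threshold exceeds $\infnorm{\Nprev}$, but you did not extract the pointwise clipping bound and did not use the rank-to-Frobenius step that makes it quantitative. Without these two ingredients the error level $\eps_0\sigma_{r_{t-1}+1}$ is not established, and the gap-finding, $\sin\theta$, and coherence computations that follow cannot be made to go through at the stated parameter choices.
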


The proof of Lemma \ref{lem:correlate} is given in Section \ref{sec:correlate}.
In the second part of \Deflate, lines \ref{line:start2} to \ref{line:altls} in Algorithm \ref{alg:deflate}, we run \AltLS, initialized with the subspace $W_t$ returned by the first part of the algorithm.  Lemma \ref{lem:parameterchasing} below shows that \AltLS improves the estimate $W_t$ to the desired accuracy, so that we may move on to the next iteration of \Deflate.

\begin{lemma}\label{lem:parameterchasing}
Assume that the conclusion~(b) of Lemma \ref{lem:correlate} holds, as well as \hyps.
There is a constant $C$ so that the following holds.
Let $\gamma^*$ be as in \eqref{eq:gammastar}.
Suppose that
\[ \mu_t \geq \frac{C}{(\gamma^*)^2 }  \inparen{ \mu^* \inparen{ k + \inparen{ \frac{ k^4 \Delta }{ \eps \sigma_1 }}^2 } + \log(n)  }  \]
and 
\[ p_t' \geq \frac{ C L_t \Smax \cdot  k^{9} \mu_t \log(n) \inparen{ k + \inparen{ \frac{ \Delta }{\eps \opnorm{M}}  }^2}}{(\gamma^*)^2 n }
\quad\text{with}\quad
L_t \geq  \frac{C}{\gamma^*} \log \inparen{\frac{  k\sigma_{r_t} }{ \sigma_{r_t+1} + \eps\opnorm{M} } }  \quad \text{and}\quad \Smax \geq C\log(n).
\]
Then after $L_t$ 
steps of \AltLS with the initial matrix $W_t$, and parameters $\mu_t,\eps$, the following hold with probability at least $1 - 1/n^2$, over the choice of $\Omega_t'$.
\begin{itemize}
\item The inductive hypothesis \eqref{h1} holds for the next round: 
\[ \forall j \leq t, \sigma_{r_j} {\sin\theta( \Xt^{(\leq j)}, U^{(\leq j)} )} \leq
\frac{\sigma_{r_t+1} + \eps\opnorm{M} }{k^{4}}.\]
\item The inductive hypothesis \eqref{h2} holds for the next round: 
\[\opnorm{ \Mt - X_t\tY_t } \leq  \frac{ \sigma_{r_t + 1} + \eps\opnorm{M} }{\Cerr
k^{3}}.\]
\item The inductive hypothesis \eqref{h3} holds for the next round:
$\mu(\Xt) \leq \mu_t.$
\end{itemize}
\end{lemma}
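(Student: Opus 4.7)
\medskip
\noindent\textbf{Proof Plan.} The plan is to invoke the convergence analysis of \AltLS from~\cite{Hardt13b} as a single black box, with the rank-$r_t$ target matrix $M^{(\leq t)}$ and noise $N_t$, initialized at $W_t$; all three inductive hypotheses will then be read off from that black-box conclusion together with standard sin-theta perturbation. The first step is to verify that the inputs to the black box are in the required basin of attraction. From conclusion~(b) of Lemma~\ref{lem:correlate}, $W_t$ is orthonormal, satisfies $\sin\theta(W_t, U^{(\leq t)}) \leq 1/k$, and has per-row norms bounded by $\sqrt{k \mu_t/n}$. The spectrum of $A$ has a gap $\sigma_{r_t+1}/\sigma_{r_t} \leq 1 - \gamma^*$ by \eqref{eq:gaps} and the definition of $\gamma^*$, so $M^{(\leq t)}$ and $N_t$ are separated by a multiplicative factor of $\gamma^*\sigma_{r_t}$. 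The lower bound on $\mu_t$ is dictated precisely by the incoherence budget that \cite{Hardt13b} demands, accounting for the $\poly(k)$ factors and the contribution $\Delta$ from the Frobenius noise; the sampling probability $p_t'$ is chosen so that, after splitting $\Omega_t'$ across $L_t$ outer iterations and $\Smax$ inner median batches, all matrix Bernstein concentration events inside \AltLS hold with probability $1 - 1/n^2$.

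Next, the Hardt13b convergence bound gives geometric contraction at rate $1 - \Omega(\gamma^*)$ per iteration down to a noise floor governed by $\fronorm{N_t} \leq \Delta$ and $\eps\opnorm{M}$. With $L_t = \Theta((1/\gamma^*)\log(k\sigma_{r_t}/(\sigma_{r_t+1}+\eps\opnorm{M})))$ iterations the spectral error $\opnorm{M^{(\leq t)} - X_t Y_t^\trans}$ is driven below $(\sigma_{r_t+1}+\eps\opnorm{M})/(\Cerr k^3)$, establishing \eqref{h2}. The accompanying sin-theta bound, together with the fact that $\sigma_{r_t}$ is comparable to $\sigma_{r_{t-1}+1}$ by \eqref{eq:gaps}, gives the $j=t$ case of \eqref{h1}: $\sigma_{r_t}\sin\theta(X_t, U^{(\leq t)}) \leq (\sigma_{r_t+1} + \eps\opnorm{M})/k^4$. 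For \eqref{h3}, the \SmoothGS orthonormalization inside each \AltLS iteration contributes at most an additive $16\sqrt{\mu^*\log n}$ and a multiplicative $(1+\Cinc/k)$ to the row norms of the iterate; tracking these through all inner iterations, and using the choice of $\mu_t$ on line~\ref{line:start2} along with the composition with the previous bound from \eqref{h3} at step $t-1$, yields $\mu(X_t) \leq \mu_t$.

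The main obstacle is \eqref{h1} for $j < t$: the \AltLS black box delivers only a sin-theta bound against the full subspace $U^{(\leq t)}$, whereas \eqref{h1} requires that the first $r_j$ columns of $X_t$ remain aligned with $U^{(\leq j)}$ at the finer scale $(\sigma_{r_{t}+1}+\eps\opnorm{M})/k^4$. To handle this, I would exploit two structural facts. First, the initialization $W_t = [X_{t-1}\mid \bar Q_t]$ is already block-aligned: $X_{t-1}$ satisfies \eqref{h1} at the previous scale by the inductive hypothesis, and $\bar Q_t$ lies close to $U^{(t)}$ by Lemma~\ref{lem:correlate}(b). Second, by the definition of $\gamma$ in \eqref{eq:gamma}, for every $r_j$ realized by \Deflate we have a genuine spectral gap $\sigma_{r_j} - \sigma_{r_j+1} \geq \gamma \sigma_{r_j} \geq \gamma^* \sigma_{r_j}$ in $A$ itself. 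Feeding the spectral-norm bound \eqref{h2} together with this gap into Wedin's sin-theta theorem at the partition $\{1,\ldots,r_j\}$ versus $\{r_j+1,\ldots,n\}$ lets us upgrade the monolithic convergence of \AltLS to the per-block principal-angle bounds, completing the induction.
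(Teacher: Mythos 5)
There is a genuine gap in the proposal at the step you yourself flag as the main obstacle: establishing \eqref{h1} for $j < t$. The route you suggest---first extract the monolithic bound $\opnorm{M^{(\leq t)} - X_t Y_t^\trans}\leq (\sigma_{r_t+1}+\eps\opnorm{M})/(\Cerr k^3)$, then apply Wedin with the gap $\gamma_{r_j}\sigma_{r_j}$---fails quantitatively. Wedin delivers $\sin\theta \lesssim \delta/(\gamma_{r_j}\sigma_{r_j})$, i.e.\ $\sigma_{r_j}\sin\theta\lesssim \delta/\gamma_{r_j}$; since $\gamma_{r_j}$ can be as small as $\gamma^*$ (itself as small as $1/4k$), the best you can conclude is $\sigma_{r_j}\sin\theta \lesssim (\sigma_{r_t+1}+\eps\opnorm{M})/(\Cerr k^3\gamma^*)$, which falls short of the target $(\sigma_{r_t+1}+\eps\opnorm{M})/k^4$ by a factor of order $k/(\Cerr\gamma^*)$. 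No constant $\Cerr$ absorbs this. There is also a structural issue: Wedin compares $U^{(\leq j)}$ to the \emph{leading $r_j$ singular vectors of $X_t Y_t^\trans$}, which is not a priori the same subspace as $\mathcal{R}(X_t^{(\leq j)})$; $X_t Y_t^\trans$ is a nonsymmetric product with no built-in alignment between the leading prefix of the orthonormal factor $X_t$ and its own top singular directions.

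The paper instead maintains the per-block tangent bounds \eqref{i1} as an \emph{inner} inductive hypothesis that is re-established at every step $\ell$ of \AltLS, by invoking the block-wise noisy-power-iteration lemma (Theorem~\ref{thm:induct}, i.e.\ \cite[Lemma~3.4]{Hardt13b}) once for each $j\leq t$. The crucial arithmetic point is that the noise level is bounded \emph{proportional to $\gamma^*$}: $\opnorm{\tilde{G}_\ell}\lesssim \gamma^*\nu_{\ell-1}$ (Lemmas~\ref{thm:noisealtmin} and~\ref{thm:noisealtminH}), so the per-block noise floor $8\opnorm{\tilde{G}_\ell}/(\sigma_{r_j}\gamma_{r_j})$ remains of order $\nu_{\ell-1}/\sigma_{r_j}$ after dividing by $\gamma_{r_j}\geq\gamma^*$. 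This cancellation is what protects the $1/\gamma^*$ factor that your Wedin post-processing would pay. It also relies on the fact that Gram--Schmidt inside each iteration preserves prefix column spans, so the $j$-th block of $R_\ell$ genuinely tracks $U^{(\leq j)}$; this is an iteration-level structural property you cannot recover from the final product $X_t Y_t^\trans$ alone. If you want to salvage your plan, the right fix is not Wedin at the end, but importing the block-wise per-iteration invariant directly from \cite{Hardt13b} and carrying it through all $L_t$ steps.
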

The proof of Lemma \ref{lem:parameterchasing} is addressed in Section \ref{sec:proofofmainlemma}.  

\subsection{Putting it together}\label{sec:puttingittogether}
Theorem \ref{thm:mainthm} now follows using
\ref{lem:correlate} and \ref{lem:parameterchasing}.
First, we choose $\mu_0$ as in the statement of Theorem \ref{thm:mainthm}.
Because $\mu_t \geq \mu_0$ for all $t =1,\ldots, T$,  this implies that $\mu_t$ satisfies the requirements of Lemma \ref{lem:parameterchasing}.
Then, the hypotheses of Lemma \ref{lem:parameterchasing} are implied by the conclusions of the favorable case of Lemma \ref{lem:correlate}.  
Now, a union bound over at most $k$ epochs of \Deflate ensures that with probability at least $1 - \nicefrac{2k}{n^2} \geq 1 - 1/n$, the conclusions of both lemmas hold every round that their hypotheses hold.

If \Deflate terminates with the guarantees (a) of Lemma \ref{lem:correlate},
then $\opnorm{ A - X_T \tY_T } \leq C\eps\opnorm{M}.$
On the other hand, if (b) holds, then Lemma \ref{lem:correlate} implies \eqref{eq:s} and the hypotheses of Lemma \ref{lem:parameterchasing}, and then
Lemma \ref{lem:parameterchasing} implies that with probability $1 - 1/n^2$, the remaining inductive hypotheses \eqref{h1}, \eqref{h2}, and \eqref{h3} for the next round. 

Thus, if the situation (a) above never occurs, then the hypotheses of Lemma \ref{lem:parameterchasing} hold until \Deflate
 terminates because $r_t = k$.  In this case, Lemma \ref{lem:parameterchasing} implies that
\[
\opnorm{A - X_T \tY_T } \leq \opnorm{ M_T - X_T \tY_T} + \opnorm{N_T} 
\leq \frac{ \sigma_{k + 1} + \eps\opnorm{M} }{ \Cerr k^{3} } + \sigma_{k+1} \mper
\]
In either case,
\[ \opnorm{ A - X_t  \tY_T } \leq \opnorm{N}\inparen{1 + \frac{1}{\Cerr k^{3}} } + C\eps\opnorm{M}.\]

Finally, we tally up the number of samples. 
The base case \eqref{eq:p0} required
\[ p_0 \gtrsim \frac{ \mu^*  \log(n)\inparen{ \sqrt{k} +  \frac{\Delta}{\opnorm{M}} }^2 } {n}. \]
Lemma \ref{lem:correlate} required
\[ p_t \gtrsim  \frac{ (\mu^*)^2 k^5 \log(n) \inparen{ k^2 + \inparen{ \frac{ \Delta}{\eps\opnorm{M}}}^2 }}{n } . \]
For Lemma \ref{lem:parameterchasing}, we required, for a sufficiently large constant $C$,
\[ p_t' \gtrsim \frac{ CL_t \Smax \cdot k^{9} \mu_t \log(n) \inparen{ k + \inparen{ \frac{ \Delta }{\eps\opnorm{M}}  }^2}}{(\gamma^*)^2 n }
\quad\text{with}\quad
L_t \geq \frac{C}{  \gamma^* } \log \inparen{ \frac{  k\sigma_{r_t} }{ \sigma_{r_t+1} + \eps\opnorm{M} } }
\quad \text{and}\quad \Smax\geq C\log(n)
. \]
From the definition of $\mu_t$ (in \eqref{h3}), we may bound $\mu_t$ for all $t \leq k$ by
\[ \mu_t \leq C\mu_0 + k\log(n)\mu^* \]
for some constant $C$.
Summing over $t$ gives the result.

\section{Proof of Lemma \ref{lem:correlate}}\label{sec:correlate}
In this section, we prove Lemma \ref{lem:correlate}, which shows that either Algorithm \ref{alg:deflate} hits the
precision parameter $\eps$ and returns, or else produces an estimate $W_t$ for $U^{(\leq t)}$ that is close enough to run \AltLS on.
There are several rounds of approximations between the beginning of iteration $t$ and the output $W_t$.  For the reader's convenience, we include an informal synopsis of the notation in Figure \ref{fig:part1}.
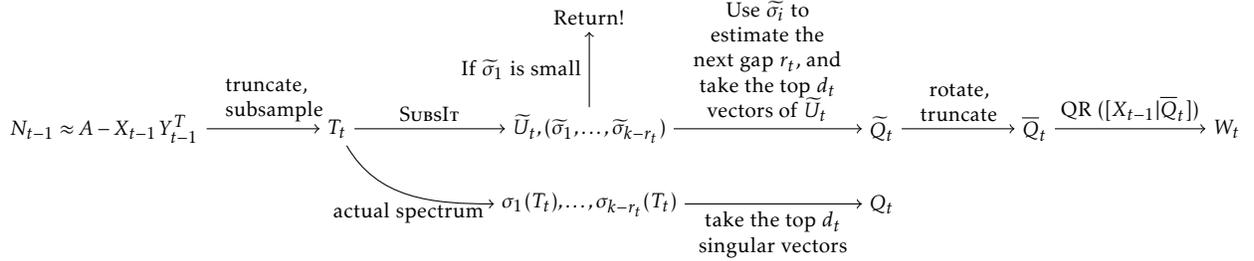
\begin{figure}[h]
\footnotesize
\begin{center}
\begin{tikzpicture}
\node(Nprev) {$\Nprev \approx A - \Xprev \Yprev^\trans$};
\node[right=1.5cm of Nprev](Tt) {$T_t$};
\node[right=2cm of Tt](eigstuff) {$\widetilde{U}_t,(\tilde{\sigma}_1, \ldots, \tilde{\sigma}_{k-r_t})$};
\node[below of=eigstuff](eigstuff2) {$\sigma_1(T_t), \ldots, \sigma_{k-r_t}(T_t)$};
\node[right=2.5cm of eigstuff](Qt) {$\tilde{Q}_t$};
\node[ below of=Qt](Qt2) {$Q_t$};
\node[right=1.5cm of Qt](barQt) {$\overline{Q}_t$};
\node[right=2cm of barQt](Pt) {$W_t$};
\node[above=1cm of eigstuff](done) {Return!};

\draw[->] (eigstuff) to node[left]{If $\tilde{\sigma}_1$ is small} (done);
\draw[->] (Nprev) to node[above]{ \begin{minipage}{1cm} \begin{center}truncate, \\ subsample\end{center} \end{minipage} } (Tt);
\draw[->] (Tt) to node[above]{\SI } (eigstuff);
\draw[->] (eigstuff) to node[above]{ \begin{minipage}{2cm} \begin{center} Use $\tilde{\sigma_i}$ to estimate the next gap $r_t$, and take the top $d_t$ vectors of $\tilde{U}_t$ \end{center} \end{minipage} } (Qt);
\draw[->] (Qt) to node[above]{\begin{minipage}{1cm} \begin{center} rotate, \\ truncate \end{center}\end{minipage}} (barQt);
\draw[->] (barQt) to node[above]{\GS$([\Xprev|\bar{Q}_t])$} (Pt);
\draw[->] (Tt) to[out=300,in=180] node[below]{actual spectrum} (eigstuff2);
\draw[->] (eigstuff2) to node[below]{ \begin{minipage}{2cm} \begin{center} take the top $d_t$ singular vectors \end{center} \end{minipage} } (Qt2);
\end{tikzpicture}
\end{center}
\normalsize
\caption{Schematic of the first part of \Deflate.  The top line indicates how $W_t$ is formed from the matrix $T_t$.  We will show that $\bar{Q}_t$ approximates $U^{(t)}$, the next chunk of singular vectors in $\Nprev$, and this will imply by induction that $W_t$ approximates $U^{(\leq t)}$.  The second line in the figure  indicates some notation which will be useful for our analysis, but which is not used by the algorithm.}\label{fig:part1}
\end{figure}
We will first argue that the matrix $N_{t-1}$ is close to the truncated, subsampled, noisy estimate $T_t$. 
\begin{lemma}\label{lem:truncate1}
Let $T_t$ be as in Algorithm \ref{alg:deflate}, and choose any constant $\Cgapconstant > 0$.   
Suppose that the inductive hypotheses \eqref{h2} and \eqref{eq:s} hold.
Suppose that $p_t$ is as in the statement of Lemma \ref{lem:correlate}.  
Then, for a sufficiently large choice of $\Cerr$ in the hypothesis \eqref{h2} (depending only on $\Cgapconstant$),
with probability at least $1 - 1/n^2$,
\[ \opnorm{T_t - \Nprev} \leq \frac{\sigma_{r_{t-1}+1} + \eps\opnorm{M}}{2\Cgapconstant k^{5/2}}. \]
\end{lemma}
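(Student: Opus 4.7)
The plan is to bound $\opnorm{T_t - \Nprev}$ by a triangle-inequality split into (i) a subsampling fluctuation controlled by matrix Bernstein, and (ii) a deterministic approximation error controlled by the inductive hypothesis \eqref{h2}. Because entrywise truncation and normalized subsampling commute, $T_t$ can be rewritten as $T_t = P_{\Omega_t}(\bar{B})$, where $B := A - \Xprev \Yprev^\trans$ and $\bar{B}$ is $B$ capped entrywise at $p_t \tau_t = \mu^*(2ks_{t-1}+\Delta)/n$. Writing $B = \Nprev - E$ with $E := \Xprev \Yprev^\trans - \Mprev$ (so $\opnorm{E}$ is small by \eqref{h2} and $\mathrm{rank}(E) \le 2k$), I would decompose
\[
\opnorm{T_t - \Nprev} \;\le\; \opnorm{P_{\Omega_t}(\bar B) - \bar B} + \opnorm{\bar B - \Nprev}.
\]

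For the deterministic term, I would first check that $\Nprev$ already satisfies the threshold bound $|\Nprev_{ij}| \le p_t \tau_t$, using the incoherence of the top singular vectors of $M$ (via $\mustar$), the incoherence of $N$ (via $\mustar$ and $\Delta \ge \fronorm{N}$), and \eqref{eq:s}. A short case analysis on the sign and magnitude of $B_{ij}$ then gives the pointwise bound $|\bar B_{ij} - \Nprev_{ij}| \le |E_{ij}|$. Passing to Frobenius (which dominates the operator norm) and using $\fronorm{E} \le \sqrt{2k}\,\opnorm{E}$ together with \eqref{h2}, this yields $\opnorm{\bar B - \Nprev} \lesssim (\sigma_{r_{t-1}+1}+\eps\opnorm{M})/(\Cerr k^{5/2})$, which is at most half the target provided $\Cerr$ is chosen large compared to $\Cgapconstant$.

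For the random term, I would apply a standard matrix Bernstein inequality to the centered sum $P_{\Omega_t}(\bar B) - \bar B = \sum_{i,j}\inparen{\tfrac{\mathbf{1}[(i,j)\in\Omega_t]}{p_t} - 1}\bar B_{ij}\,e_ie_j^\trans$. The per-summand spectral norm is at most $\tau_t$, and the matrix variance is bounded using the crude row estimate $\twonorm{\bar B_{i,:}}^2 \le n(p_t\tau_t)^2$ (and symmetrically for columns). Substituting $\tau_t = \mu^*(2ks_{t-1}+\Delta)/(np_t)$, bounding $s_{t-1} \le 2\sigma_{r_{t-1}+1}$ via \eqref{eq:s}, and invoking the hypothesized lower bound on $p_t$ (with $\eps_0 \le 1/(4\Cgapconstant k^{5/2})$), the Bernstein bound becomes $\le \frac{\sigma_{r_{t-1}+1}+\eps\opnorm{M}}{4\Cgapconstant k^{5/2}}$ with probability at least $1 - 1/n^2$.

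The main obstacle is establishing the pointwise domination $|\bar B_{ij} - \Nprev_{ij}| \le |E_{ij}|$. A more naive decomposition would need a row- or entrywise bound on $E$, which the inductive hypothesis simply does not supply (we only control $\opnorm{E}$). The key insight is that the threshold $p_t\tau_t$ is tuned precisely to the natural entrywise scale of $\Nprev$: any entry of $B = \Nprev - E$ that gets clipped must already have $|E_{ij}|$ exceeding that scale, so clipping can only move the entry closer to $\Nprev_{ij}$. This is what lets us cash in the operator-norm bound on $E$ (via Frobenius, paying only $\sqrt{k}$) instead of needing entrywise control, and is the whole reason $\tau_t$ is defined in terms of $s_{t-1}$ rather than $s_0$.
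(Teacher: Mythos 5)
Your proposal is correct and follows essentially the same route as the paper: commute entrywise truncation with normalized subsampling, show that the threshold $p_t\tau_t$ strictly dominates $|\Nprev_{ij}|$ via incoherence and \eqref{eq:s}, deduce a pointwise domination of the truncation error by $|E_{ij}|$ so that the low rank of $E$ gives a $\sqrt{2k}$ Frobenius-versus-operator loss, and then handle the subsampling fluctuation with matrix Bernstein. The one small departure is cosmetic: you bound $\opnorm{\bar B - \Nprev}$ directly by $\fronorm{E}$ using the pointwise domination, whereas the paper instead proves $|B_{ij}-\bar B_{ij}|\le|E_{ij}|$ and then splits $\opnorm{\Nprev-T_t}$ into three terms (adding $\opnorm{\Nprev-B}=\opnorm{E}$ separately); both yield the same order, yours being marginally tighter.
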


\begin{proof}
Write
\[ A - \Xprev \Yprev^\trans = \Nprev + \inparen{ \Mprev - \Xprev \Yprev^\trans} =: \Nprev + E_{t-1} =: \NprevEst. \]
Let $\Truncate$ denote the \textsc{Truncate} operator.
As in Algorithm \ref{alg:deflate}, consider 
\[T_t = \Truncate( P_{\Omega_t}(\NprevEst), \tau_t) = P_{\Omega_t}\inparen{ \Truncate (\NprevEst, p_t\tau_t) },\]
where as in Line \ref{line:start},
$\tau_t = \frac{ \mu^* }{np_t} \inparen{ 2k s_{t-1} + \Delta}.$
Above, use used that the sampling operation $P_{\Omega_t}$ and the truncate operator $\Truncate$ commute after adjusting for the normalization factor $p_t^{-1}$ in the definition of $P_{\Omega_t}$.
Because $\Nprev$ is incoherent, each of its entries is small.  More precisely, by the incoherence implication \eqref{eq:incNt} along with the guarantee \eqref{eq:s} on $s_{t-1}$, we have
\[ \infnorm{\Nprev} \leq \frac{\mu^*}{n} \inparen{ k\sigma_{r_{t-1}+1} + \Delta } \leq \frac{ \mu^*}{n} \inparen{ 2ks_{t-1} + \Delta } = p_t \tau_t. \]
Thus, each entry of $\NprevEst = \Nprev + E_{t-1}$ is the sum of something smaller than $p_t\tau_t$ from $\Nprev$, and an error term from $E_{t-1}$, and so truncating entrywise to $p_t\tau_t$ can only remove mass from the contribution of $E_{t-1}$.   
This implies that for all $i,j$, 
\[ \inabs{ \NprevEst -  \Truncate( \NprevEst, p_t\tau_t ) }_{i,j} \leq \inabs{ E_{t-1} }_{i,j}, \]
and so using \eqref{h2},
\begin{equation}\label{eq:truncate}
 \fronorm{ \NprevEst - \Truncate( \NprevEst, p_t\tau_t) } \leq \fronorm{E_{t-1}} \leq \sqrt{2k} \frac{ (\sigma_{r_{t-1}+1} + \eps\opnorm{M}) }{\Cerr k^{3} } =\frac{ \sqrt{2} ( \sigma_{r_{t-1}+1} + \eps\opnorm{M} ) }{\Cerr k^{5/2}}.
\end{equation}
Above, we used the fact that $E_{t-1} = M^{(<t)} - \Xprev \Yprev^\trans$ has rank at most $2k$, and hence $\fronorm{E_{t-1}} \leq \sqrt{2k}\opnorm{E_{t-1}}$.
Next, we bound the difference between $T_t$ and $\Truncate(\NprevEst, p_t\tau_t)$.
Lemma \ref{lem:matrixbernstein} in the appendix bounds the effect of
subsampling in operator norm. 
It implies that with probability $1 - 1/\poly(n)$ over the choice of
$\Omega_t$, we have
\begin{align*}
\opnorm{\Truncate(\NprevEst,p_t\tau_t) - T_t} &=
 \opnorm{ \Truncate(\NprevEst,p_t\tau_t) - P_\Omega( \Truncate(\NprevEst, p_t\tau_t)) }\\
&\lesssim
\sqrt{\frac{\max_i \twonorm{e_i^T \Truncate(\NprevEst, p_t\tau_t)}^2\log(n) }{p_t}}  + \frac{ \infnorm{\Truncate( \NprevEst, p_t\tau_t )} \log(n) }{p_t} \\
&\leq 
\sqrt{\frac{ n (p_t \tau_t)^2 \log(n) }{p_t}}  + \frac{ (p_t\tau_t) \log(n) }{p_t}\\
&\leq \inparen{ \sqrt{ \frac{\log(n)}{p_t n } } + \frac{ \log(n) }{p_t n } } \inparen{ \mu^*( 4k\sigma_{r_{t-1} + 1} + \Delta) },
\end{align*}
using the fact that
\[ p_t\tau_t = \frac{ \mu^* }{n} \inparen{ 2ks_{t-1} + \Delta } \leq \frac{\mu^*}{n} \inparen{ 4k \sigma_{r_{t-1}+1} + \Delta }\]
by \eqref{eq:s}.  Thus, our choice of $p_t$ implies that
\begin{equation}\label{eq:sparsify}
\opnorm{\Truncate(\NprevEst,p\tau_t) - T_t} \leq \eps_0  \inparen{ \sigma_{r_{t-1}+1} + \eps\opnorm{M}}.
\end{equation}
Together with \eqref{eq:truncate} we conclude that
\[ \opnorm{\Nprev - T_t} \leq 
\opnorm{ \Nprev - \NprevEst} +
\opnorm{  \NprevEst - \Truncate( \NprevEst, p\tau_t) } + 
\opnorm{ \Truncate(\NprevEst,p\tau_t) - T_t} \leq 
\eps_0 \inparen{\sigma_{r_{t-1}+1} +\eps\opnorm{M}} + \frac{ 2\sqrt{2}(\sigma_{r_{t-1}+1} + \eps\opnorm{M}) }{\Cerr k^{5/2} }  .\]
The choice of $\eps_0$ and a sufficient choice of $\Cerr$ (depending only on $\Cgapconstant$) completes the proof.

\end{proof}

Suppose for the rest of the proof that the conclusion of Lemma
\ref{lem:truncate1} holds. 
The first thing \Deflate  does after computing $T_t$ is to
obtain estimates $\widetilde{U}_t$ and $\tilde{\sigma}_1, \ldots,
\tilde{\sigma}_{k-r_t}$ for the top singular values and vectors of $T_t$.
These estimates are recovered by \SI in Line \ref{line:svd} of Algorithm
\ref{alg:deflate}.  We first wish to show that the estimated singular values
are close to the actual singular values of $T_t$.  For this, we will invoke
Theorem \ref{thm:subspaceiteration} in the appendix, which implies that as
long as the number of iterations $L$ of \SI satisfies 
\[ L \geq  Ck^{7/2}\log(n), \]
for a sufficiently large constant $C$, then
with probability $1-1/\poly(n),$ we have
\begin{equation}\label{eq:svdapprox}
 |\sigma_j(T_t) - \tilde{\sigma}_j | \leq \frac{ \opnorm{T_t} }{2\Cgapconstant k^{5/2} }  \qquad \text{for all $j$}.
\end{equation} 
Above, we took a union bound over all $j$.
Again, we condition on this event occuring. 
Thus, with our choice of $L$, the estimates $\tilde{\sigma}_j$
are indeed close to the singular values $\sigma_j(T_t)$, which by Lemma
\ref{lem:truncate1} are with high probability close to the singular values
$\sigma_{r_{t-1} + j}$ of $\Nprev$ itself.

Before we consider the next step (to $\tilde{Q}_t$) in Figure \ref{fig:part1}, consider the case when Algorithm \ref{alg:deflate} returns at line \ref{line:done}.  
Then $\tilde{\sigma}_1 \leq 10 \eps s_0 \leq 20 \eps \sigma_1$, 
and so using \eqref{eq:svdapprox} above we find that
$\opnorm{T_t} \leq 21 \eps\sigma_1.$
Then  by Lemma \ref{lem:truncate1},
\[ \sigma_{r_{t-1}+1} = \opnorm{\Nprev} \leq \opnorm{T_t} + \opnorm{\Nprev - T_t} \leq  21 \eps\sigma_1 + \frac{ \sigma_{r_{t-1}+1} + \eps\sigma_1}{2\Cgapconstant k^{5/2}} .\]
Thus, for sufficiently large $\Cgapconstant$, we conclude $\sigma_{r_{t-1}+1} \leq 22 \eps\sigma_1$.
In this case, we are done:
\begin{align*}
\opnorm{ A - \Xprev \Yprev^\trans} &\leq \opnorm{ \Mprev - \Xprev \Yprev^\trans} + \opnorm{ \Nprev }\\
&\leq \frac{ \sigma_{r_{t-1}+1} + \eps\opnorm{M} }{\Cerr k^{3} } + \sigma_{r_{t-1}+1}\\
&\leq 23\eps \sigma_1.
\end{align*}
and case (a) of the conclusion holds, as long as Lemma \ref{lem:truncate1} does.

On the other hand, suppose that Algorithm \ref{alg:deflate} does not return at line \ref{line:done} (and continue to assume that Lemma \ref{lem:truncate1} holds).  As above, \eqref{eq:svdapprox} implies that since $\widetilde{\sigma}_1 \geq 10\eps$, we must have 
\[\opnorm{T_t} \geq \frac{5\eps \sigma_1} {1 - \frac{1}{2\Cgapconstant k^{5/2} }}.\]
Then by Lemma \ref{lem:truncate1}, 
\[ \sigma_{r_{t-1} + 1} \geq  \frac{ 5\eps \sigma_1 }{1 - \frac{1}{2\Cgapconstant k^{5/2} } } - \frac{ \sigma_{r_{t-1}+1} + \eps\sigma_1 }{2\Cgapconstant k^{5/2}},\]
which implies that
\begin{equation}\label{eq:assumeeps}
\eps\sigma_1 <  \sigma_{r_{t-1} + 1}.
\end{equation}
This
establishes the conclusion \eqref{eq:assumeeps1}.
With \eqref{eq:assumeeps}, Lemma \ref{lem:truncate1} and \eqref{eq:svdapprox} together imply that
\begin{equation}\label{eq:close}
\forall r\leq n, \qquad | \sigma_r - \tilde{\sigma}_{r-r_{t-1}} | \leq \opnorm{ \Nprev - T_t } 
+ |\sigma_{r - r_{t-1}}(T_t) - \tilde{\sigma}_{r - r_{t-1}}|
\leq \frac{ \sigma_{r_{t-1} + 1}}{\Cgapconstant k^{5/2}}.
\end{equation}
Above, we use Lemma \ref{lem:weyl} in the appendix in the first inequality.

We now show that the choice of $d_t$ in Line \ref{eq:dt} of Algorithm \ref{alg:deflate}  accurately identifies a ``gap" in the spectrum.
\begin{lemma}\label{lem:gaps}
Suppose that the hypotheses and conclusions of Lemma \ref{lem:truncate1} hold, and in particular that \eqref{eq:close} holds.
Then the value $r_t = r_{t-1} + d_t$ obtained in Line \ref{eq:dt} of Algorithm \ref{alg:deflate} satisfies:
\[
	\frac{ \sigma_{r_t+1} }{\sigma_{r_t}} \leq 1 - \gamma 
\qquad \text{and} \qquad 
	\frac{ \sigma_{r_{t-1} + 1} }{ \sigma_{r_t}} \leq e.
\]
\end{lemma}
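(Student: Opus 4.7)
Plan: The entire argument rests on \eqref{eq:close}, which says that $|\sigma_r - \tilde\sigma_{r - r_{t-1}}| \leq \sigma_{r_{t-1}+1}/(\Cgapconstant k^{5/2})$ for all relevant $r$. The threshold $1 - 1/(4k)$ used in Line \ref{eq:dt} must survive this additive error, and this is the one place where we really use that $\Cgapconstant$ can be taken as large as we like (paid for only logarithmically in the sampling rate, via Lemma \ref{lem:truncate1}).

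I would first dispatch the upper bound $\sigma_{r_{t-1}+1}/\sigma_{r_t} \leq e$. By minimality of $d_t$, the ratio $\tilde\sigma_{j+1}/\tilde\sigma_j$ strictly exceeds $1 - 1/(4k)$ for every $j < d_t$, so telescoping gives
\[ \frac{\tilde\sigma_1}{\tilde\sigma_{d_t}} \;\leq\; \left(\frac{4k}{4k-1}\right)^{d_t - 1} \;\leq\; \left(\frac{4k}{4k-1}\right)^{k-1} \;\leq\; e^{1/3}, \]
using $(1+x)^n \leq e^{nx}$. Applying \eqref{eq:close} to both $\tilde\sigma_1$ and $\tilde\sigma_{d_t}$ and noting that the additive error $\sigma_{r_{t-1}+1}/(\Cgapconstant k^{5/2})$ is negligible both compared to $\sigma_{r_{t-1}+1}$ itself and to $\tilde\sigma_{d_t} \geq e^{-1/3}\tilde\sigma_1$ (both of which are within a constant of $\sigma_{r_{t-1}+1}$) yields $\sigma_{r_{t-1}+1}/\sigma_{r_t} \leq e^{1/3}(1+o(1)) \leq e$ for $\Cgapconstant$ sufficiently large.

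Then I would handle the gap bound by splitting on the two cases in the definition of $d_t$. If the first set in Line \ref{eq:dt} is nonempty, then by construction $\tilde\sigma_{d_t+1} \leq (1 - 1/(4k))\tilde\sigma_{d_t}$; combining this with \eqref{eq:close} applied to both indices and using $\sigma_{r_{t-1}+1} \leq e\sigma_{r_t}$ (just proved) to express the additive errors in units of $\sigma_{r_t}$ gives
\[ \frac{\sigma_{r_t+1}}{\sigma_{r_t}} \;\leq\; 1 - \frac{1}{4k} + \frac{O(1)}{\Cgapconstant k^{5/2}}, \]
which for large enough $\Cgapconstant$ is at most $1 - 1/(4k)$. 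Hence $\gamma_{r_t} \geq 1/(4k)$, so $r_t$ belongs to the set defining $\gamma$ in \eqref{eq:gamma} and therefore $\gamma_{r_t} \geq \gamma$, which is exactly the first inequality. In the remaining edge case $d_t = k - r_{t-1}$, we have $r_t = k$: the telescoping above still supplies the bound $\sigma_{r_{t-1}+1}/\sigma_k \leq e$, and the first inequality reduces to a statement about the final gap $\gamma_k$, which is folded into the definition of $\gamma^*$ in \eqref{eq:gammastar} used downstream in Lemma \ref{lem:parameterchasing}.

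The only obstacle is the constant-chasing: the slack between the algorithmic threshold $1/(4k)$ and the approximation error $\Theta(1/k^{5/2})$ is thin but strictly positive, and it is closed by picking $\Cgapconstant$ large enough. Everything else (the telescoping inequality, and the transfer between $\tilde\sigma$ and $\sigma$ via \eqref{eq:close}) is routine bookkeeping.
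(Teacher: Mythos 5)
Your route to $\sigma_{r_{t-1}+1}/\sigma_{r_t} \leq e$ is genuinely different from the paper's, and cleaner. The paper introduces an auxiliary ``ideal'' gap index $d_t^*$ using the looser threshold $1/k$, telescopes the \emph{true} ratios $\sigma_{r_{t-1}+d}/\sigma_{r_{t-1}+d+1}$ up to $r_t^*$ to get $\sigma_{r_{t-1}+1}/\sigma_{r_t^*} \leq e$, and then separately argues that $d_t \leq d_t^*$ (by showing that a true gap of order $1/k$ would force an observed gap of order $1/(2k)$, which the algorithm's $1/(4k)$ threshold would detect). You instead telescope the \emph{observed} ratios $\tilde\sigma_j/\tilde\sigma_{j+1}$ directly using the minimality in Line \ref{eq:dt}, and transfer to $\sigma$ at the end via \eqref{eq:close}, noting that both $\tilde\sigma_1$ and $\tilde\sigma_{d_t}$ are within constant factor of $\sigma_{r_{t-1}+1}$ so the additive error is $O(1/(\Cgapconstant k^{5/2}))$ relatively. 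This avoids introducing $d_t^*$ and dispenses with the $d_t \leq d_t^*$ step. For the gap bound your argument is essentially the paper's, and you are right to flag the $r_t = k$ edge case: when the set in Line \ref{eq:dt} is empty, there is no observed gap and hence no lower bound on $\gamma_k$; what rescues the downstream argument is that Lemma \ref{lem:parameterchasing} runs on $\gamma^* = \min\{\gamma,\gamma_k\}$, which absorbs this case. The paper does not call this out explicitly, so noticing it is a point in your favor.

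One caveat, which applies to the paper's proof as well but which your write-up overstates: you assert that
\[
\frac{\sigma_{r_t+1}}{\sigma_{r_t}} \;\leq\; 1 - \frac{1}{4k} + \frac{O(1)}{\Cgapconstant k^{5/2}}
\]
is ``at most $1 - 1/(4k)$'' for $\Cgapconstant$ large. No finite choice of $\Cgapconstant$ makes a strictly positive additive term vanish; what you actually obtain is $\gamma_{r_t} \geq 1/(4k) - O(1/(\Cgapconstant k^{5/2}))$, which can fall marginally below the $1/(4k)$ cutoff in \eqref{eq:gamma}, so $\gamma_{r_t}$ need not lie in the set over which $\gamma$ minimizes. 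The paper elides exactly the same slack (its chain $\sigma_{r_t+1}/\sigma_{r_t} < 1 - 1/(4k) \leq 1 - \gamma$ is also not airtight, since $\gamma \geq 1/(4k)$). The clean fix is definitional: take the cutoff in \eqref{eq:gamma} to be a constant factor below the algorithm's detection threshold (e.g.\ $1/(8k)$ against $1/(4k)$), after which your argument closes verbatim and no downstream estimate changes form. This is a bookkeeping adjustment, not a flaw in your approach, but the claim ``large enough $\Cgapconstant$'' as written is not literally sufficient.
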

\begin{proof}
Let $d_t^*$ be the ``correct" choice of $d_t$; that is,
 $d_t^*$ be the smallest positive integer $d \leq k - r_{t-1}$ so that 
\[ 
	1 - \frac{ \sigma_{r_{t-1} + d +1 } }{ \sigma_{r_{t-1} + d } } \geq  1 - \frac{1}{k},
\]
or let $d_t^* = d - r_{t-1}$ if such an index does not exist.
Write $r_t^* = r_{t-1} + d_t^*$.
By definition, because $d_t^*$ is the smallest such $d$ (or smaller than any such $d$ in the case that $r_t^* = k$), we have 
\begin{equation}\label{eq:leqe}
 \frac{ \sigma_{r_{t-1}+1}}{\sigma_{r_t^*} } \leq \inparen{ 1 + \frac{1}{k} }^{d_t^*} \leq e. 
\end{equation}
Thus, \eqref{eq:close} reads
\begin{equation}\label{eq:weylbound}
 |\tilde{\sigma_j} - \sigma_{r_{t-1} + j} | \leq \frac{ \sigma_{r_{t-1}+1} }{\Cgapconstant k^{5/2}} \leq \frac{ e  \sigma_{r_t^*} }{\Cgapconstant k^{5/2}} . 
\end{equation}
Suppose that, for some $j \leq d_t^*$, we have
\[ \frac{ \sigma_{ r_{t-1} + j + 1 } }{ \sigma_{r_{t-1} + j} } \geq 1 - \frac{ 1}{4k}. \]
Then, using \eqref{eq:weylbound},
\begin{align*}
\frac{ \tilde{\sigma}_{j+1}}{\tilde{\sigma}_j} &\geq \frac{ \sigma_{ r_{t-1} + j + 1 } - \frac{ e \sigma_{r_t^*} }{ \Cgapconstant k^{5/2} } }{ \sigma_{r_{t-1} + j } + \frac{ e \sigma_{r_t^*} }{\Cgapconstant k^{5/2} } } 
\geq 
  \frac{ \sigma_{ r_{t-1} + j + 1 }\inparen{1  - \frac{ e }{ \Cgapconstant k^{5/2} } }}{ \sigma_{r_{t-1} + j }\inparen{ 1  + \frac{ e}{\Cgapconstant k^{5/2} }} } 
\geq 1 - \frac{1}{2k},
\end{align*}
assuming $\Cgapconstant$ is sufficiently large.  
In Algorithm \ref{alg:deflate}, we choose $d_t$ in Line \ref{eq:dt} so that there is no $j < d_t$ with 
\[ \frac{ \tilde{\sigma}_{j+1} }{\tilde{\sigma}_j} \leq 1 - \frac{1}{2k}. \]
Thus, if there were a big gap, the algorithm would have found it: more precisely, using the definition of $\gamma$, we have
\[ \frac{ \sigma_{r_t + 1} }{\sigma_{r_t}} < 1 - \frac{1}{4k} \leq 1 - \gamma. \]
This establishes the first conclusion of the lemma.
Now, a similar analysis as above shows that if for any $j \leq d_t^*$ we have
\[ \frac{ \sigma_{ r_{t-1} + j + 1 } }{ \sigma_{r_{t-1} + j} } \leq 1 - \frac{1}{k}, \]
then
\[ \frac{ \tilde{\sigma}_{j+1} }{\tilde{\sigma}_j } \leq 1 - \frac{1}{2k}, \]
assuming $\Cgapconstant$ is sufficiently large.
That is, our algorithm will always find a small gap, if it exists.  In particular, if $r_t^* < k$, we have
\[ \frac{ \sigma_{r_t^* + 1 } }{\sigma_{r_t^*} } \leq 1 - \frac{1}{k} \]
and hence $d_t \leq d_t^*$.  On the other hand, if $r_t^* = k$, then we must have $d_t = d_t^*$.
In either case,
$ d_t \leq d_t^*, $
and so
\[ \frac{ \sigma_{r_{t-1} + 1} }{\sigma_{r_t} } \leq \frac{ \sigma_{r_{t-1} + 1} }{\sigma_{r_t^*}} \leq \inparen{ 1 + \frac{1}{k} }^{d_t^*} \leq e. \]
This completes the proof of Lemma \ref{lem:gaps}.
\end{proof}
Now, we are in a position to verify the inductive hypothesis \eqref{eq:s} for the next round, in the favorable case that Lemma \ref{lem:truncate1} holds.  By definition,  we have
$s_t = \tilde{\sigma}_{d_t}$, and \eqref{eq:close}, followed by Lemma \ref{lem:gaps} implies that
\[ |\sigma_{r_t} - s_t| \leq \frac{ \sigma_{r_{t-1} + 1} }{\Cgapconstant k^2 } \leq \frac{ e\sigma_{r_t} }{\Cgapconstant k^{5/2}}. \]
In particular,
\[ \inparen{ 1 - \frac{2e}{\Cgapconstant k^{5/2}} } \sigma_{r_t} \leq s_t \leq \inparen{ 1 + \frac{2e}{\Cgapconstant k^{5/2}} } \sigma_{r_t}, \]
establishing \eqref{eq:s} for $s_t$.

Now that we know that the ``gap" structure of the singular values of $\Nprev$ is reflected by the estimates $\tilde{\sigma}_j$,
we will show that the top singular vectors are also well-approximated by the estimates $\widetilde{Q}_t$.
Recall from Algorithm \ref{alg:deflate} that $\tilde{Q}_t \in \R^{n \times d_t}$ denotes the first $d_t$ columns of $\widetilde{U}_t$, which are estimates of the top singular vectors of $T_t$.
Let $Q_t$ denote the (actual) top $d_t$ singular vectors of $T_t$.  
We will first show that $Q_t$ is close to $U^{(t)}$, and then that $Q_t$ is also close to $\widetilde{Q}_t$.
\begin{lemma}\label{lem:subspaces}
Suppose that the conclusions of Lemma \ref{lem:truncate1} and Lemma \ref{lem:gaps} hold, and that \eqref{eq:assumeeps} holds.
Then
\[ { \sin \theta( U^{(t)}, Q_t )} \leq \frac{ 4e}{ \Cgapconstant k^{3/2} }. \]
\end{lemma}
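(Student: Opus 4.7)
The strategy is to view $T_t$ as a perturbation of $\Nprev$ and apply Wedin's $\sin\theta$ theorem to the top $d_t$ singular subspaces. Since $\Nprev$ has spectral decomposition with singular values $\sigma_{r_{t-1}+1} \geq \cdots \geq \sigma_n$ and singular vectors $[U^{(t)} \mid U^{(t+1)} \mid \cdots]$ (equivalently $V_{t-1}$), the subspace spanned by the top $d_t$ singular vectors of $\Nprev$ is exactly $\mathcal{R}(U^{(t)})$. Meanwhile $Q_t$ is, by definition, the top $d_t$ left singular vectors of $T_t$. So Wedin gives a bound of the form
\[
\sin\theta(U^{(t)}, Q_t) \;\leq\; \frac{\opnorm{T_t - \Nprev}}{\delta},
\]
where $\delta$ is the spectral gap separating the $d_t$-th from the $(d_t+1)$-th singular values of $\Nprev$.

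First I would bound the numerator. Lemma \ref{lem:truncate1} gives $\opnorm{T_t - \Nprev} \leq (\sigma_{r_{t-1}+1} + \eps\opnorm{M})/(2\Cgapconstant k^{5/2})$, and the hypothesis \eqref{eq:assumeeps} from the preceding paragraph lets me absorb $\eps\opnorm{M} \leq \sigma_{r_{t-1}+1}$, so
\[
\opnorm{T_t - \Nprev} \;\leq\; \frac{\sigma_{r_{t-1}+1}}{\Cgapconstant k^{5/2}} \;\leq\; \frac{e\,\sigma_{r_t}}{\Cgapconstant k^{5/2}},
\]
the last step using the conclusion $\sigma_{r_{t-1}+1}/\sigma_{r_t} \leq e$ of Lemma \ref{lem:gaps}.

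Next, I would bound the denominator using the other conclusion of Lemma \ref{lem:gaps}: $\sigma_{r_t+1}/\sigma_{r_t} \leq 1 - \gamma$, so the gap between the $d_t$-th and $(d_t+1)$-th singular values of $\Nprev$ satisfies $\sigma_{r_t} - \sigma_{r_t+1} \geq \gamma\,\sigma_{r_t} \geq \sigma_{r_t}/(4k)$ by the definition \eqref{eq:gamma} of $\gamma$. Plugging the two bounds into Wedin gives
\[
\sin\theta(U^{(t)}, Q_t) \;\leq\; \frac{e\,\sigma_{r_t}/(\Cgapconstant k^{5/2})}{\sigma_{r_t}/(4k)} \;=\; \frac{4e}{\Cgapconstant k^{3/2}},
\]
as desired.

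The only real subtlety is the application of Wedin: the matrices $T_t$ and $\Nprev$ need not be symmetric (sampling plus entrywise truncation breaks symmetry in general), so I would invoke Wedin in its singular-value form rather than Davis--Kahan. The gap condition Wedin requires is that every retained singular value of $\Nprev$ (those among $\sigma_{r_{t-1}+1},\dots,\sigma_{r_t}$) is separated from every discarded singular value of $T_t$ (in particular $\sigma_{d_t+1}(T_t)$) by at least $\delta$; this is guaranteed because $\opnorm{T_t - \Nprev}$ is much smaller than $\sigma_{r_t}-\sigma_{r_t+1}$ (by Weyl, the singular values of $T_t$ interleave those of $\Nprev$ up to that perturbation), so the gap of roughly $\sigma_{r_t}/(4k)$ survives with at worst a constant-factor loss that is absorbed into a sufficiently large $\Cgapconstant$.
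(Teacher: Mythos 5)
Your proof is correct and follows essentially the same route as the paper: view $T_t$ as a perturbation of $\Nprev$, apply Wedin's $\sin\theta$ theorem with $\opnorm{T_t-\Nprev}$ (bounded by Lemma \ref{lem:truncate1}, \eqref{eq:assumeeps}, and Lemma \ref{lem:gaps}) in the numerator and the spectral gap near index $r_t$ in the denominator. The one small difference is cosmetic: the paper plugs the ``mixed'' gap $|\sigma_{d_t}(T_t)-\sigma_{r_t+1}|$ directly into Theorem \ref{thm:sintheta} and lower-bounds it using Weyl and Lemma \ref{lem:gaps}, whereas you first lower-bound the pure gap $\sigma_{r_t}-\sigma_{r_t+1}\geq\sigma_{r_t}/(4k)$ and then argue via Weyl that the perturbed gap is within a constant factor; both are valid, though to nail the exact constant $4e$ one has to track that constant-factor loss rather than wave at it being ``absorbed into $\Cgapconstant$.''
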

\begin{proof}
We will use a $\sin \theta$ theorem (Theorem \ref{thm:sintheta}, due to Wedin, in the appendix) to control the perturbation of the subspaces.
Theorem \ref{thm:sintheta} implies
\begin{align*}
 { \sin \theta(U^{(t)}, Q_t ) }
&\leq \frac{ \opnorm{ T_t - \Nprev } }{ \inabs{\sigma_{d_t}(T_t) - \sigma_{r_t + 1}}} 
\qquad \text{Theorem \ref{thm:sintheta}} \\
&\leq \frac{ 2e\sigma_{r_t} }{ \Cgapconstant k^{5/2} \inabs{\sigma_{d_t}(T_t) - \sigma_{r_t + 1}}}
\qquad \text{By Lemmas \ref{lem:truncate1} and \ref{lem:gaps}, and \eqref{eq:assumeeps}} \\
&\leq \frac{ 2e\sigma_{r_t}}
{
	\Cgapconstant k^{5/2} \inparen{
				\sigma_{r_t} \inparen{ 1 - \frac{2e}{\Cgapconstant k^{5/2}}} - \sigma_{r_t +1} 
			}
} 
\qquad \text{By \eqref{eq:close} and Lemma \ref{lem:gaps}} \\
&\leq \frac{ e\sigma_{r_t}}
{
	\Cgapconstant k^{5/2} \inparen{
				\sigma_{r_t} \inparen{ 1 - \frac{2e}{\Cgapconstant k^{5/2}}} - \sigma_{r_t}(1 - \gamma) 
			}
} 
\qquad \text{By Lemma \ref{lem:gaps}} \\
&\leq \frac{4e}{\Cgapconstant k^{3/2}}.
\end{align*}
\end{proof}
Now, we show that $Q_t$ is close to $\tilde{Q}_t$.
\begin{lemma}\label{lem:awkwardsubspaces}
Suppose that the conclusions of Lemma \ref{lem:truncate1} and Lemma \ref{lem:gaps} hold, and that \eqref{eq:assumeeps} holds.  Then
with probability $1 - 1/n^2$,
\[ {\sin \theta( Q_t, \tilde{Q}_t )} \leq \frac{1}{\poly(n)}. \]
\end{lemma}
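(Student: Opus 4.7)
The statement is a direct application of the standard convergence analysis of subspace iteration (Theorem \ref{thm:subspaceiteration}), applied to the matrix $T_t$ with target dimension $d_t$, provided we can exhibit a large enough multiplicative gap between $\sigma_{d_t}(T_t)$ and $\sigma_{d_t+1}(T_t)$. Since \SI was in fact run with target dimension $k - r_{t-1} \ge d_t$, we simply note that the first $d_t$ columns $\tilde{Q}_t$ of $\tilde{U}_t$ form the output of subspace iteration restricted to an eigenspace whose convergence is governed by this very gap.

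First I would bound the spectral gap of $T_t$ at index $d_t$. Combining Lemma \ref{lem:gaps} (which gives $\sigma_{r_t+1}/\sigma_{r_t} \le 1 - \gamma$) with the Weyl-type perturbation bound \eqref{eq:close} and the assumption \eqref{eq:assumeeps}, we have
\[
\sigma_{d_t}(T_t) \;\ge\; \sigma_{r_t}\inparen{1 - \tfrac{e}{\Cgapconstant k^{5/2}}}
\quad\text{and}\quad
\sigma_{d_t+1}(T_t) \;\le\; \sigma_{r_t+1} + \tfrac{\sigma_{r_t}}{\Cgapconstant k^{5/2}} \;\le\; \sigma_{r_t}\inparen{1 - \gamma + \tfrac{e}{\Cgapconstant k^{5/2}}}.
\]
Choosing $\Cgapconstant$ large enough (recall $\gamma \ge 1/(4k)$), this gives the clean multiplicative gap
\[
\frac{\sigma_{d_t+1}(T_t)}{\sigma_{d_t}(T_t)} \;\le\; 1 - \tfrac{\gamma}{2}\mper
\]

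Next I would plug this gap into Theorem \ref{thm:subspaceiteration}. The standard analysis guarantees that, with probability at least $1 - 1/\poly(n)$ over the random orthonormal initialization used inside \SI, the principal angle between $\tilde{Q}_t$ and the true top-$d_t$ eigenspace $Q_t$ of $T_t$ satisfies a bound of the form
\[
\sin\theta(Q_t, \tilde{Q}_t) \;\le\; \poly(n,k)\cdot \inparen{\frac{\sigma_{d_t+1}(T_t)}{\sigma_{d_t}(T_t)}}^{L} \;\le\; \poly(n,k)\cdot \inparen{1 - \tfrac{\gamma}{2}}^{L}\mper
\]
With $L \ge C k^{7/2}\log(n)$ and $\gamma \ge 1/(4k)$, we bound
\[
\inparen{1 - \tfrac{\gamma}{2}}^{L} \;\le\; \exp\inparen{-\tfrac{\gamma L}{2}} \;\le\; \exp\inparen{-\tfrac{C k^{5/2}\log(n)}{8}} \;\le\; n^{-\Omega(k^{5/2})}\mper
\]
This absorbs the $\poly(n,k)$ prefactor and gives $\sin\theta(Q_t, \tilde{Q}_t) \le 1/\poly(n)$, as required.

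The main obstacle is really just bookkeeping: verifying that the particular version of subspace iteration used as \SI behaves correctly when we only look at its top $d_t < k - r_{t-1}$ columns, and confirming that the rate is controlled by the gap at index $d_t$ rather than at $k - r_{t-1}$. This is a standard feature of the block power method — each column's convergence is determined by the ratio $\sigma_{i+1}/\sigma_i$ at the boundary of the eigenspace one cares about — and since $d_t$ was chosen in Line \ref{eq:dt} precisely at such a boundary (Lemma \ref{lem:gaps}), the bound above applies directly. A union bound over the events inside \SI (and over the $k$ epochs of \Deflate at the outer level) preserves the $1 - 1/n^2$ probability.
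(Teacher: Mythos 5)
Your proof is correct and follows essentially the same route as the paper: establish a multiplicative gap of order $1/k$ between $\sigma_{d_t}(T_t)$ and $\sigma_{d_t+1}(T_t)$ using Lemma \ref{lem:truncate1}, Lemma \ref{lem:gaps}, and the bound \eqref{eq:close}/\eqref{eq:svdapprox}, then invoke Theorem \ref{thm:subspaceiteration} with $L \gtrsim k\log n$ to drive $\sin\theta(Q_t,\tilde Q_t)$ below $1/\poly(n)$. The only cosmetic difference is that you derive the gap by relating $\sigma_{d_t}(T_t),\sigma_{d_t+1}(T_t)$ back to $\sigma_{r_t},\sigma_{r_t+1}$ of $A$, whereas the paper compares to the estimated values $\tilde\sigma_{d_t},\tilde\sigma_{d_t+1}$ and uses the threshold built into the choice of $d_t$ on Line \ref{eq:dt}; your remark about the block power method converging per-gap (so that the top $d_t$ columns of a rank-$(k-r_{t-1})$ run are governed by the gap at $d_t$) is exactly what Theorem \ref{thm:subspaceiteration} formalizes.
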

\begin{proof}
By \eqref{eq:svdapprox}, Lemma \ref{lem:truncate1}, and Lemma \ref{lem:gaps}, a similar computation as in the proof of Lemma \ref{lem:subspaces} shows that
\begin{align*}
\frac{ \sigma_{d_{t}+1}(T_t) }{\sigma_{d_t}(T_t) } 
&\leq \inparen{ \frac{ \tilde{\sigma}_{d_t+1}}{\tilde{\sigma_{d_t}}  }}\inparen{1 + \frac{8e }{\Cgapconstant k^{5/2} } } \\
&\leq \inparen{ 1 - \frac{1}{4k}  }\inparen{1 + \frac{8e }{\Cgapconstant k^{5/2} } } \\
&\leq  1 - \frac{1}{k}  
\end{align*}
using the choice of $d_t$ in the second-to-last line.  Thus, by Theorem \ref{thm:subspaceiteration} in the appendix, and the 
choice of 
$L \gtrsim k \log(n)$ in \SI, we have with probability $1 - 1/\poly(n)$ that
\[  \sin\theta( Q_t, \tilde{Q}_t ) \leq \poly(n)\inparen{ 1 - \frac{1}{2k}}^L \leq \frac{1}{\poly(n)}. \]
\end{proof}
Together, Lemmas \ref{lem:subspaces} and \ref{lem:awkwardsubspaces} imply that, when Lemma \ref{lem:truncate1} and the favorable case for \SI hold,
\[ {\sin\theta( U^{(t)}, \tilde{Q}_t )} \leq \frac{ 8 e }{\Cgapconstant k^{3/2}}. \]

Finally, this implies, via Lemma \ref{lem:sinthetaconversion} in the appendix, that
there is some unitary matrix $O \in \R^{k \times k}$ so that
\[ \opnorm{ U^{(t)}O - \tilde{Q}_t } \leq \frac{16e}{\Cgapconstant k^{3/2}} , \]
and  using the fact that $U^{(t)}$ and $\tilde{Q}_t$ have rank at most $k$, we have that
\begin{equation}\label{eq:froclose}
 \fronorm{ U^{(t)}O - \tilde{Q}_t } \leq \frac{ 16\sqrt{2} e}{\Cgapconstant k }.
\end{equation}

As in Algorithm \ref{alg:deflate}, let $B$ be a random orthogonal matrix, and let $\overline{Q}_t$ be the truncation
\[ \overline{Q}_t = \text{\textsc{Truncate}}\inparen{ \tilde{Q}_t B, 8\sqrt{ \frac{\mu^* log(n)}{n}}}. \]
The reason for the random rotation is that while $U^{(t)}O$ is reasonably incoherent (because $U^{(t)}$ is), $U^{(t)}OB$ is, with high probability, even more incoherent.  More precisely, 
as in~\cite{Hardt13b}, 
we have
\begin{equation}\label{eq:rotateinc}
 \PR{ \infnorm{ U^{(t)} O B } > 8 \sqrt{ \frac{ \mu^* \log(n) }{n} }} \leq \frac{1}{n^2},
\end{equation}
where the probability is over the choice of $B$.
Suppose that the favorable case in \eqref{eq:rotateinc} occurs, so that $\infnorm{ U^{(t)} O B} \leq 8\sqrt{ \mu^* \log(n)/n}. $
In the Frobenius norm,
$\overline{Q}_t$ is the projection of $\tilde{Q}_t$ onto the (entrywise) $\ell_\infty$-ball of radius $8 \sqrt{ \mu^* log(n)/n}$ in $\R^{n\times d_t}$.
Thus,
\[ \fronorm{ \bar{Q}_t - \tilde{Q}_tB } \leq \fronorm{ X - \tilde{Q}_tB } \]
for any $X$ in this scaled $\ell_\infty$-ball, and in particular
\[ \fronorm{ \bar{Q}_t - \tilde{Q}_tB } \leq \fronorm{ U^{(t)} O B - \tilde{Q}_t B}. \]
Thus, \eqref{eq:froclose} implies that
\begin{equation}\label{eq:Qclose}
 \fronorm{ U^{(t)}OB - \overline{Q}_t } \leq \fronorm{ U^{(t)}OB - \tilde{Q}_tB } + \fronorm{ \tilde{Q}_tB -\overline{Q}_t } \leq 2\fronorm{ U^{(t)}OB -\tilde{Q}_tB} = 2\fronorm{ U^{(t)} O - \tilde{Q}_t } \leq
\frac{ 32 \sqrt{2} e }{\Cgapconstant {k} }. 
\end{equation}
Next, we consider the matrix $W_t = \text{\GS}( [X_{t-1} | \overline{Q}_t])$. 
 Because $X_{t-1}$ has orthonormal columns, this matrix has the form
$W_t = [ X_{t-1} | P_t]$, where $P_t \in \R^{n \times d_t}$ has orthonormal columns, $P_t \perp X_{t-1}$, and
\[ \mathcal{R}(P_t) = \mathcal{R}( (I - X_{t-1}X_{t-1}^\trans) \overline{Q}_t ) = \mathcal{R}(Z_t), \]
where we define
 $Z_t := (I - \Xprev\Xprev^\trans)\bar{Q}_t$ to be the projection of $\bar{Q}_t$ onto $\mathcal{R}(\Xprev)_\perp$.   Because $\bar{Q}_t$ is close to $U^{(t)}O B$, and $X_{t-1}$ is close to $U^{(<t)}$, $Z_t$ is close to $U^{(t)}O B$.  More precisely, 
\begin{align*}
\opnorm{ Z_t  - U^{(t)} O B} &\leq 
\opnorm{  (I - \Xprev\Xprev^\trans)(\overline{Q}_t -  U^{(t)}O B) } + \opnorm{ \Xprev\Xprev^\trans U^{(t)} OB } \qquad \text{ by the triangle inequality} \\
&\leq \fronorm{ \overline{Q} - U^{(t)} OB } + { \sin\theta( \Xprev, U^{(<t)} )} \\ 
&\leq \frac{ 32 \sqrt{2} e }{\Cgapconstant {k} } + \frac{1}{k^{4}} \inparen{ \frac{ \sigma_{r_{t-1}+1} + \eps \opnorm{M} }{\sigma_{r_{t-1}}} } \qquad \text{ by \eqref{eq:Qclose} and \eqref{h2} } \\
&\leq \frac{ 32 \sqrt{2} e}{\Cgapconstant {k} } + \frac{1}{k^{4}} \inparen{ \frac{ 2\sigma_{r_{t-1}+1} }{ \sigma_{r_{t-1}}} } \qquad \text{ by \eqref{eq:assumeeps} } \\
&\leq \frac{ 64 \sqrt{2} e}{\Cgapconstant {k} } \qquad \text{ for sufficiently large $k$.}
\end{align*}
Further, the Gram-Schmidt process gives a decomposition
\[ P_tR = Z_t, \]
where the triangular matrix $R$ has the same spectrum as $Z_t$.  In particular,
\[ \opnorm{R^{-1}} = \frac{1}{ \sigma_{\min}( Z_t)} \leq \frac{1}{ \opnorm{ U^{(t)} } - \frac{ 64 \sqrt{2} e }{ \Cgapconstant k} } \leq 2 \]
for sufficiently large $\Cgapconstant$.
Thus, 
\begin{align}
{ \sin \theta( U^{(\leq t)}, P_t )} &= \opnorm{ (U^{(\leq t)}_\perp)^\trans P_t } \notag\\
&= \opnorm{ (U^{(\leq t)}_\perp)^\trans Z_t R^{-1}} \notag\\
&\leq 2 \opnorm{ (U^{(\leq t)}_\perp)^\trans Z_t } \notag\\
&\leq 2 \opnorm{ (U^{(\leq t)}_\perp)^\trans U^{(t)} O B } + 2 \opnorm{ (U^{(\leq t)}_\perp)^\trans (Z_t - U^{(t)} O B) } \notag \\
&=  2 \opnorm{ (U^{(\leq t)}_\perp)^\trans (Z_t - U^{(t)} O B) } \notag \\
&\leq \frac{ 128 \sqrt{2} e }{ \Cgapconstant {k}}, \label{eq:ptclose}
\end{align}
where above we used that $(U^{(\leq t)}_\perp)^\trans U^{(t)} = 0$.
Next,
\begin{align*}
\max_i \twonorm{ e_i^\trans P_t } &\leq \max_i \twonorm{ e_i^\trans Z_t } \opnorm{R^{-1}} \\
&\leq 2\inparen{\max_i \twonorm{ e_i^\trans \bar{Q}_t } + \max_i \twonorm{ e_i^\trans \Xprev \Xprev^\trans \bar{Q}_t }} \\
&\leq 2\inparen{\max_i \twonorm{e_i^\trans \bar{Q}_t } + \max_i \twonorm{ e_i^\trans \Xprev } \inparen{\opnorm{ \Xprev^\trans U^{(t)} O B } + \opnorm{ \Xprev^\trans ( U^{(t)} O B - \bar{Q}_t ) } }}\\
&\leq 2 \inparen{\max_i \twonorm{e_i^\trans \bar{Q}_t } + \max_i \twonorm{ e_i^\trans \Xprev } \inparen{\opnorm{ \Xprev^\trans U^{(t)} } + \opnorm{  U^{(t)} O B - \bar{Q}_t  }} }\\
&\leq 16 \sqrt{  \frac{ k\mu^* \log(n) }{n}} + 2\sqrt{\frac{ k \mu_{t-1} }{n}} \inparen{ \frac{ 2}{ k^4 }  + \frac{ 32 \sqrt{2} e }{\Cgapconstant k }},
\end{align*}
where we have used the definition of $\bar{Q}_t$, the incoherence of $\Xprev$, and the computations above in the final line.
Thus,
\begin{equation}\label{eq:pflat}
 \max_i \twonorm{e_i^\trans P_t } \leq \sqrt{\frac{k}{n}} \inparen{ 16\sqrt{\mu^* \log(n)} + \frac{ \Cinc \sqrt{\mu_{t-1}} }{k}}
\end{equation}
for some constant $\Cinc$. 
Thus, when the conclusions of Lemma \ref{lem:truncate1} hold, $P_t$ is both close to $U^{(t)}$ and incoherent.  By induction, the same is true for $W_t$.  
Indeed, if $t=1$, then $P_t = W_t$, and we are done.   If $t \geq 2$, then 
we have
\[ { \sin\theta( W_t, U^{(\leq t)} ) }\leq {\sin\theta( \Xprev, U^{(\leq t-1)} )} + { \sin\theta( P_t, U^{(t)} ) }.\]
Then, the inductive hypothesis \eqref{h1} and our conclusion \eqref{eq:ptclose} imply that
\[ { \sin \theta( W_t, U^{(\leq t)} )} \leq \frac{1}{k} \]
for suitably large $\Cerr, \Cgapconstant$.
Finally, \eqref{eq:pflat}, along with the inductive hypothesis \eqref{h3} implies that
\[ \max_i \twonorm{ e_i^T W_t } \leq \max_i \twonorm{ e_i^T X_{t-1} } + \max_i \twonorm{e_i^T P_t} \leq \sqrt{\frac{k}{n}}\inparen{ \sqrt{ \mu_{t-1}}\inparen{ 1 + \frac{\Cinc}{k}}  +  16\sqrt{\mu^*\log(n)}} 
\leq \sqrt{ \frac{ k\mu_t}{n}}. \]
We remark that this last computation is the only reason we need $\sin\theta( P_t, U^{(t)} ) \lesssim 1/k$, rather than bounded by $1/4$; eventually, we will iterate and have
\[ \sqrt{\mu_T} \leq \sqrt{\mu_0}\inparen{ 1 + \frac{\Cinc}{k}}^T + 16 T \sqrt{\mu^* \log(n)} \leq e^{\Cinc} \sqrt{\mu_0} + 16 T \sqrt{ \mu^* \log(n)},\]
and we need that $\inparen{1 + \frac{\Cinc}{k} }^T \leq e^{\Cinc}$ is bounded by a constant (rather than exponential in $T$).

Finally, we have shown that
with probability $1 - 1/n^2$ (that is, in the case that Lemma \ref{lem:truncate1} holds and \SI works), all of the conclusions of Lemma \ref{lem:correlate} hold as well.
This completes the proof of Lemma \ref{lem:correlate}.

\section{Proof of Lemma \ref{lem:parameterchasing}}\label{sec:proofofmainlemma}
In the proof of Lemma \ref{lem:parameterchasing} we will need an explicit
description of the subroutine \SmoothGS that we include in Algorithm
\ref{alg:smoothGS}.
\begin{algorithm}
\KwIn{Matrix $S\in\R^{n\times k},$ parameters $\mu,\zeta >0.$}
$X\leftarrow \text{\GS}(S), H\leftarrow 0$\;
$\sigma \leftarrow \zeta\|S\|/n.$\;
\While{  $\mu(R) > \mu$ and $\sigma\le\|S\|$ }
{
$R \leftarrow \text{\GS}(S + H)$ where $H\sim \mathrm{N}(0,\sigma^2/n)$\;
 $\sigma \leftarrow 2\sigma$\;
}
\KwOut{Matrix $R$}
\caption{\SmoothGS$(S,\zeta,\mu)$ (Smooth Orthonormalization)}
\label{alg:smoothGS}
\end{algorithm}

To prove Lemma \ref{lem:parameterchasing}, we will induct on the iteration
$\ell$ in \AltLS (Algorithm \ref{alg:altmin}).
Let $R_\ell$ denote the approximation in iteration $\ell$. 
Thus, $R_0 = X_{t-1}$.
Above, we are suppressing the dependence of $R_\ell$ on the epoch number $t$, and in general, for this section we will drop the subscripts $t$ when there is no ambiguity.
We'll use the
shorthand
\[ \Theta_\ell^{j} = \theta( \xlj{R_{\ell}}, \xlj{U} )\]
and
\[ E_\ell^j = (I - \xlj{R_\ell} (\xlj{R_\ell})^\trans) \xlj{U}, \]
so that $\opnorm{E_\ell^j} = \sin (\Theta_\ell^j)$.
Recall the definition \eqref{eq:gammastar} that
$ \gamma^* = \min \inset{ \gamma, \gamma_k}.$ 
Notice that this choice ensures that
$\gamma^* \leq \gamma_{r_j}$
for all choices of $j$, including the case of $j = t$, in the final epoch of \Deflate, when $r_t = k$.

We will maintain the following inductive hypothesis:
\begin{equation}\label{i1}
 \sigma_{r_j} \tan\Theta_\ell^j \leq \max\inset{ \inparen{\frac{ 2e\sigma_{r_t}   }{k}}\exp(-\gamma^*\ell/2) , \frac{\sigma_{r_t+1} + \eps\opnorm{M}}{ 2e \Cerr k^{4}} }  =: \nu_\ell \qquad \forall j \leq t. \tag{J1}
\end{equation}
Above, the tangent of the principal angle obeys
\begin{equation}\label{eq:tangentsin}
 \opnorm{ E_{\ell}^j } \leq \tan \Theta_{\ell}^j
= \frac{ \opnorm{ E_{\ell-1}^j }}{ \sqrt{ 1 - \opnorm{ E_{\ell-1}^j }^2 }} \leq 2\opnorm{ E_{\ell-1}^j },
\end{equation}
whenever $\opnorm{E_{\ell-1}^j} \leq 1/4$.
We will also maintain the inductive hypothesis
\begin{equation}\label{inc}
\max_i \twonorm{ e_i^\trans R_{\ell} } \leq \sqrt{ \frac{ k \mu_t  }{n} }. \tag{J2}
\end{equation}

To establish the base case of \eqref{i1} for $j=t$,
we have
\[\sigma_{r_t} \sin\theta(W_t,\xlt{U}) \leq \frac{\sigma_{r_t}}{k},\]
by conclusion \eqref{eq:muWt} of Lemma~\ref{lem:correlate}, and hence by \eqref{eq:tangentsin},
\[ \sigma_{r_t} \tan\theta(W_t, \xlt{U}) \leq \frac{ 2\sigma_{r_t}}{k}. \]
If $t=1$, then $W_t = R_0$, and we are done with the base case for \eqref{i1}; if $t \geq 2$, then for $j \leq t-1$, we have
\[ \xlj{R_0} = \xlj{X_{t-1}}. \] 
Thus, for $j \leq t-1$,
 \eqref{i1} is implied by \eqref{eq:tangentsin} again, along with the fact that
\[ \sigma_{r_j} { \sin \theta( \xlj{X_{t-1}},\xlj{U} )} \leq \frac{1}{k^4}\inparen{ \sigma_{r_{t-1}+1} + \eps\opnorm{M} } \leq \frac{e\sigma_{r_t} + \eps \opnorm{M}}{k^4} \leq \frac{2e\sigma_{r_t} }{ k^4 },\]
which is the (outer) inductive hypothesis \eqref{h1}, followed by the conclusions \eqref{eq:assumeeps1} and 
\eqref{eq:gaps} from Lemma \ref{lem:correlate}.
This establishes the base case for \eqref{i1}.
The base case for \eqref{inc} follows from the conclusion \eqref{eq:muWt} of Lemma \ref{lem:correlate} directly.

Having established \eqref{i1}, \eqref{inc} for $\ell=0$, we now suppose that they hold for $\ell-1$ and consider step $\ell$.
Notice that, by running \SmoothGS with parameter $\mu = \mu_t$, we automatically ensure \eqref{inc} for the next round of induction, and 
so our next goal is to establish \eqref{i1}.  For this, we need to go deeper into the workings of \AltLS.
The analysis of \AltLS in~\cite{Hardt13b} is based on an analysis of \NSI, given in Algorithm \ref{alg:nsi}.
\begin{algorithm}
\KwIn{Number of
iterations $L\in\mathbb{N},$ symmetric matrix $A\in\R^{n\times
n}$, initial matrix $R_0 \in \R^{n \times r}$.}
\For{ $\ell = 1, \ldots, L$}
{
 $S_\ell \leftarrow A R_{\ell-1} + \tilde{G}_{\ell}$  

 $R_\ell \leftarrow \text{\GS}(S_\ell)$
}
\KwOut{Pair of matrices $(R_L,S_L)$ }
\caption{$\text{\NSI}(A,R_0,L)$ (Noisy Subspace Iteration)}
\label{alg:nsi}
\end{algorithm}
We may view \AltLS as a special case of \NSI. 
More precisely,
let $H_\ell$ be the noise matrix added from \SmoothGS in the $\ell$'th iteration of \AltLS, and define $G^{(s)}_\ell$ to be
\begin{equation}\label{eq:defGs}
 G^{(s)}_\ell = \argmin_{S \in \R^{n \times r}} \fronorm{ P_{\Omega^{(s)}_\ell}(A - R_{\ell-1}S^\trans )}^2 - AR_{\ell-1}, 
\end{equation}
and let
\[ G_\ell = \median_s( G_\ell^{(s)} ). \]
Then we may write $R_\ell$, the $\ell$'th iterate in \AltLS, as
\[ R_\ell = \text{\SmoothGS}( A R_{\ell-1} + G_\ell ) = \text{\GS}( AR_{\ell-1} + G_\ell + H_\ell ) =: \text{\GS}( AR_{\ell - 1} + \tilde{G}_\ell)\mper\]
That is, $R_\ell$ is also the $\ell$'th iterate in \NSI, when the noise matrices are $\tilde{G}_\ell = G_\ell + H_\ell$.
We will take this view going forward, and analyze \AltLS as a special case of \NSI.
We have the following theorem, which is given in~\cite[Lemma 3.4]{Hardt13b}.
\begin{theorem}\label{thm:induct}
Let $\tilde{G}_\ell = G_\ell + H_\ell$ be as above.
Let $j \leq t$ and suppose that $\opnorm{E_{\ell-1}^j} \leq \frac{1}{4}$ and that
\[ \opnorm{  \tilde{G}_\ell } \leq \frac{\sigma_{r_j} \gamma_{r_j}}{32}.\]
Then the next iterate $R_\ell$ of \NSI satisfies
\[ \tan\theta( U^{(\leq j)}, R_{\ell-1} ) \leq \max\inset{ \frac{8 \opnorm{ \tilde{G}_\ell }}{ \sigma_{r_j} \gamma_{r_j} } ,
\tan \theta(U^{(\leq j)}, R_{\ell-1} )\exp( -\gamma_{r_j}/2 )}.
\] 
\end{theorem}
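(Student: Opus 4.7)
The plan is to prove this as a single-step tangent-angle bound for noisy subspace iteration, in the style of the standard Hardt--Price analysis. The core observation is that Gram--Schmidt doesn't change the column span, so the tangent of the principal angle of $R_\ell$ to $U^{(\leq j)}$ admits the clean formula
\[
\tan\theta(U^{(\leq j)}, R_\ell) \;=\; \opnorm{\bigl((U^{(\leq j)}_\perp)^\trans S_\ell\bigr)\bigl((U^{(\leq j)})^\trans S_\ell\bigr)^{-1}},
\]
where $S_\ell := A R_{\ell-1} + \tilde G_\ell$; the triangular factor from Gram--Schmidt cancels between numerator and denominator. Using the symmetric decomposition $A = U^{(\leq j)}\Lambda_{\leq j}(U^{(\leq j)})^\trans + U^{(\leq j)}_\perp \Lambda_{>j}(U^{(\leq j)}_\perp)^\trans$, the numerator becomes $\Lambda_{>j}(U^{(\leq j)}_\perp)^\trans R_{\ell-1} + (U^{(\leq j)}_\perp)^\trans \tilde G_\ell$ and the denominator $\Lambda_{\leq j}(U^{(\leq j)})^\trans R_{\ell-1} + (U^{(\leq j)})^\trans \tilde G_\ell$.

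The next step is to control the denominator's inverse. The hypothesis $\opnorm{E_{\ell-1}^j}\le 1/4$ gives $\sigma_{\min}\bigl((U^{(\leq j)})^\trans R_{\ell-1}\bigr) = \sqrt{1-\opnorm{E_{\ell-1}^j}^2}\ge \sqrt{15}/4$, so $\Lambda_{\leq j}(U^{(\leq j)})^\trans R_{\ell-1}$ has smallest singular value at least $\tfrac{\sqrt{15}}{4}\sigma_{r_j}$. The assumption $\opnorm{\tilde G_\ell}\le \sigma_{r_j}\gamma_{r_j}/32\le \sigma_{r_j}/32$ makes the additive noise tiny relative to this; a Neumann-series argument then produces an approximate inverse, giving
\[
\tan\theta(U^{(\leq j)}, R_\ell)\;\le\;\frac{\sigma_{r_j+1}\,\opnorm{(U^{(\leq j)}_\perp)^\trans R_{\ell-1}\bigl((U^{(\leq j)})^\trans R_{\ell-1}\bigr)^{-1}} \;+\;c_1\opnorm{\tilde G_\ell}}{\sigma_{r_j} - c_2\opnorm{\tilde G_\ell}}
\]
for small absolute constants $c_1,c_2$. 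The first factor in the numerator is exactly $\tan\theta(U^{(\leq j)},R_{\ell-1})$, and $\sigma_{r_j+1}/\sigma_{r_j}=1-\gamma_{r_j}$.

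The final step is the case split that produces the maximum form. Distributing the denominator, I bound $\tan\theta(U^{(\leq j)},R_\ell)$ by a convex combination of a contraction term (scaling the previous tangent by a factor close to $1-\gamma_{r_j}$) and a noise term proportional to $\opnorm{\tilde G_\ell}/\sigma_{r_j}$. If $\tan\theta(U^{(\leq j)},R_{\ell-1}) \le \frac{8\opnorm{\tilde G_\ell}}{\sigma_{r_j}\gamma_{r_j}}$ we already land below the first term of the max. Otherwise the noise piece is dominated by $\tfrac{\gamma_{r_j}}{8}\tan\theta(U^{(\leq j)},R_{\ell-1})$, and combining with the $(1-\gamma_{r_j})$ factor (and some slack from the Neumann denominator, which costs a multiplicative factor like $(1-\opnorm{\tilde G_\ell}/\sigma_{r_j})^{-1}\le 1+\gamma_{r_j}/16$) yields an effective contraction rate of $1-\gamma_{r_j}/2\le \exp(-\gamma_{r_j}/2)$.

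The main obstacle I expect is the constant bookkeeping. The conclusion is sharp enough that one has to carefully track where every factor goes: the precise constant $8$ in the noise term, and the ability to exponentiate to $\exp(-\gamma_{r_j}/2)$ rather than $\exp(-\gamma_{r_j})$, both rely on reserving a fraction of the gap $\gamma_{r_j}$ as slack to absorb the Neumann-series correction in the denominator. The quantitative hypothesis $\opnorm{\tilde G_\ell}\le \sigma_{r_j}\gamma_{r_j}/32$ is calibrated exactly so that this accounting closes, and getting this right (rather than a qualitative bound) is the real content of the calculation.
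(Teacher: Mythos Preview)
Your approach is correct and is precisely the standard one-step noisy subspace iteration analysis that constitutes the proof of this result; note that the paper does not actually prove this theorem but cites it as \cite[Lemma~3.4]{Hardt13b}, so there is no ``paper's own proof'' to compare against beyond that reference. Your outline---the tangent identity surviving Gram--Schmidt, the block decomposition of $A$ along $U^{(\leq j)}$, the Neumann bound on the denominator, and the case split yielding the $\max$---is exactly how that lemma is proved.

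One small technical point worth tightening: since $R_{\ell-1}\in\R^{n\times r_t}$ has $r_t\geq r_j$ columns while $U^{(\leq j)}$ has $r_j$, the matrix $(U^{(\leq j)})^\trans S_\ell$ is $r_j\times r_t$ and hence wide, so the ``inverse'' in your tangent formula must be the right pseudoinverse, and the identity becomes an inequality $\tan\theta\bigl(U^{(\leq j)},R_\ell\bigr)\leq \bigl\|(U^{(\leq j)}_\perp)^\trans S_\ell\,\bigl((U^{(\leq j)})^\trans S_\ell\bigr)^\dagger\bigr\|$ rather than an equality. This changes nothing in the argument (you only need the upper bound), but it is the place where the extra columns of $R_{\ell-1}$ beyond the first $r_j$ are silently handled, and it is worth stating explicitly. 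Similarly, the hypothesis $\opnorm{E_{\ell-1}^j}\leq 1/4$ is phrased in terms of $R_{\ell-1}^{(\leq j)}$, but since enlarging the subspace can only shrink the principal angle, it indeed implies the bound on $\sigma_{\min}\bigl((U^{(\leq j)})^\trans R_{\ell-1}\bigr)$ that you use.
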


To use Theorem \ref{thm:induct}, we must understand the noise matrices $\tilde{G}_\ell=  G_\ell + H_\ell$.  We begin with $G_\ell$.
\begin{lemma}[Noise term $G_\ell$ in \NSI]\label{thm:noisealtmin}
There is a constant $C$ so that the following holds.
Fix $\ell$ and suppose that \eqref{inc} holds for $\ell-1$: that is,
$\mu(R_{\ell - 1}) \leq \mu_t.$
Let $0 <\delta <1/2$, and
suppose that the samples $\Omega_t'$ for \AltLS are sampled independently with probability 
\[ p'_t \geq C L_t\Smax \frac{ k \mu_t \log(n) }{ \delta^2 n},\]
where $L_t$ is the number of iterations of \AltLS, and $\Smax \geq C\log(n)$ is the number of trials each iteration of \AltLS performs before taking a median.
Then with probability at least $1 - 1/n^5$ over the choice of $\Omega_t'$, the noise matrix $G_\ell$ satisfies
\[ \fronorm{ G_\ell} \leq \delta \inparen{ \fronorm{\Nt} + \sum_{j=1}^n \opnorm{E_{\ell-1}^j} \fronorm{M^{(j)}} } =: \omega_{\ell-1} \]
and for all $i \in [n]$, 
\[ \twonorm{ e_i^\trans G_\ell } \leq 
 \delta \inparen{ \twonorm{e_i^\trans \Nt} + \sum_{j=1}^n \opnorm{E_{\ell-1}^j} \twonorm{e_i^\trans M^{(j)}} } =: \omega_{\ell-1}^{(i)}\mper \]
\end{lemma}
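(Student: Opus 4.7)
The plan is to reduce the statement to a single-trial analysis of one alternating least squares update already established in \cite{Hardt13b}, then amplify the success probability via the entrywise median over $\Smax$ independent trials.

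First, I would observe that the least squares problem defining $S_\ell^{(s)}$ decouples row-by-row. The $i$-th row of $S_\ell^{(s)}$ solves an ordinary least squares problem against the rows of $R_{\ell-1}$ sampled at positions $\{j \suchthat (j,i)\in \Omega^{(s)}_\ell\}$, with targets given by the $i$-th column of $A$; the $i$-th row of $G_\ell^{(s)}$ is the deviation of this solution from the ideal value $(AR_{\ell-1})_i$. Decomposing $A = \sum_{j'\leq t} M^{(j')} + N_t$ as in \eqref{eq:decomp}, the resulting per-column error splits into a bias contribution from each spectral block $M^{(j')}$ (controlled by $\opnorm{E_{\ell-1}^{j'}}$) and a noise contribution from $N_t$ (controlled by the appropriate row of $N_t$).

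Next, I would invoke the per-trial concentration bound from \cite{Hardt13b}. The coherence hypothesis \eqref{inc} on $R_{\ell-1}$ together with a matrix Bernstein argument (Lemma \ref{lem:matrixbernstein}) guarantees that, for $p_t' \gtrsim k\mu_t \log(n)/(\delta^2 n)$, the Gram matrix $R_{\Omega}^\trans R_{\Omega}$ is close to the identity with high probability, so the least squares systems are well conditioned. Combining these ingredients, \cite{Hardt13b} yields that for every fixed pair $(i,c) \in [n] \times [k]$ and every trial $s$,
\[ \PR{ \inabs{ (G_\ell^{(s)})_{i,c}} \leq \frac{\delta}{\sqrt{k}} \inparen{ \twonorm{e_i^\trans N_t} + \sum_{j'} \opnorm{E_{\ell-1}^{j'}} \twonorm{e_i^\trans M^{(j')}} } } \geq 3/4, \]
with all probability taken over $\Omega^{(s)}_\ell$.

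Then I would amplify via the median construction. Since the $\Smax \gtrsim \log(n)$ trials are mutually independent, a Chernoff bound implies that for each fixed $(i,c)$, strictly more than half of the trials satisfy the single-trial bound, and hence the entrywise median $(G_\ell)_{i,c} = \median_s (G_\ell^{(s)})_{i,c}$ lies in the same interval with probability at least $1 - n^{-10}$. A union bound over the $nk \leq n^2$ entries gives the same entrywise bound for every entry of $G_\ell$ simultaneously with probability at least $1 - 1/n^5$. Summing the squared entrywise bounds over $c\in[k]$ and taking square roots gives the row bound $\twonorm{e_i^\trans G_\ell} \leq \omega_{\ell-1}^{(i)}$, and summing further over $i\in[n]$ yields $\fronorm{G_\ell} \leq \omega_{\ell-1}$ as claimed (after using $\sqrt{a^2+b^2} \leq a+b$ to pass from squared sums to sums of norms, and absorbing universal constants into $C$).

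The main obstacle is the per-trial entrywise bound: one must show that the sampled least squares system in each column remains well conditioned and that the bias contributed by the $j'$-th spectral block scales like $\opnorm{E_{\ell-1}^{j'}}$ times the incoherence of that block. This is exactly the technical heart of the analysis in \cite{Hardt13b}, and the incoherence hypothesis \eqref{inc} (maintained by \SmoothGS at every iteration) together with the choice of $p_t'$ are precisely what is needed to make the invocation of that analysis valid in our setting.
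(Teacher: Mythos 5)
Your overall architecture is right — decompose $A$ into the spectral blocks $M^{(j')}$ plus $N_t$, do a per-trial concentration argument conditional on the Gram matrix $B_i^{(s)} = R_{\ell-1}^\trans P_i^{(s)} R_{\ell-1}$ being near the identity (Matrix Chernoff), and then amplify over the $\Smax$ trials via the entrywise median. That is exactly the structure of the paper's proof. But there is a genuine gap in the amplification step, and it is not a small one.

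You assert a per-\emph{entry}, per-trial concentration bound
\[
\PR{\inabs{(G_\ell^{(s)})_{i,c}} \leq \tfrac{\delta}{\sqrt{k}}\bigl(\twonorm{e_i^\trans N_t} + \textstyle\sum_{j'} \opnorm{E_{\ell-1}^{j'}}\twonorm{e_i^\trans M^{(j')}}\bigr)} \geq 3/4,
\]
and attribute it to~\cite{Hardt13b}. This is not what the per-trial analysis gives you. What Markov's inequality (applied to $\EE \twonorm{e_i^\trans G_\ell^{(s)}}^2$, as in Claims~\ref{claim:maybespiky} and~\ref{claim:Gn}) yields is a per-\emph{row} bound $\twonorm{e_i^\trans G_\ell^{(s)}} \lesssim \delta(\cdots)$ with constant probability. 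A row-norm bound of $\delta(\cdots)$ does \emph{not} imply individual entries are smaller by $\sqrt{k}$ --- the mass could be concentrated in a single coordinate, especially since the $(B_i^{(s)})^{-1}$ factor in Proposition~\ref{prop:Gdecomp} mixes columns. If you drop the $\sqrt{k}$ savings and only assert a per-entry bound of $\delta(\cdots)$, then summing squared entries over $c \in [k]$ produces a row bound of $\sqrt{k}\,\delta(\cdots)$, which is off by $\sqrt{k}$ from the claimed conclusion and would force you to pay an extra factor of $k$ in $p_t'$.

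The paper avoids this entirely with Lemma~\ref{lem:vectormedians}, which is the key device your proposal is missing. That lemma shows that if i.i.d.\ vectors $v^{(s)}$ satisfy $\twonorm{v^{(s)}}^2 \leq \alpha$ with probability $\geq 4/5$ per trial, then the \emph{entrywise} median has $\twonorm{\median_s v^{(s)}}^2 \leq 4\alpha$ with probability $1 - \exp(-\Omega(T))$. The proof is a counting argument over the set of ``good'' trials, coordinate by coordinate, and it never requires per-coordinate concentration of any kind. Applying it with $v^{(s)} = e_i^\trans G_\ell^{(s)}$ converts the row-norm constant-probability bound directly into the desired row-norm high-probability bound for $e_i^\trans G_\ell$, with only a union bound over $i \in [n]$ (not over $nk$ entries). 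The subsequent passage from row norms $\omega_{\ell-1}^{(i)}$ to the Frobenius bound $\omega_{\ell-1}$ also requires a small trick (Cauchy--Schwarz in the form $\twonorm{Ab} \leq \sum_j \twonorm{Ae_j} b_j$ for coordinatewise nonnegative $b$), since the naive bound $\sqrt{\sum_i (\sum_{j'} a_{i,j'} b_{j'})^2} \leq \sum_i \sum_{j'} a_{i,j'} b_{j'}$ loses a factor. Your closing sentence glosses over both issues.
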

The proof of Lemma \ref{thm:noisealtmin} is similar to the analysis in~\cite{Hardt13b}.  For completeness, we include the proof in Appendix \ref{app:proofGell}.
Using the inductive hypothesis \eqref{i1}, and the fact that $\fronorm{M^{(j)}} \leq \sqrt{k}\sigma_{r_j}$ ,
\[ \omega_{\ell-1} \leq \delta \inparen{ \sum_j \opnorm{ E_{\ell-1}^j}  \inparen{ \sqrt{k} \sigma_{r_j}} + \sqrt{k} \sigma_{r_t + 1} + \Delta } \leq
\delta \inparen{ t \sqrt{k} \nu_{\ell-1} + \sqrt{k} \sigma_{r_t + 1} + \Delta}. \]
We will choose
\begin{equation}\label{eq:choosedelta}
 \delta = \frac{  \gamma^* }{4e \Cerr\Cnew k^{4}} \min \inset{ \frac{1}{\sqrt{k}}, \frac{ \eps\opnorm{M}}{\Delta} }\, ,
\end{equation}
for a constant $\Cnew$ to be chosen sufficiently large.
Observe that with this choice of $\delta$, the requirement on $p_t'$ in Lemma \ref{thm:noisealtmin} is implied by the requirement on $p_t'$ in the statement in Lemma \ref{lem:parameterchasing}.
Then the choice of $\delta$ implies
\begin{equation}\label{eq:Gell}
\fronorm{G_\ell} \leq \omega_{\ell-1} \leq \frac{\gamma^*}{4e\Cerr \Cnew k^{4}} \inparen{ t  \nu_{\ell-1} + \sigma_{r_t+1} + \eps\opnorm{M} } \leq \frac{\gamma^* 4e\Cerr}{43\Cerr\Cnew} \nu_{\ell-1} \leq \frac{ \gamma^* }{ \Cnew } \nu_{\ell-1}.
\end{equation}

Now, we turn to the noise term $H_\ell$ added by \SmoothGS. 
For a matrix $G \in \R^{n \times k}$ (not necessarily orthonormal), we will define
\[\rho(A) := \frac{n}{k} \max_{i \in [n]} \twonorm{ e_i^\trans G}^2.\]
Our analysis of $H_\ell$ relies on the following lemma from \cite{Hardt13b}.
\begin{lemma}[Lemma 5.4 in \cite{Hardt13b}]\label{lem:moritzsmoothGS}
Let $\tau > 0$ and suppose that $r_t = o(n/\log(n))$.  There is an absolute constant $C$ so that the following claim holds.  Let $G \in \R^{n \times r_t}$, and let $R \in \R^{n \times r_t}$ be an orthonormal matrix, and let $\nu \in \R$ so that $\nu \geq \max{ \opnorm{G}, \opnorm{N_tR}}$.  Assume that
\[ \mu_t \geq 2 \mu(U) + \frac{C}{\tau^2} \inparen{ \frac{ \rho(G) + \mu(U) \opnorm{(U^{(\leq t)})^\trans G }^2 + \rho(N_t R) }{\nu^2} + \log(n) }\mper\]
Then, for every $\zeta \leq \tau \nu$ satisfying $\log(n/\zeta) \leq n$, we have with probability at least $1 - 1/n^4$ that the algorithm \SmoothGS$(AR + G, \zeta, \mu_t)$ terminates in $\log(n/\zeta)$ iterations, and the output $R'$ satisfies $\mu(R') \leq \mu_t$.  Further, the final noise matrix $H$ added by \SmoothGS satisfies $\opnorm{H} \leq \tau \nu$.
\end{lemma}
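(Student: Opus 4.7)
The plan is to view \SmoothGS as a geometric search for the smallest noise level $\sigma$ that regularises $S := AR + G$ enough that its Gram--Schmidt factor is $\mu_t$-incoherent, and to show that this happens at some $\sigma_\star \leq \tau\nu$. At each iteration the algorithm replaces $S$ by $S + H$, where $H \in \R^{n \times r_t}$ has i.i.d.\ $\mathcal{N}(0,\sigma^2/n)$ entries, and then takes the QR factor $R' = \GS(S+H) = (S+H)T^{-1}$. For any fixed $\sigma$, standard Gaussian concentration yields, with probability at least $1 - n^{-6}$, the simultaneous bounds $\opnorm{H} \leq \sigma(1 + O(\sqrt{r_t/n} + \sqrt{\log n/n}))$, $\max_i \twonorm{e_i^\trans H}^2 \lesssim \sigma^2(r_t + \log n)/n$, and $\opnorm{(U^{(\leq t)})^\trans H} \lesssim \sigma$. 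Combined with the smoothed-analysis lower bound $\sigma_{\min}(S+H) \geq c\sigma$ for an absolute constant $c > 0$ --- which holds uniformly in the fixed $S$ because $\sigma_{\min}(H)$ is itself $\Theta(\sigma)$ when $r_t = o(n/\log n)$, and adding $S$ can only rotate the problem --- I obtain $\opnorm{T^{-1}} \leq 1/(c\sigma)$.

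Next I split the rows of $R'$ along $\mathcal{R}(U^{(\leq t)})$ and its orthogonal complement. Writing $P := U^{(\leq t)}(U^{(\leq t)})^\trans$, orthogonality gives
\[
\twonorm{e_i^\trans R'}^2 = \twonorm{e_i^\trans P R'}^2 + \twonorm{e_i^\trans (I - P) R'}^2 \leq \frac{r_t \mu(U)}{n} + \twonorm{e_i^\trans (I-P) R'}^2.
\]
The crucial cancellation is $(I-P)\Mt R = 0$, since $\mathcal{R}(\Mt R) \subseteq \mathcal{R}(U^{(\leq t)})$, so $(I-P)(S+H) = \Nt R + (I-P)G + (I-P)H$. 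The triangle inequality then yields
\[
\twonorm{e_i^\trans (I-P)(S+H)} \leq \twonorm{e_i^\trans \Nt R} + \twonorm{e_i^\trans G} + \twonorm{e_i^\trans U^{(\leq t)}}\bigl(\opnorm{(U^{(\leq t)})^\trans G} + \opnorm{(U^{(\leq t)})^\trans H}\bigr) + \twonorm{e_i^\trans H}.
\]
Squaring, dividing by $\sigma_{\min}(S+H)^2 \geq c^2 \sigma^2$, and unpacking the definitions of $\rho(\cdot)$ and $\mu(U)$, I get
\[
\frac{n}{r_t} \twonorm{e_i^\trans (I - P) R'}^2 \leq \frac{C'}{\sigma^2} \inparen{\rho(\Nt R) + \rho(G) + \mu(U) \opnorm{(U^{(\leq t)})^\trans G}^2} + C'\inparen{1 + \frac{\log n}{r_t}}
\]
for an absolute constant $C'$.

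Setting $\sigma = \tau\nu$, the assumed lower bound on $\mu_t$ makes the right-hand side at most $\mu_t - \mu(U)$, provided $C$ is chosen large enough compared to $C'$; combined with the $\mu(U)$ contribution from the aligned part this gives $\mu(R') \leq \mu_t$. Hence as soon as $\sigma$ reaches $\tau\nu$ the while-loop exits. Since $\sigma$ starts at $\zeta \opnorm{S}/n$ and doubles each iteration, the total number of iterations is at most $\lceil \log_2(n\tau\nu/(\zeta \opnorm{S})) \rceil \leq \log(n/\zeta)$ (and in the corner case $\tau\nu > \opnorm{S}$ the loop's $\sigma \leq \opnorm{S}$ guard already terminates, at which point $S+H$ is noise-dominated and the coherence bound is even easier to verify). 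At termination $\sigma \leq \tau\nu$, so $\opnorm{H} \leq \tau\nu$ by the first-step bound; a small constant loss is absorbed by replacing $\tau$ with $\tau/2$ in the argument. A union bound over the $\leq \log(n/\zeta) \leq n$ iterations converts the per-step failure probability $n^{-6}$ into the claimed $n^{-4}$.

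\paragraph{Main obstacle.}
The two nontrivial ingredients are (i) the uniform-in-$S$ smoothed lower bound $\sigma_{\min}(S+H) \gtrsim \sigma$, which is what justifies dividing by $\sigma$ rather than by $\sigma_{\min}(S)$ (and which is the reason incoherence can be ``bought'' by adding noise at all); and (ii) the row-split that folds the cross term $\mu(U) \opnorm{(U^{(\leq t)})^\trans G}^2$ into the coherence budget rather than burying it in $\rho(G)$ --- it arises precisely because $(I-P)G$ still picks up a contribution from $e_i^\trans U^{(\leq t)}$ times the $U^{(\leq t)}$-component of $G$, and keeping it separate is what makes the assumed bound on $\mu_t$ tight. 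Once these are in place, all three conclusions of the lemma (termination in $\log(n/\zeta)$ steps, $\mu(R')\leq\mu_t$, and $\opnorm{H}\leq\tau\nu$) follow from the same $\sigma \leq \tau\nu$ cutoff.
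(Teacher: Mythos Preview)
The paper does not prove this lemma at all: it is quoted verbatim as ``Lemma~5.4 in~\cite{Hardt13b}'' and used as a black box, so there is no in-paper argument to compare against. Your sketch is therefore being judged on its own, and it does recover what is almost certainly the intended argument from~\cite{Hardt13b}: doubling search over the noise scale $\sigma$, the row decomposition of $R'$ along $\mathcal{R}(U^{(\le t)})$ and its complement, the cancellation $(I-P)\Mt R = 0$, and the accounting that turns the hypothesis on $\mu_t$ into the bound $\mu(R')\le\mu_t$ once $\sigma=\tau\nu$. The termination and $\opnorm{H}\le\tau\nu$ conclusions then follow exactly as you say.

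One step is not correctly justified. You write that $\sigma_{\min}(S+H)\gtrsim\sigma$ ``because $\sigma_{\min}(H)$ is itself $\Theta(\sigma)$ \ldots\ and adding $S$ can only rotate the problem.'' Adding a fixed matrix does \emph{not} preserve smallest singular values in general; $\sigma_{\min}(S+H)$ is not bounded below by $\sigma_{\min}(H)$. The correct (and easy) argument in the tall regime $r_t=o(n)$ is to project off the range of $S$: if $P_{S^\perp}$ is the orthogonal projection onto $\mathcal{R}(S)^\perp\subset\R^n$, then $\sigma_{\min}(S+H)\ge\sigma_{\min}(P_{S^\perp}H)$, and by rotational invariance $P_{S^\perp}H$ has the law of an $(n-\rank S)\times r_t$ matrix with i.i.d.\ $\mathcal{N}(0,\sigma^2/n)$ entries, whose smallest singular value is $\Theta(\sigma)$ with probability $1-e^{-\Omega(n)}$ by the standard Gaussian singular-value bound. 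This is the actual ``smoothed analysis'' input here; once you state it this way the rest of your outline goes through. (Your corner-case remark about $\tau\nu>\opnorm{S}$ is also a bit glib, but in the application one always has $\tau\nu\ll\opnorm{S}$ since $\opnorm{S}\ge\sigma_{r_t}\cos\theta - 2\nu$, so this does not affect the lemma as used.)
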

We will apply Lemma \ref{lem:moritzsmoothGS} to our situation. 
\begin{lemma}[Noise term $H_\ell$ in \NSI added by \SmoothGS]\label{thm:noisealtminH}
Suppose that $k = o(n/\log(n))$.  There is a constant $\Cmu$ so that the following holds.  Suppose that
 \[ \mu_t \geq \frac{ \Cmu}{ (\gamma^*)^2 } \inparen{  \mu^* \inparen{ k + \inparen{ \frac{ k^4 \fronorm{N} }{ \eps \opnorm{M} }}^2 } + \log(n) }. \]
Suppose that the favorable conclusion of Lemma \ref{thm:noisealtmin} occurs.  
Choose $\zeta = \eps s_0 k^{-5}$, as in Algorithm \ref{alg:deflate}.  
Then, with probability at least $1 - 1/n^4$ over the
randomness of \SmoothGS, 
the output $R_{\ell}$  of $\text{\SmoothGS}(A R_{\ell-1} + G_\ell, \zeta, \mu_t)$ satisfies
\[ \mu(R_{\ell}) \leq \mu_t, \]
and the number of iterations is $O(\log(n/(\eps \opnorm{M})))$.
Further, the noise matrix $H_\ell$ satisfies
\[ \opnorm{H_\ell} \leq \frac{ \gamma^* \nu_{\ell-1} }{\Cnew}. \]
\end{lemma}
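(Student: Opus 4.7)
The plan is to invoke Lemma~\ref{lem:moritzsmoothGS} on the input $AR_{\ell-1} + G_\ell$ to \SmoothGS with carefully chosen parameters $\tau$ and $\nu$. First I would pick $\nu$ to dominate both $\opnorm{G_\ell}$ and $\opnorm{N_t R_{\ell-1}}$: equation \eqref{eq:Gell} gives $\opnorm{G_\ell} \leq \gamma^* \nu_{\ell-1}/\Cnew$, while $\opnorm{N_t R_{\ell-1}} \leq \opnorm{N_t} = \sigma_{r_t+1}$, so I can take $\nu$ proportional to $\sigma_{r_t+1} + \eps\opnorm{M}$, which is at most $O(k^4 \nu_{\ell-1})$ thanks to the floor in the definition of $\nu_{\ell-1}$ in \eqref{i1}. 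Then I would set $\tau$ proportional to $\gamma^*/(\Cnew k^4)$, so that the conclusion $\opnorm{H_\ell} \leq \tau\nu$ of Lemma~\ref{lem:moritzsmoothGS} yields the desired bound $\opnorm{H_\ell} \leq \gamma^* \nu_{\ell-1}/\Cnew$.

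Next I would verify the three summands inside the coherence hypothesis of Lemma~\ref{lem:moritzsmoothGS}, namely $\rho(G_\ell)$, $\mu(U)\opnorm{(U^{(\leq t)})^\trans G_\ell}^2$, and $\rho(N_t R_{\ell-1})$, each divided by $\nu^2$ and scaled by $1/\tau^2$. For $\rho(G_\ell)$, I would use the per-row bound $\twonorm{e_i^\trans G_\ell} \leq \omega_{\ell-1}^{(i)}$ from Lemma~\ref{thm:noisealtmin}, combined with the incoherence of $N$ (giving $\twonorm{e_i^\trans N_t} \lesssim \sqrt{\mu^* \min(\fronorm{N}^2, \sigma_k^2)/n}$ via \eqref{eq:muN}) and of each $M^{(j)}$ (giving $\twonorm{e_i^\trans M^{(j)}} \lesssim \sqrt{k\mu^*/n}\,\sigma_{r_{j-1}+1}$), together with the inductive hypothesis \eqref{i1} translated into bounds $\opnorm{E_{\ell-1}^j} \leq \nu_{\ell-1}/\sigma_{r_j}$. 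With the choice of $\delta$ in \eqref{eq:choosedelta}, this yields $\rho(G_\ell)/\nu^2 \lesssim (\gamma^*)^2 \mu^*(k + (k^4 \fronorm{N}/(\eps\opnorm{M}))^2)/k^8$, while a parallel calculation for $\opnorm{(U^{(\leq t)})^\trans G_\ell}^2$ replaces a factor of $\mu^*$ by $k\mu^*/n$ times the number of rows, and $\rho(N_tR_{\ell-1})$ follows from the row incoherence \eqref{inc} of $R_{\ell-1}$ combined with the incoherence of $N_t$. Multiplying through by $1/\tau^2 \lesssim k^8/(\gamma^*)^2$ and adding $\log(n)$, I recover exactly the lemma hypothesis $\mu_t \geq (\Cmu/(\gamma^*)^2)(\mu^*(k + (k^4\fronorm{N}/(\eps\opnorm{M}))^2) + \log(n))$.

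Finally, the side conditions of Lemma~\ref{lem:moritzsmoothGS} are straightforward: $\zeta = \eps s_0/k^5 \lesssim \eps\opnorm{M}/k^5$ by \eqref{eq:s}, and since $\nu_{\ell-1}$ is bounded below by $(\sigma_{r_t+1}+\eps\opnorm{M})/(2e\Cerr k^4)$, we have $\tau\nu \gtrsim \gamma^*\eps\opnorm{M}/k^4$, which dominates $\zeta$ in the intended regime (possibly after adjusting constants). The iteration count $\log(n/\zeta) = O(\log(n/(\eps\opnorm{M})))$ and the coherence conclusion $\mu(R_\ell) \leq \mu_t$ then fall out of the lemma directly, and the hypothesis $r_t = o(n/\log n)$ is inherited from the assumption $k = o(n/\log n)$ since $r_t \leq k$.

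The main obstacle will be the careful bookkeeping in the coherence step: the per-row bounds on $G_\ell$ from Lemma~\ref{thm:noisealtmin} involve a sum over all the blocks $M^{(j)}$ of the spectral decomposition, weighted by $\opnorm{E_{\ell-1}^j}$, and \eqref{i1} gives only $\sigma_{r_j}\tan\Theta_{\ell-1}^j \leq \nu_{\ell-1}$, so individual angles must be bounded in terms of $\nu_{\ell-1}/\sigma_{r_j}$ and combined with the row incoherence of $M^{(j)}$ without losing more than $\poly(k)$ factors. The specific choice of $\delta$ in \eqref{eq:choosedelta}, with its two-case structure absorbing either the $\sqrt{k}$ factor or the $\Delta/(\eps\opnorm{M})$ factor, is exactly what forces the final hypothesis on $\mu_t$ to take the clean form stated in the lemma.
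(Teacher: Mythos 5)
Your high-level plan (invoke Lemma~\ref{lem:moritzsmoothGS} with suitable $\tau,\nu$, then verify its coherence hypothesis from Lemma~\ref{thm:noisealtmin}'s row bounds and the incoherence of $N$ and $M^{(j)}$) matches the paper, but your specific choice of $\nu$ and $\tau$ breaks the accounting on the $\log(n)$ term and so does not recover the stated hypothesis on $\mu_t$.

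The place where you part ways with the paper is in bounding $\opnorm{N_t R_{\ell-1}}$. You use the trivial bound $\opnorm{N_t R_{\ell-1}} \leq \opnorm{N_t} = \sigma_{r_t+1}$, which forces $\nu \propto \sigma_{r_t+1} + \eps\opnorm{M}$, roughly $k^4$ times larger than $\nu_{\ell-1}$. The paper instead uses the non-trivial bound
\[ \opnorm{N_t R_{\ell-1}} \leq \sigma_{r_t+1}\opnorm{V_t^\trans R_{\ell-1}} \leq \sigma_{r_t}\sin\theta(U^{(\leq t)},R_{\ell-1}) \leq \nu_{\ell-1}, \]
which follows from the inner inductive hypothesis~\eqref{i1} with $j=t$. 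This lets the paper take $\nu=\nu_{\ell-1}$ and, correspondingly, $\tau = \gamma^*/\Cnew$ (no $k$-dependence). To reach the same target $\opnorm{H_\ell} \leq \tau\nu \approx \gamma^*\nu_{\ell-1}/\Cnew$ from your larger $\nu$, you are forced to shrink $\tau$ by $k^4$, taking $\tau \propto \gamma^*/(\Cnew k^4)$.

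Here is why that costs you: in the hypothesis of Lemma~\ref{lem:moritzsmoothGS}, the three ratio terms inside the parenthesis each carry a $1/\nu^2$, so when you multiply by $1/\tau^2$ the factors of $k^8$ cancel and those contributions come out the same as the paper's, as you observe. But the $\log(n)$ summand does \emph{not} carry a $1/\nu^2$; it is additive. So it gets multiplied only by $1/\tau^2 \propto k^8/(\gamma^*)^2$, and your approach requires $\mu_t \gtrsim k^8\log(n)/(\gamma^*)^2$. That is a factor of $k^8$ worse than the $\log(n)/(\gamma^*)^2$ in the stated hypothesis, so you have proved a strictly weaker statement than the lemma. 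The missing idea is precisely the principal-angle bound on $\opnorm{N_tR_{\ell-1}}$ above: it is what makes the choice $\nu=\nu_{\ell-1}$ legitimate, which in turn permits $\tau=\gamma^*/\Cnew$ and keeps the $\log(n)$ term from being inflated.
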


\begin{proof}
We apply Lemma \ref{lem:moritzsmoothGS} with $G = G_\ell, R = R_{\ell-1}$, and $\nu = \nu_\ell$, and 
\begin{equation}\label{eq:choosetau}
\tau = \frac{ \gamma^* }{  \Cnew  }.
\end{equation}
First, we observe that the choice of $\zeta = \eps s_0 k^{-5} \leq \eps \opnorm{M} \gamma^* k^{-4} \leq \tau \nu_{\ell-1}$ indeed satisfies the requirements of Lemma \ref{lem:moritzsmoothGS}.
Next, we verify that $\max\{ \opnorm{G_\ell}, \opnorm{ N_t R_{\ell-1} } \}\leq \nu_{\ell-1}$.  Indeed, from \eqref{eq:Gell}, 
\[ \opnorm{G_\ell} \leq \omega_{\ell-1} \leq \frac{\gamma^*}{\Cnew} \nu_{\ell-1} \leq \nu_{\ell-1}. \]
Further, we have
\[\opnorm{ N_t R_{\ell-1}} \leq \sigma_{r_t} {\sin \theta( U^{(\leq t)}, R_{\ell-1})} 
\leq \nu_{\ell-1}\]
by the inductive hypothesis \eqref{i1} for $j=t$.

Next, we compute the parameters that show up in Lemma \ref{lem:moritzsmoothGS}.
From Lemma \ref{thm:noisealtmin}, we have
\[ \rho(G_\ell) \leq \frac{ n}{r_t} \max_i \inparen{ \omega_{\ell-1}^{(i)} }^2 \]
and
\[ \mu(U)\opnorm{ U^{(\leq t)} G_\ell }^2 \leq \opnorm{G_\ell}^2 \leq \mu^* \omega^2_{\ell-1}. \]
We also have
\begin{align*}
 \rho(N_t R_{\ell-1})  &= \frac{n}{r_t} \max_i \twonorm{ e_i^\trans N_t R_{\ell-1} }^2 \\
&\leq \frac{n}{r_t} \inparen{ \max_i \twonorm{ e_i^T U^{(t:k)} }\sigma_{r_t}\twonorm{ (U^{(t:k)})^\trans R_{\ell-1}} + \max_i \twonorm{e_i^\trans N}\twonorm{R_{\ell-1}} }^2 \\
&\leq \frac{n}{r_t} \inparen{ \sqrt{\frac{ k \mu(U) }{n} }\sigma_{r_t} \opnorm{ E_{\ell-1}^t } + \sqrt{ \frac{\mu_N \fronorm{N} }{n} } }^2 \\
&\leq 2\mu^*\inparen{ \frac{ k}{r_t} \sigma_{r_t}^2 \opnorm{ E_{\ell-1}^t }^2 + \frac{\fronorm{N}^2}{r_t} } \\
&\leq 2\mu^* \inparen{ \frac{k \nu_{\ell-1}^2}{r_t} + \frac{ \fronorm{N}^2 }{r_t} },
\end{align*}
where we have used the inductive hypothesis \eqref{i1} in the final line.
Then, the requirement of Lemma \ref{lem:moritzsmoothGS} on $\mu_t$ reads
\[ \mu_t \geq 2\mu^* + \frac{ C}{\tau^2 } \inparen{ \frac{ \frac{n}{r_t} \max_i \inparen{ \omega_{\ell-1}^{(i)}}^2 + \mu^* \omega^2_{\ell-1} + 2\mu^* \inparen{ \frac{k}{r_t} \nu_{\ell-1}^2 + \frac{ \fronorm{N}^2 }{r_t} } }{ \nu_{\ell-1}^2 } + \log(n) }. \]
We have, for all $i$, 
\begin{align*}
\frac{ \omega_{\ell-1}^{(i)} }{\omega_{\ell-1} } &= \frac{ \twonorm{ e_i^\trans N_t } + \sum_{j=1}^t \opnorm{ E_{\ell-1}^j } \twonorm{e_i^\trans M^{(j)}} } { \fronorm{ N_t } + \sum_{j=1}^t \opnorm{ E_{\ell-1}^j } \fronorm{M^{(j)}} } \\
&\leq \frac{ \sigma_{r_t} \sqrt{ \nicefrac{\Delta^2 \mu^*}{ n} } + \sum_{j=1}^t \opnorm{ E_{\ell-1}^j } \sigma_{r_j} \sqrt{\nicefrac{k \mu^*}{n}} }  { \fronorm{ N_t } + \sum_{j=1}^t \opnorm{ E_{\ell-1}^j } \fronorm{M^{(j)}} } \\
&\leq \frac{ \fronorm{N_t} \sqrt{ \nicefrac{\Delta^2\mu^* }{n} } + \sum_{j=1}^t \opnorm{ E_{\ell-1}^j } \fronorm{M^{(j)}} \sqrt{\nicefrac{k \mu^*}{n}} } { \fronorm{ N_t } + \sum_{j=1}^t \opnorm{ E_{\ell-1}^j } \fronorm{M^{(j)}} } \\
&= \sqrt{ \frac{ \mu^* }{n} } \inparen{ \sqrt{k} + \Delta }.
\end{align*}
We may simplify and bound the requirement on $\mu_t$ as
\begin{align*} 
 &2\mu^* + \frac{ C}{\tau^2 } \inparen{ \frac{ \frac{n}{r_t} \max_i \inparen{  \omega_{\ell-1}^{(i)}}^2 + \mu^* \omega^2_{\ell-1} + 2\mu^* \inparen{ \frac{k}{r_t} \nu_{\ell-1}^2 + \frac{ \fronorm{N}^2 }{r_t} } }{ \nu_{\ell-1}^2 } + \log(n) } \\ 
&\qquad \leq
 2\mu^* + \frac{ C}{\tau^2 } \inparen{ \frac{ \frac{n}{r_t} \max_i \inparen{  \omega_{\ell-1}^{(i)}}^2 (\gamma^*)^2 }{ \Cnew^2 \omega_{\ell-1}^2 } + \frac{ \mu^* \inparen{ \gamma^*}^2 \omega^2_{\ell-1}}{\Cnew^2 \omega_{\ell-1}^2 } +\frac{  2\mu^* \inparen{ \frac{k}{r_t}\nu_{\ell-1}^2 + \frac{ \fronorm{N}^2 }{r_t} } }{ \nu_{\ell-1}^2 } + \log(n) } \qquad \text{ using $\nu_{\ell-1} \geq \Cnew \omega_{\ell-1} / \gamma^*$, by \eqref{eq:Gell}}\\
 &\qquad \leq 2\mu^* + \frac{ C}{\tau^2 } \inparen{ \frac{ \frac{k + \Delta^2}{r_t} \mu^* (\gamma^*)^2 }{ \Cnew^2 } + \frac{ \mu^* \inparen{ \gamma^*}^2 }{\Cnew^2  } + 2\mu^* \inparen{ \frac{k}{r_t} + \frac{ \fronorm{N}^2 }{r_t \nu_{\ell-1}^2} }  + \log(n) } \qquad \text{ by the bound on $\omega_{\ell-1}^{(i)}/\omega_\ell$, above}\\
&\qquad \leq
\frac{ C' \mu^* }{ (\gamma^*)^2 }\inparen{ \frac{ k}{r_t } + \frac{ \fronorm{ N}^2}{ \nu^2_{\ell-1} }} + \frac{ \Cnew^2 \log(n) } { (\gamma^*)^2 }\text{ by the definition of $\tau$ and gathering terms } \\
&\qquad \leq
 \frac{ \Cmu \mu^*}{(\gamma^*)^2} \inparen{ \frac{k}{r_t} + \frac{ k^8 \fronorm{N}^2 }{ \eps^2 \opnorm{M}^2 } } + \frac{\Cnew^2 \log(n)}{(\gamma^*)^2} \qquad \text{ by the fact that $\nu_{\ell-1} \geq \frac{ \eps \opnorm{M}}{2e\Cerr k^4}$.}
\end{align*}
for some constant $\Cmu$, which was the requirement in the statement of the lemma.
Thus, as long as the hypotheses of the current lemma hold, 
 Lemma \ref{lem:moritzsmoothGS} implies that with probability at least $1 - 1/n^4$, 
\[ \opnorm{H_\ell} \leq \tau \nu_{\ell-1} = \frac{ \gamma^* \nu_{\ell-1} }{\Cnew}. \]
This completes the proof of Lemma \ref{thm:noisealtminH}.
\end{proof}
Thus, using the inductive hypothesis \eqref{inc}, Lemmas \ref{thm:noisealtmin} 
and \ref{thm:noisealtminH}
imply that
as long as 
the requirements on $p'_t$ and $\mu_t$ in the statements of those lemmas are satisfied (which they are, by the choices in Lemma \ref{lem:parameterchasing}), with probability at least $1 - 2/n^4$
the noise matrices $\tilde{G}_\ell$ satisfy
\[ \opnorm{ \tilde{G}_\ell } \leq  \opnorm{ G_\ell } + \opnorm{ H_\ell} \leq  \omega_{\ell-1} + \frac{ \gamma^* \nu_{\ell-1} }{\Cnew } \leq \frac{ 2\gamma^* \nu_{\ell-1} }{\Cnew}, \] 
using \eqref{eq:Gell} in the final inequality.
Now, we wish to apply Theorem \ref{thm:induct}.  
The hypothesis \eqref{i1}, along with the conclusion \eqref{eq:assumeeps1} from Lemma \ref{lem:correlate}, immediately implies that
 \[ \opnorm{ E_{\ell-1}^t } \leq \frac{1}{k} \] for all $j \leq t,$
and so in particular the first requirement of Theorem \ref{thm:induct} is satisfied.
To satisfy the second requirement of Theorem \ref{thm:induct}, we must show that
\[ \opnorm{\tilde{G}_\ell} \leq \sigma_{r_j} \gamma_{r_j} / 32, \]
for which it suffices to show that
\begin{equation}\label{eq:needforinduct}
 \frac{ 2 \gamma^* \nu_{\ell-1} }{ \Cnew } \leq \sigma_{r_j} \gamma_{r_j} / 32. 
\end{equation}
From the definition of $\nu_{\ell-1}$, and the fact that $\gamma^* \leq \gamma_{r_j}$, we see that 
\eqref{eq:needforinduct} is satisfied for a sufficiently large choice of $\Cnew$.  Then Theorem \ref{thm:induct} implies that with probability at least $ 1- 2/n^4$, for any fixed $j$, we have 
\begin{align*}
\sigma_{r_j} \tan\Theta_\ell^j &\leq \sigma_{r_j} \max\inset{ \frac{8 \opnorm{ \tilde{G}_\ell }}{ \sigma_{r_j} \gamma_{r_j} } , 
\tan\Theta_{\ell-1}^j \exp( -\gamma_{r_j}/2 )}
\\
&\leq \max\inset{ \frac{16 \nu_{\ell-1}\gamma^* }{ \Cnew \gamma_{r_j} } , 
\nu_{\ell-1}\exp( -\gamma_{r_j}/2 )} \ \ \text{by \eqref{i1} and \eqref{eq:Gell}}\\
&\leq \nu_{\ell-1} \exp(-\gamma^*/2 ) \\
&\leq \nu_\ell
\end{align*}
provided $\Cnew$ is suitably large. 
A union bound over all $j$
 establishes \eqref{i1} for the next iteration of \AltLS.
After another union bound over 
\[ L_t = \frac{C}{\gamma^*} \log \inparen{ k \cdot \frac{ \sigma_{r_t} }{ \sigma_{r_t+1} + \eps\opnorm{M} } } \]
steps of \AltLS, for some constant $C$ depending on $\Cerr$,
we conclude that with probability at least $1 - 1/n^2$, for all $j$, 
\[ \sigma_{r_j}  \sin \theta( \xlj{R_{\ell-1}}, \xlj{U} )  
\leq \sigma_{r_j}  \tan \theta( \xlj{R_{\ell-1}}, \xlj{U} )  
\leq \frac{ \sigma_{r_t + 1} + \eps\opnorm{M} } { 2e\Cerr k^{4} }. \]
To establish the second conclusion, we note that we have already conditioned on the event that \eqref{eq:Gell} holds, and so we have
\begin{align*}
\opnorm{ \Mt - \Xt\Yt^\trans } &= \opnorm{ \Pi_X \Nt + \PiXPerp \xlt{M} + X_t(AX_t - Y_t)^\trans } \\
&\leq 
 \opnorm{ \Pi_X \Nt} + \opnorm{  \PiXPerp \xlt{M} }  +\opnorm{ X_t(AX_t - Y_t) } \\
&\leq \sigma_{r_t + 1} {\sin \theta( X_t ,\xlt{U} ) } + e\sum_{j=1}^t \sigma_{r_j} { \sin\theta ( \xlj{X_t},\xlj{U}) } + \opnorm{G_L} \\
&\leq ke \frac{\sigma_{r_t + 1} + \eps\opnorm{M}}{2e\Cerr k^{4}} +    \frac{ \gamma^* }{2e\Cerr\Cnew k^{4}} (\sigma_{r_t+1} + \eps\opnorm{M})  \qquad \text{by \eqref{eq:Gell} and the definition of $\nu_L$}\\
&\leq \frac{\sigma_{r_t+1} + \eps\opnorm{M}}{\Cerr k^{3}}.
\end{align*}
Above, we used the inequality
\begin{align*}
\opnorm{ \PiXPerp\xlt{M}} = \opnorm{ \sum_{j=1}^t \PiXPerp M^{(j)} } 
\leq \sum_{j=1}^t \opnorm{ \PiXPerp M^{(j)} } 
\leq \sum_{j=1}^t &\sigma_{r_{j-1}+1} \opnorm{ \PiXPerp U^{(j)} } \\
&\leq \sum_{j=1}^t \sigma_{r_{j-1}+1} \opnorm{ \Pi_{X^{(\leq j)}_\perp} U^{(\leq j)} } 
\leq \sum_{j=1}^t e\sigma_{r_j} { \sin \theta( X^{(\leq j)}, U^{(\leq j)} )}, 
\end{align*}
using \eqref{eq:gaps} in the final inequality.
Finally, the third conclusion, that \eqref{h3} holds, follows from the definition of \SmoothGS.


\section{Simulations}
In this section, we compare the performance of \Deflate to that of other fast algorithms for matrix completion.
In particular, we investigate the performance of \Deflate compared to the Frank-Wolfe (FW) algorithm analyzed in~\cite{JaggiS10}, and also compared to the naive algorithm which simply takes the SVD of the subsampled matrix $A_\Omega$.  
All of the code that generated the results in this section can be found online at \url{http://sites.google.com/site/marywootters}.

\subsection{Performance of \Deflate compared to FW and SVD}
To compare \Deflate against FW and SVD, we generated random rank $3$, $10,000 \times 10,000$ matrices, as follows.  First, we specified a spectrum, either $(1,1,1),  (1,1,.1),$ or $(1,1,.01)$, with the aim of observing the dependence on the condition number.   Next, we chose a random $10,000 \times 3$ matrix $U$ with orthogonal columns, and let $A = U \Sigma U^\trans$, where $\Sigma \in \R^{3\times 3}$ is the diagonal matrix with the specified spectrum.  We subsampled the matrix to various levels $m$, and ran all three algorithms on the samples, to obtain a low-rank factorization $A = XY^\trans$.

We implemented \Deflate, as described in Algorithm \ref{alg:deflate}, fixing $30,000$ observations per iteration; to increase the number of measurements, we increased the parameters $L_t$ (which were the same for all $t$).  For simplicity, we used a version of \AltLS which did not implement the smoothing in \SmoothGS or the median.   We implemented the Frank-Wolfe algorithm as per the pseudocode in Algorithm \ref{alg:fw}, with accuracy parameter $\eps = 0.05$.  We remark that decreasing the accuracy parameter did improve the performance of the algorithm (at the cost of increasing the running time), but did not change its qualitative dependence on $m$, the number of observations.  We implemented SVD via subspace iteration, as in Algorithm \ref{alg:powermethod}, with $L=100$.
\begin{algorithm}
\KwIn{Observed set of indices
$\Omega\subseteq [n]\times[n]$ of an unknown, trace $1$, symmetric matrix $A\in\R^{n\times
n}$ with entries $P_\Omega(A),$ and an accuracy parameter $\eps$.} 

Initialize $Z = vv^\trans$ for a random unit vector $v \in \R^n$.

\For{ $\ell = 1$ to $1/\eps$}{
	Let $w$ be the eigenvector corresponding to the largest eigenvalue of $-\nabla f(Z)$.

\comm{ $f(Z) := \frac{1}{2} \fronorm{ A_\Omega - Z_\Omega }^2$ }

	$\alpha_\ell \gets \frac{1}{\ell}$

	$Z \gets \alpha_\ell ww^\trans + (1 - \alpha_\ell) Z$
}
\KwOut{Trace $1$ matrix $Z$ with rank at most $1/\eps$. }
\caption{ $FW(P_\Omega(A),\Omega,\eps)$: Frank-Wolfe algorithm for Matrix Completion of symmetric matrices. }
\label{alg:fw}
\end{algorithm}

The error was measured in two ways: the Frobenius error $\fronorm{ A - XY^\trans}$, and the error between the recovered subspaces, $\sin\Theta( U, X )$.   The results are shown in Figure \ref{fig:allFight}.
\begin{figure}
\begin{center}
\includegraphics[width=.45\textwidth]{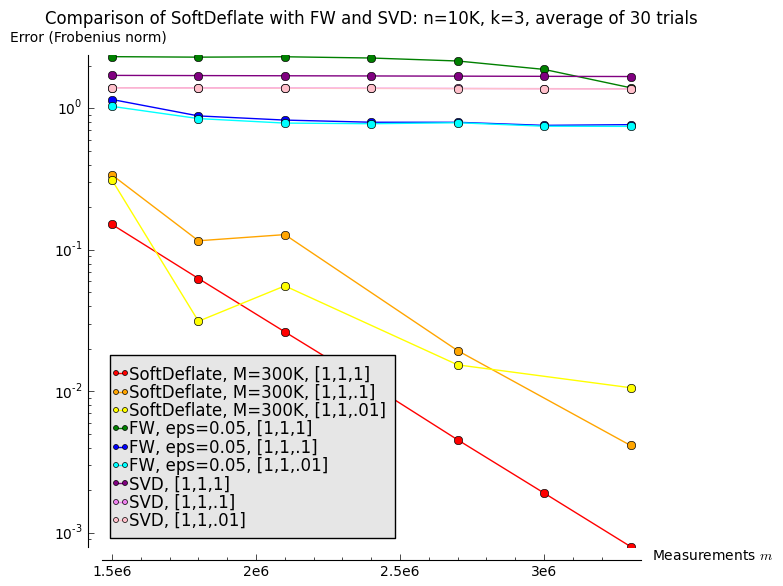}
\includegraphics[width=.45\textwidth]{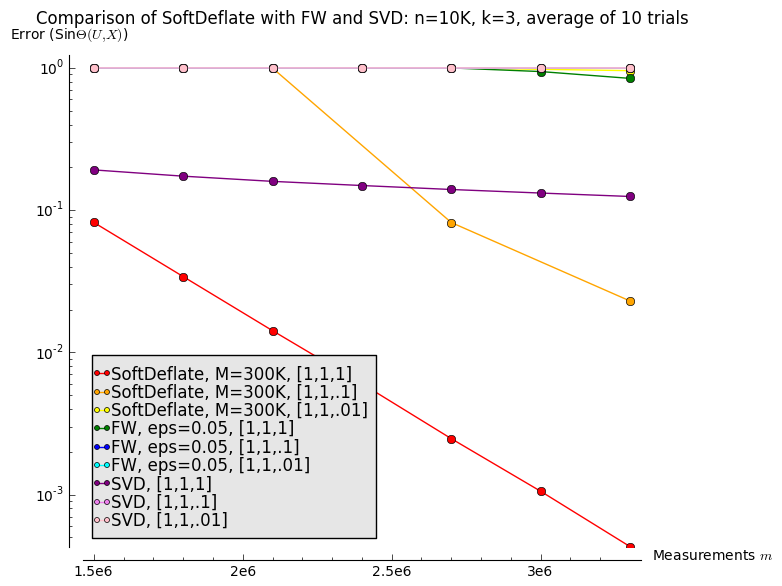}
\end{center}
\caption{Performance of \Deflate compared to FW and SVD.}
\label{fig:allFight}
\end{figure}
The experiments show that \Deflate significantly outperforms the other ``fast" algorithms in both metrics.  In particular, of the three algorithms, \Deflate is the only one which converges enough to reliably capture the singular vector associated with the $0.1$ eigenvalue;  none of the algorithms converge enough to find the $0.01$ eigenvalue with the number of measurements allowed.  The other two algorithms show basically no progress for these small values of $m$.  
To illustrate what happens when FW and SVD do converge, we repeated the same experiment for $n=1000$ and $k=2$; for this smaller value of $n$, we can let the number of measurements to get quite large compared to $n^2$.   We find that even though FW and SVD do begin to converge eventually, they are still outperformed by \Deflate.  The results of these smaller tests are shown in Figure \ref{fig:1000}.
\begin{figure}
\begin{center}
\includegraphics[width=.8\textwidth]{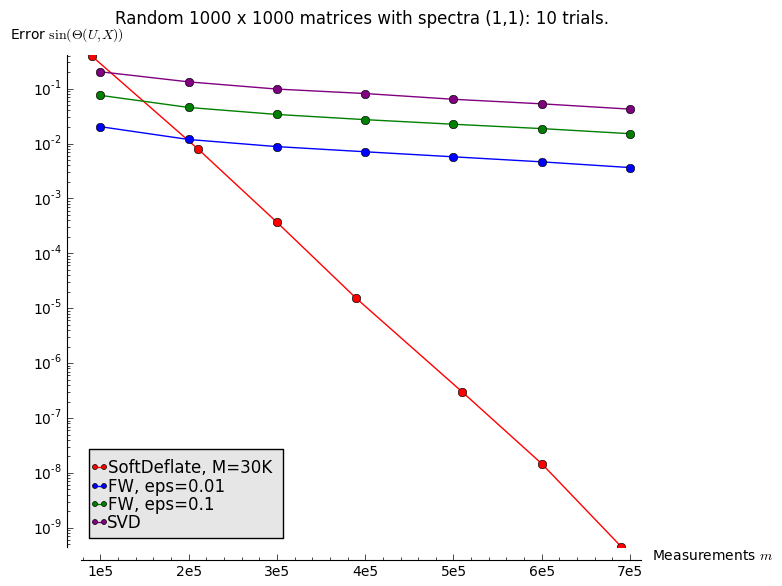}
\end{center}
\caption{Performance of \Deflate compared to FW and SVD on a rank-2, $1000\times 1000$ random matrix with spectrum $(1,1)$; average of 10 trials.}
\label{fig:1000}
\end{figure}

\subsection{Further comments on the Frank-Wolfe algorithm}
As algorithms like Frank-Wolfe are often cited as viable fast algorithms for the Matrix Completion problem, the reader may be surprised by the performance of FW depicted in Figures \ref{fig:allFight} and \ref{fig:1000}.  There are two reasons for this.  The first reason, noted in Section \ref{sec:related}, is that while FW is guaranteed to converge on the sampled entries, it may not converge so well on the actual matrix; the errors plotted above are with respect to the entire matrix.   
To illustrate this point, we include in Figure \ref{fig:fw} the results of an experiment showing the convergence of Frank-Wolfe (Algorithm \ref{alg:fw}), both on the samples and off the samples.  As above, we considered random $10,000 \times 10,000$ matrices with a pre-specified spectrum.  We fixed the number of observations at $5 \times 10^6$, and ran the Frank-Wolfe algorithm for 40 iterations, plotting its progress both on the observed entries and on the entire matrix.  
While the error on the observed entries does converge as predicted, the matrix itself does not converge so quickly.
\begin{figure}
\includegraphics[width=.8\textwidth]{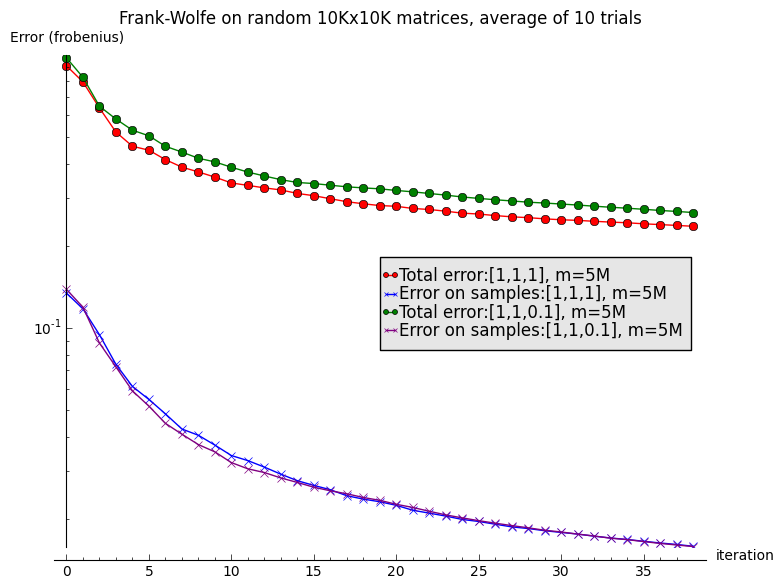}
\caption{Performance of the Frank-Wolfe algorithm on random $10,000 \times 10,000$, rank $3$ matrices with $5,000,000$ observations.  Average of $10$ trials.}
\label{fig:fw}
\end{figure}

The second reason that FW (and SVD) perform comparatively poorly above is that the convergence of FW, in the number of samples, is much worse than that of \Deflate.  More precisely, in order to achieve error on the order of $\eps$, the number of samples required by FW has a dependence of $1/\poly(\eps)$; in contrast, as we have shown, the dependence on $\eps$ of \Deflate is on the order of $\log(1/\eps)$.  In particular, because in the tests above there were never enough samples for FW to converge past the error level of $0.1$ in Figure \ref{fig:allFight}, FW never found the singular vector associated with the singular value $0.1$.  Thus, the error when measured by $\sin \Theta(U,X)$ remained very near to $1$ for the entire experiment.

\section*{Acknowledgements}
We thank the Simons Institute for Theoretical Computer Science at Berkeley, where part of this work was done.

\bibliographystyle{alpha}
\bibliography{moritz}

\appendix

\section{Dividing up $\Omega$}\label{app:subsample}
In this section, we show how to take a set $\Omega \subset [n] \times [n]$, so that each index $(i,j)$ is included in $\Omega$ with probability $p$, and return subsets $\Omega_1,\ldots,\Omega_L$ which follow a distribution more convenient for our analysis.  Algorithm \ref{alg:splitup} has the details.  Observe that the first thing that Algorithm \ref{alg:splitup} does is \em throw away \em samples from $\Omega$.  Thus, while this step is convenient for the analysis, and we include it for theoretical completeness, in practice it may be uneccessary---especially if the assumption on the distribution of $\Omega$ is an approximation to begin with.

\begin{algorithm}
\KwIn{ Parameters $p_1, \ldots, p_L$, and a set $\Omega \subset [n] \times [n]$ so that each index $(i,j)$ is included in $\Omega$ independently with probability $p = \sum_\ell p_\ell.$ }
\KwOut{ Subset $\Omega_1,\ldots, \Omega_L \subset \Omega$ so that each index $(i,j)$ is included in $\Omega_\ell$ independently with probability $p_\ell$, and so that all of the $\ell$ are independent.}

Choose 
\[ p' =  1 - \prod_{\ell=1}^L (1 - p_\ell). \]
Observe that $p' \leq p$.

Let $\Omega'$ be a set that includes each element of $\Omega$ independently with probability $p'/p$.

\Return{ \textsc{SubSample}( $p_1,\ldots, p_L$, $[n]\times[n]$, $\Omega'$ ) }

\caption{\textsc{SplitUp}: Split a set of indices $\Omega$ (as in the input to Algorithm \ref{alg:deflate}) into subsets $\Omega_1,\ldots, \Omega_t$ whose distributions are convenient for our analysis. }\label{alg:splitup}
\end{algorithm}

\begin{algorithm}
\KwIn{ Parameters $p_1,\ldots,p_L$, a universe $\cU$, and a set $\Omega \subset \cU$, so that each element $u \in \cU$ is included independently with probability $p = 1 - \prod_{\ell=1}^L (1 - p_\ell)$. }
\KwOut{ Set $\Omega_1, \ldots, \Omega_L \subset \cU$, so that each entry is included in $\Omega_\ell$ idependendently with probability $p_\ell$, and so that $\Omega_1,\ldots,\Omega_L$ are independent. }

For $r \in \{ 1, \ldots, L\}$, 
let 
\[q_r = \frac{1}{p} \sum_{S \subset \cU, |S| =r} \inparen{\prod_{\ell \in S} p_\ell }\inparen{ \prod_{\ell \not\in S} (1 - p_\ell) }. \]
Then $\sum_{r=1}^L q_r = 1$.

Initialize $L$ empty sets $\Omega_1,\ldots,\Omega_L$.

\For{$u \in \Omega$}
{
	Draw $r \in \{1, \ldots, L\}$ with probability $q_r$.

	Draw a random set $T \subset [L]$ of size $r$.

	Add $u$ to $\Omega_\ell$ for each $\ell \in T$.
}

\Return{ $\Omega_1,\ldots,\Omega_L$ }
\caption{\textsc{SubSample}: Divide a random set $\Omega$ into $L$ subsets $\Omega_1,\ldots,\Omega_L$}\label{alg:transform}
\end{algorithm}

The correctness of Algorithm \ref{alg:splitup} follows from the following lemma, about the properties of Algorithm \ref{alg:transform}.
\begin{lemma} Pick $p_1,\ldots,p_\ell \in [0,1]$, and suppose that $\Omega \subset \cU$ includes each $u \in \cU$  
independently with probability $p 1 - \prod_{\ell=1}^L (1 - p_\ell)$.  Then
the sets $\Omega_1,\ldots,\Omega_L$ returned by Algorithm \ref{alg:transform} are distributed as follows.  Each $\Omega_\ell$ is independent, and includes each $u \in \cU$ independently with probability $p_\ell$.
\end{lemma}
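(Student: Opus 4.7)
The plan is to decouple the argument into two essentially independent checks: (i) assignments are independent across distinct elements $u \in \cU$, and (ii) for each fixed $u$, the joint distribution of the indicator vector $(\mathbf{1}[u \in \Omega_1],\ldots,\mathbf{1}[u \in \Omega_L])$ matches the target product $\bigotimes_\ell \mathrm{Bern}(p_\ell)$. Claim (i) should follow by inspection of Algorithm~\ref{alg:transform}: the outer \texttt{for} loop over $u \in \Omega$ uses fresh independent random bits per iteration, and the events $\{u \in \Omega\}$ are independent across $u$ by hypothesis on $\Omega$, so the induced joint law factors over $u$.

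For (ii), fix $u$ and let $\nu$ denote the target product distribution on $\{0,1\}^L$ and $\nu_+$ the conditioning of $\nu$ on the indicator vector being nonzero. I would first record the identity $\nu(\text{nonzero}) = 1 - \prod_\ell(1-p_\ell) = p$, which is precisely $\Pr[u \in \Omega]$. So it suffices to check that, conditioned on $u \in \Omega$, the set $T$ produced by the algorithm has law $\nu_+$. By definition of $q_r$, $|T|$ under $\nu_+$ has distribution $r \mapsto q_r$, matching the algorithm's first sampling step exactly. The remaining step is to verify that, conditioned on $|T|=r$, the distribution over $r$-element subsets $S \subset [L]$ prescribed by the algorithm coincides with the $\nu_+$-conditional law $S \mapsto \frac{1}{pq_r}\prod_{\ell \in S} p_\ell\prod_{\ell \notin S}(1-p_\ell)$.

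Once these two conditional pieces are shown to combine correctly, a short Bayes-rule calculation then shows that for any nonempty $S \subset [L]$,
\[
\Pr[\text{$u$ is placed in exactly the sets indexed by }S]
= p \cdot q_{|S|} \cdot \frac{1}{pq_{|S|}}\prod_{\ell \in S} p_\ell\prod_{\ell \notin S}(1-p_\ell)
= \prod_{\ell \in S} p_\ell\prod_{\ell \notin S}(1-p_\ell),
\]
which is the $\nu$-mass of $S$, and for $S=\emptyset$ the probability is $1-p = \prod_\ell(1-p_\ell)$, again matching $\nu$. Combined with (i), this yields that $\Omega_\ell$ is a $\mathrm{Bern}(p_\ell)$-product across $\cU$, with $\Omega_1,\ldots,\Omega_L$ jointly independent.

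The main obstacle is purely interpretive: the phrase ``random set $T \subset [L]$ of size $r$'' is ambiguous, and if read as \emph{uniform} on $r$-subsets then the proof collapses unless all $p_\ell$ are equal, since then the conditional-on-size law would mismatch $\nu_+$. I would therefore state explicitly that $T$ must be drawn from the $\nu_+$-conditional distribution $S \mapsto \prod_{\ell \in S} p_\ell \prod_{\ell \notin S}(1-p_\ell)\big/(pq_r)$ on size-$r$ subsets; this is the natural reading that makes the algorithm correct, and with it the proof reduces to the elementary Bayes computation above. Once that is settled, the rest of the argument is a routine verification.
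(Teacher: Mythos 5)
Your proof takes essentially the same route as the paper's: condition on $u \in \Omega$, use the factorization of the target distribution into a size marginal $(q_r)$ and a conditional-on-size law, and then run a Bayes-style calculation to recover the full product measure; independence across $u$ follows from the per-$u$ independence of the source set $\Omega$ and the per-$u$ fresh randomness inside the loop, exactly as the paper argues via the events $E(u,S)$.

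The point you flag about the phrase ``random set $T \subset [L]$ of size $r$'' is a genuine observation, and it is worth spelling out: the paper's own proof silently uses the non-uniform reading. The step where the paper writes
\[
\PR{\text{A random subset of }[L]\text{ of size }r\text{ is precisely }S} = \PRD{E(u,S)\mid N_u = r}
\]
is true only if the sampler draws $T$ from the $\cD$-conditional law $S \mapsto \prod_{\ell \in S}p_\ell \prod_{\ell\notin S}(1-p_\ell)/(p\,q_r)$ on $r$-subsets. If instead $T$ were uniform, the left side would be $1/\binom{L}{r}$ for every $|S|=r$, which cannot match the (generally $S$-dependent) right side unless all $p_\ell$ are equal; since the probabilities $p_t$, $p_t'$ in the paper are in general distinct, the uniform reading would actually invalidate the construction. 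So you are right that the algorithm as written is under-specified, and your proof is the same as the paper's once that reading is fixed. You should state this interpretation explicitly, as you do, rather than leaving it implicit as the paper does.

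Two small presentational remarks. First, the paper goes one step further than you sketch: after establishing $\PRA{E(u,S)} = \PRD{E(u,S)}$ for each fixed $u$ and $S$, it explicitly tensors over $u$ by noting that the events $\{E(u,S_u)\}_u$ are independent under both distributions, and concludes that the law of the entire tuple $(\Omega_1,\ldots,\Omega_L)$ agrees with $\cD$ on every outcome. Your item (i) covers the independence claim, but to complete the argument you need to combine (i) and (ii) into a statement about the joint law of the full vector $(\Omega_1,\ldots,\Omega_L)$, not just the per-$u$ marginals; the paper's final display does exactly that and you should include the analogous sentence. Second, there is an unrelated typo in the paper's definition of $q_r$ (the sum should range over $S \subset [L]$, not $S\subset\cU$); your reading is the correct one.
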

\begin{proof}
	Let $\cD$ denote the distribution we would like to show that that $\Omega_\ell$ follow; so we want to show that the sets returned by Algorithm \ref{alg:transform} are distributed according to $\cD$.  
\newcommand{\PRA}[1]{\mathbb{P}_{\mathcal{A}}\inbrac{#1}}
\newcommand{\PRD}[1]{\mathbb{P}_{\mathcal{D}}\inbrac{#1}}
Let $\PRA{\cdot}$ denote the probability of an event occuring in Algorithm \ref{alg:transform}, and let and $\PRD{\cdot}$ denote the probability of an event occuring under the target distribution $\cD$.  
Let $N_u$ be the random variable that counts the number of times $u$ occurs between $\Omega_1,\ldots, \Omega_\ell$.  Then observe that by definition,
\[ q_r = \PRD{ N_u = r | N_u \geq 1}, \]
and
\[ p = \PRD{ N_u \geq 1}. \]
We aim to show $\PRA{\cdot} = \PRD{\cdot}$.
First, fix $u \in \cU$, and fix any set $S \subset [L]$, and consider the event
\[ E(u,S) = \inparen{ \forall \ell \in S, u \in \Omega_\ell } \wedge \inparen{ \forall \ell \not\in S , u \not\in \Omega_\ell }. \]
We compute $\PRA{E(u,S)}$.  
\begin{align*}
\PRA{E(u,S)} &= \PRA{u \in \Omega} \sum_{r=1}^L q_r \PRA{ \text{The random set $T$ of size $r$ is precisely $S$} } \\
&= \PRD{ N_u \geq 1} \sum_{r=1}^L \PRD{ N_u = r | N_u \geq 1 } \PR{ \text{ A random subset of $[L]$ size $r$ is precisely $S$} } \\
&= \sum_{r=1}^L \PRD{N_u = r} \PR{ \text{A random subset of $[L]$ of size $r$ is precisely $S$} }\\
&= \sum_{r=1}^L \PRD{N_u = r} \PRD{ E(u, S) | N_u = r } \\
&= \PRD{ E(u,S) }.
\end{align*}
Next, we observe that for any fixed $S$, the events $\inset{ E(u,S) }_{u \in \cU}$ are independent under the distribution induced by Algorithm \ref{alg:transform}.
This follows from the fact that in all of the random steps (including the generation of $\Omega$ and within Algorithm \ref{alg:transform}), the $u \in \cU$ are treated independently.   Notice that these events are also independent under $\cD$ by definition.

Now, 
for any instantiation $\vec{\Omega'} = (\Omega_1',\ldots,\Omega_L')$ of the random variables $(\Omega_1,\ldots,\Omega_L)$, consider the event
\[ E(\vec{\Omega}') = \forall \ell, \Omega_\ell = \Omega_\ell'. \]
We have
\begin{align*}
\PRA{ E(\vec{\Omega}') } &= \PRA{ \forall u, E( u, \inset{ \ell \suchthat u\in \Omega_\ell' } ) } \\
&= \prod_{u \in \cU} \PRA{ E(u, \inset{ \ell \suchthat u \in \Omega_\ell' } ) } \quad \text{ by independence in Alg. \ref{alg:transform}} \\
&= \prod_{u \in \cU} \PRD{ E(u, \inset{ \ell \suchthat u \in \Omega_\ell' } ) } \quad \text{ by the above derivation } \\
&= \PRD{ E( \vec{\Omega}' ) } \quad \text{ by independence under $\cD$ }
\end{align*}
Thus the probability of any outcome $\vec{\Omega}'$ is the same under $\cD$ and under Algorithm \ref{alg:transform}, and this completes the proof of the lemma.
\end{proof}

\section{Useful statements}
In this appendix, we collect a few useful statements upon which we rely. 
\subsection{ Coherence bounds }
First, we record some consequences of the bound
\eqref{eq:defofmustar} on the coherence of $A$.   
We always have
\begin{equation}\label{eq:incoherenceofA}
\infnorm{A} \leq \infnorm{M} + \infnorm{N} \leq \max_{i,j}| e_i^\trans U \Lambda_U U^\trans e_j| + \infnorm{N} \leq \sigma_1 \frac{\mu^* k }{n} +  \frac{ \mu^* \fronorm{N}}{n}  \leq \frac{ \mu^* }{n} \inparen{ k\sigma_1 + \Delta },
\end{equation}
and similarly
\begin{equation}\label{eq:incoherenceofA2}
\max_i \twonorm{ e_i^\trans A } \leq \sqrt{\frac{\mu^*}{n}} \inparen{ \sqrt{k} \sigma_1 + \Delta }.
\end{equation}
It will also be useful to notice that since
$\twonorm{e_i^\trans U^{(>t)}} \leq
\twonorm{e_i^\trans U},$ \eqref{eq:defofmustar} implies that for all $t$, 
\begin{equation}\label{eq:incNt}
\infnorm{ \Nt } \leq \infnorm{M^{(>t)}} + \infnorm{N} \leq \frac{\mu^*}{n} \inparen{ k\sigma_{r_t+1} + \Delta}.
\end{equation}


\subsection{ Perturbation statements}
Next, we will use the following lemma about perturbations of singular values, due to Weyl.
\begin{lemma}\label{lem:weyl} 
Let $N, E \in \R^{n \times n}$, and let $\widetilde{N} = N + E$.  Let
$\sigma_1 \geq \sigma_2 \geq \cdots \geq \sigma_n$ denote the singular values
of $N$, and similarly let $\widetilde{\sigma_i}$ denote the singular values of
$\widetilde{N}$.  Then for all $i,$
	$\inabs{ \sigma_i - \widetilde{\sigma_i} } \leq \opnorm{E}. $
\end{lemma}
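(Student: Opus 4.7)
The plan is to prove this by invoking the Courant–Fischer min–max characterization of singular values and then using the triangle inequality to transfer bounds between $N$ and $\widetilde N = N + E$. Recall that for any matrix $B \in \R^{n\times n}$,
\[ \sigma_i(B) = \max_{\substack{S \subseteq \R^n \\ \dim(S) = i}} \min_{\substack{x \in S \\ \twonorm{x} = 1}} \twonorm{Bx} = \min_{\substack{S \subseteq \R^n \\ \dim(S) = n-i+1}} \max_{\substack{x \in S \\ \twonorm{x} = 1}} \twonorm{Bx}. \]
This is the standard variational principle, applied to the Hermitian matrix $B^\trans B$ whose eigenvalues are $\sigma_i(B)^2$, and then taking square roots.

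The key observation is that for any unit vector $x$, the triangle inequality gives
\[ \bigl|\twonorm{\widetilde N x} - \twonorm{N x}\bigr| \leq \twonorm{(\widetilde N - N) x} = \twonorm{Ex} \leq \opnorm{E}. \]
Thus $\twonorm{\widetilde N x} \leq \twonorm{Nx} + \opnorm{E}$ and $\twonorm{Nx} \leq \twonorm{\widetilde N x} + \opnorm{E}$ uniformly in $x$. Substituting either inequality inside the min–max formula and pulling the constant $\opnorm{E}$ out of the optimization gives $\widetilde\sigma_i \leq \sigma_i + \opnorm{E}$ and $\sigma_i \leq \widetilde\sigma_i + \opnorm{E}$, respectively, which together yield $|\sigma_i - \widetilde\sigma_i| \leq \opnorm{E}$.

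I do not foresee any real obstacle: the only step that requires any thought is justifying the variational formula, which is a textbook consequence of the spectral theorem applied to $N^\trans N$. Alternatively, one could deduce the statement directly from Weyl's inequality for eigenvalues of Hermitian matrices applied to the dilation $\begin{pmatrix} 0 & N \\ N^\trans & 0 \end{pmatrix}$ (whose eigenvalues are $\pm \sigma_i(N)$), but the min–max route above is more self-contained and avoids introducing new notation.
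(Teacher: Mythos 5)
Your proof is correct. The paper itself does not prove Lemma~\ref{lem:weyl}: it is stated in the appendix as a known perturbation bound attributed to Weyl and used as a black box. Your min--max argument is the standard textbook derivation, and the one step that carries real content --- that for any unit vector $x$ one has $\bigl|\twonorm{\widetilde N x} - \twonorm{Nx}\bigr| \le \twonorm{Ex} \le \opnorm{E}$, and that this additive constant passes through both the inner $\min$ and outer $\max$ --- is handled correctly. The alternative you mention, applying Weyl's eigenvalue inequality to the Hermitian dilation $\begin{pmatrix} 0 & N \\ N^\trans & 0 \end{pmatrix}$, is the other standard route and yields the same conclusion with essentially the same effort; the min--max version you chose is indeed the more self-contained of the two since it avoids introducing the dilation and sorting its $\pm\sigma_i$ eigenvalues.
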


In order to compare the singular vectors of a matrix $A$ with those of a perturbed version $\tilde{A}$, we will find the following theorem helpful.
We recall that for subspaces $U, V$, $\sin\theta(U,V)$ refers to the sine of the \em principal angle \em between $U$ and $V$.  (See \cite{StewartS90} for more on principal angles).
\begin{theorem}[Thm. 4.4 in \cite{StewartS90}]\label{thm:sintheta}
Suppose that $A$ has the singular value decomposition
\[ A = \begin{bmatrix} U_1 & U_2 \end{bmatrix} \, \begin{bmatrix} \Sigma_1 & \\ &\Sigma_2 \end{bmatrix} \, \begin{bmatrix} V_1^\trans \\ V_2^\trans \end{bmatrix},\]
and let $\tilde{A} = A + E$ be a perturbed matrix with SVD
\[ A = \begin{bmatrix} \tilde{U}_1 & \tilde{U}_2 \end{bmatrix} \, \begin{bmatrix} \tilde{\Sigma}_1 & \\ &\tilde{\Sigma}_2 \end{bmatrix} \, \begin{bmatrix} \tilde{V}_1^\trans \\ \tilde{V}_2^\trans \end{bmatrix}.\]
Let
\[ R = A\tilde{V}_1 - \tilde{U}_1\tilde{\Sigma}_1 \qquad \text{and}\qquad S = A^{\trans} \tilde{U}_1 -\tilde{V}_1\tilde{\Sigma}_1.\]
Suppose there are numbers $\alpha,\delta >0 $ so that
$\sigma_{\min}(\tilde{\Sigma_1}) \geq \alpha + \delta$ and
$\sigma_{\max}(\Sigma_2) \leq \alpha.$ Then,
\[\max\inset{ \sin\Theta( U_1, V_1 ), \sin\Theta( U_2, V_2 ) } \leq 
\frac{ \max\Set{ \opnorm{R}, \opnorm{S} }}{\delta}.\]
\end{theorem}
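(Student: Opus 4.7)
The plan is to reduce the perturbation bound to the inversion of a coupled Sylvester-type system for the cross-inner-product matrices $P := U_2^\trans \tilde U_1$ and $Q := V_2^\trans \tilde V_1$; their operator norms are exactly the principal-angle quantities $\sin\Theta(U_1,\tilde U_1)$ and $\sin\Theta(V_1,\tilde V_1)$ to be bounded (reading the statement in its standard Wedin form). The first step is purely algebraic: left-multiplying the defining identity $R = A\tilde V_1 - \tilde U_1\tilde\Sigma_1$ by $U_2^\trans$ and using $U_2^\trans A = \Sigma_2 V_2^\trans$ (a consequence of the block SVD and $U_1^\trans U_2 = 0$) yields $U_2^\trans R = \Sigma_2 Q - P\,\tilde\Sigma_1$. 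A symmetric computation starting from $S = A^\trans\tilde U_1 - \tilde V_1\tilde\Sigma_1$ and using $V_2^\trans A^\trans = \Sigma_2 U_2^\trans$ gives $V_2^\trans S = \Sigma_2 P - Q\,\tilde\Sigma_1$. The right-hand sides satisfy $\|U_2^\trans R\| \leq \|R\|$ and $\|V_2^\trans S\| \leq \|S\|$, so the whole question reduces to inverting a linear system whose coefficients depend only on $\Sigma_2$ and $\tilde\Sigma_1$.

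The second step is to decouple and invert. I would combine the two identities by stacking: set $W := \left(\begin{smallmatrix} P \\ Q \end{smallmatrix}\right)$ and $\hat\Sigma_2 := \left(\begin{smallmatrix} 0 & \Sigma_2 \\ \Sigma_2 & 0 \end{smallmatrix}\right)$, so that the two equations merge into the single Sylvester equation
\[
 \hat\Sigma_2\, W \;-\; W\,\tilde\Sigma_1 \;=\; \begin{pmatrix} U_2^\trans R \\ V_2^\trans S \end{pmatrix}\mper
\]
The spectrum of $\hat\Sigma_2$ equals $\pm\sigma(\Sigma_2) \subseteq [-\alpha,\alpha]$, while $\sigma(\tilde\Sigma_1) \subseteq [\alpha+\delta,\infty)$, so the two spectra are separated by at least $\delta$. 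Since both $\hat\Sigma_2$ and $\tilde\Sigma_1$ are Hermitian, the Sylvester operator $X \mapsto \hat\Sigma_2 X - X\tilde\Sigma_1$ is invertible with operator-to-operator-norm inverse bounded by $1/\delta$ (the standard Sylvester estimate for normal coefficients with separated spectra). Since $\max\{\|P\|,\|Q\|\} \leq \|W\|$ and the stacked right-hand side has operator norm at most (up to a universal factor) $\max\{\|R\|,\|S\|\}$, assembling these bounds yields the claimed estimate.

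The hard part will be preserving the gap $\delta$ cleanly in the operator norm of the Sylvester inverse, rather than losing a factor. The natural but naive move is to square the system, deriving $\Sigma_2^2 P - P\tilde\Sigma_1^2 = \Sigma_2 V_2^\trans S + U_2^\trans R\,\tilde\Sigma_1$; but then the relevant gap becomes $(\alpha+\delta)^2 - \alpha^2 = 2\alpha\delta+\delta^2$ while the right-hand side simultaneously inflates by a factor of $\|\tilde\Sigma_1\|$, which may be much larger than $\alpha$. The stacking trick circumvents this precisely because $\hat\Sigma_2$ is \emph{linear} in $\Sigma_2$, keeping the gap exactly $\delta$. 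The remaining work is careful bookkeeping of the operator norms under stacking --- in particular, relating $\|W\|$ to $\max\{\|P\|,\|Q\|\}$ and the stacked residual to $\max\{\|R\|,\|S\|\}$ --- where one may pick up a harmless constant (say, $\sqrt 2$) that can be absorbed or removed by a sharper block argument that decomposes the stacked problem into $2 \times 2$ sub-blocks in the joint eigenbasis of $\Sigma_2$ and $\tilde\Sigma_1$.
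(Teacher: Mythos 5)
First, note that the paper does not prove this theorem: it imports it verbatim from Stewart and Sun (and in fact carries a small typo in the conclusion, where $\sin\Theta(U_1,V_1)$ and $\sin\Theta(U_2,V_2)$ should read $\sin\Theta(U_1,\tilde U_1)$ and $\sin\Theta(V_1,\tilde V_1)$). So there is no in-paper proof to compare against; I am assessing your argument on its own.

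Your algebraic reduction is exactly right and is indeed the core of the standard proof: left-multiplying the residual identities by $U_2^\trans$ and $V_2^\trans$ gives the coupled system $\Sigma_2 Q - P\tilde\Sigma_1 = U_2^\trans R$ and $\Sigma_2 P - Q\tilde\Sigma_1 = V_2^\trans S$, where $P = U_2^\trans\tilde U_1$, $Q = V_2^\trans\tilde V_1$. The stacking step is where you lose ground, and the $\sqrt2$ you flag is not removable along the route you sketch. The Sylvester bound for $W$ is fine (constant $1$ for Hermitian coefficient matrices with spectra in disjoint intervals), but $\bigl\|\bigl(\begin{smallmatrix}U_2^\trans R\\ V_2^\trans S\end{smallmatrix}\bigr)\bigr\|$ is genuinely bounded by $\sqrt{\|R\|^2+\|S\|^2}$ and not by $\max\{\|R\|,\|S\|\}$ (consider $R = S$ nonzero, where the $\sqrt2$ is attained). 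Your proposed repair via a $2\times 2$ sub-block decomposition in the joint eigenbasis does give the entrywise bound $|p_{ij}|,|q_{ij}|\le \max\{|r_{ij}|,|s_{ij}|\}/\delta$, but the Hadamard-multiplier structure of the inverse kernels does not let you pass from entrywise bounds to the operator norm without again losing a constant. So as written the proposal proves the theorem only up to $\sqrt2$, which falls short of the stated bound.

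The fix is to not stack at all. From your two identities, the lower bound $\|P\tilde\Sigma_1\| \ge \sigma_{\min}(\tilde\Sigma_1)\|P\| \ge (\alpha+\delta)\|P\|$ together with $\|\Sigma_2 Q\|\le\alpha\|Q\|$ gives $(\alpha+\delta)\|P\| \le \alpha\|Q\| + \|R\|$, and symmetrically $(\alpha+\delta)\|Q\| \le \alpha\|P\| + \|S\|$. Setting $m = \max\{\|P\|,\|Q\|\}$ and using whichever of these two has the larger left side yields $m(\alpha+\delta) \le \alpha m + \max\{\|R\|,\|S\|\}$, hence $m\delta \le \max\{\|R\|,\|S\|\}$. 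This closes the loop with the exact constant, holds in every unitarily invariant norm (since $\|PD\| \ge \sigma_{\min}(D)\|P\|$ does), and avoids the Sylvester machinery altogether. Your coupled identities were the right object; the stacking was a step sideways that needlessly introduced the $\sqrt2$.
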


We will also use the fact that if the angle between (the subspaces spanned by) two matrices is small, then there is some unitary transformation so that the two matrices are close.
\begin{lemma}\label{lem:sinthetaconversion}
Let $U,V \in \R^{n\times k}$ have orthonormal columns, and suppose that 
$\sin\theta(U,V) \leq \eps$
for some $\eps <1/2$.
Then there is some unitary matrix $Q \in \R^{k\times k}$ so that
$\opnorm{ UQ - V } \leq 2\eps.$
\end{lemma}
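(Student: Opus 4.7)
The plan is to use the standard orthogonal Procrustes construction: build $Q$ from the SVD of $U^\trans V$, then expand $\opnorm{UQ-V}^2$ and translate everything into the language of principal angles.

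First I would write down the SVD $U^\trans V = W\Sigma Z^\trans$, where $W,Z \in \R^{k\times k}$ are orthogonal and $\Sigma = \diag(\cos\theta_1,\dots,\cos\theta_k)$ with $\theta_i$ the principal angles between $\mathcal{R}(U)$ and $\mathcal{R}(V)$. I first need to reconcile the hypothesis, which is stated in terms of $\sin\theta(U,V) = \opnorm{U_\perp^\trans V}$, with these principal angles. Observe that
\[
\opnorm{U_\perp^\trans V}^2 = \opnorm{V^\trans(I - UU^\trans) V} = \opnorm{I - V^\trans U U^\trans V},
\]
and the eigenvalues of $V^\trans U U^\trans V$ are the squared singular values of $U^\trans V$, namely $\cos^2\theta_i$. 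Hence $\opnorm{U_\perp^\trans V} = \max_i \sin\theta_i$, so the hypothesis says $\sin\theta_i \leq \eps$ for every $i$.

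Next I would set $Q := W Z^\trans$, which is unitary, and expand
\[
\opnorm{UQ - V}^2 = \opnorm{(UQ - V)^\trans(UQ - V)} = \opnorm{\,2I - Q^\trans U^\trans V - V^\trans U Q\,}.
\]
Substituting $Q^\trans U^\trans V = Z W^\trans \cdot W \Sigma Z^\trans = Z \Sigma Z^\trans$ (a symmetric matrix) collapses the right-hand side to $2\opnorm{I - Z\Sigma Z^\trans} = 2(1 - \min_i \cos\theta_i) = 2(1 - \cos\theta_{\max})$, where $\theta_{\max} := \max_i \theta_i$.

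Finally I would apply the elementary identity $1 - \cos\theta = \sin^2\theta / (1 + \cos\theta) \leq \sin^2\theta$ (valid for any $\theta \in [0,\pi/2]$, which is ensured because $\sin\theta_{\max} \leq \eps < 1/2$ keeps $\cos\theta_{\max} > 0$) to conclude $\opnorm{UQ - V}^2 \leq 2\sin^2\theta_{\max} \leq 2\eps^2$, i.e. $\opnorm{UQ - V} \leq \sqrt{2}\,\eps \leq 2\eps$. There is no real obstacle: the only point that requires a moment of care is the translation between the paper's $\sin\theta$ notation and the principal angles $\theta_i$ that arise from the SVD of $U^\trans V$, which is handled in the first step.
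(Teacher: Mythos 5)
Your proof is correct, and it is essentially the same argument as the paper's: both choose $Q$ to be the orthogonal Procrustes/polar factor of $U^\trans V$ and rely on the translation of $\sin\theta(U,V)=\opnorm{U_\perp^\trans V}$ into the singular values $\cos\theta_i$ of $U^\trans V$. The only difference is the finish: the paper writes $V = U(U^\trans V)+\Pi_{U_\perp}V$ and applies the triangle inequality to get a bound of $(1-\sqrt{1-\eps^2})+\eps\le 2\eps$, whereas you compute the Gram matrix $(UQ-V)^\trans(UQ-V)=2(I-Z\Sigma Z^\trans)$ exactly and obtain the slightly sharper $\sqrt{2}\,\eps$.
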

\begin{proof}
We have 
$V = \Pi_U V + \Pi_{U_\perp} V = U(U^\trans V) + \Pi_{U_\perp} V.$
Since $\sin \theta(U,V) \leq \eps$, we have
$\opnorm{\Pi_{U_\perp} V} \leq \eps,$
and
$\sigma_k( U^T V) = \cos \theta(U,V) \geq \sqrt{ 1 - \eps^2 }.$
Thus, we can write $U^T V = Q + E,$
where
$\opnorm{E} \leq 1 - \sqrt{1 -\eps^2}.$
The claim follows from the triangle inequality.
\end{proof}

\subsection{Subspace Iteration}
\sectionlabel{subsit}
Our algorithm uses the following standard version of the well-known Subspace
Iteration algorithm---also known as Power Method.

\begin{algorithm}[h]
\caption{\SI$(A, k, L)$ (Subspace Iteration)}
\label{alg:powermethod}
\KwIn{ Matrix $A$, target rank $k$, number of iterations $L$ }
$S_0 \in \R^{n \times k} \gets $ random matrix with orthogonal rows\;
\For{ $\ell = 1,\ldots, L$}
{
	$R_\ell \gets A S_{\ell-1}$\;
	$S_\ell \gets \text{\GS}(R_\ell)$\;
}
\For{ $i=1, \ldots, k$}
{
	$\tilde{\sigma}_i^2 \gets (R_L)_i^\trans A^\trans A (R_L)_i$
\comm{$(R_L)_i$ is the $i$-th column of $R_L$}
}
\Return{ $(R_L, \vec{\tilde{\sigma}} )$\;}
\KwOut{ A matrix $R \in \R^{n \times k}$ approximating the top $k$ singular vectors of $A$, and estimates $\tilde{\sigma}_1,\ldots, \tilde{\sigma}_k$ of the singular values.}
\end{algorithm}

We have the following theorem about the convergence of \SI. 
\begin{theorem}\label{thm:subspaceiteration}
Let $A \in \R^{n \times n}$ be any matrix, with singular values $\sigma_1 \geq \sigma_2 \geq \cdots \geq \sigma_n$.
Let $R_L \in \R^{n \times k}$ be the matrix with orthonormal columns returned after $L$ iterations of \SI (Algorithm \ref{alg:powermethod}) with target rank $k$.
for some suitably small parameter $\gamma < 1$.
Then the values $\tilde{\sigma}_i = (R_i)^\trans A R_i$ satisfy
\[ | \tilde{\sigma}_i - \sigma_i | \leq \sigma_i \inparen{ 1 - (1 - \gamma)^k} + 2 n \sigma_1 \inparen{ 1 - \gamma}^L. \]
In particular, if $\gamma = o(1/k)$ and if $L = C \log(n)/\gamma$
then with probability $1-1/\poly(n),$
\[ |\tilde{\sigma}_i - \sigma_i | \lesssim \frac{\sigma_1}{n} + \sigma_i k \gamma \lesssim \sigma_1 k \gamma. \]
\end{theorem}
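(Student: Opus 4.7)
The plan is to analyze \SI as a standard orthogonal (subspace) iteration, tracking how well $R_L$ aligns with the top-$k$ right singular vectors of $A$, and then showing that the column-wise Rayleigh quotients $\tilde{\sigma}_i^2 = (R_L)_i^\trans A^\trans A (R_L)_i$ inherit good singular-value estimates from this alignment. Throughout, write $A = U\Sigma V^\trans$ for the SVD with $V_{[k]}$ the top-$k$ right singular vectors.

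First I would handle the random initialization. Since $S_0 \in \R^{n\times k}$ has orthonormal columns sampled uniformly at random, a standard smallest-singular-value bound (e.g.\ from Rudelson–Vershynin, or a direct argument via a Gaussian proxy) gives $\sigma_{\min}(V_{[k]}^\trans S_0) \geq 1/\poly(n)$ with probability $1-1/\poly(n)$. Condition on this event for the rest of the argument; this explains the $\poly(n)$ prefactor in the convergence bound.

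Next I would carry out the contraction step. Standard subspace-iteration analysis (decomposing $S_0$ into its components along $V_{[k]}$ and $V_{[k]}^\perp$ and applying $A$ repeatedly) yields, after $L$ steps,
\[
\sin\theta(\mathcal{R}(R_L),\, V_{[k]}) \;\leq\; \frac{\sigma_{k+1}}{\sigma_k}\cdot \frac{1}{\sigma_{\min}(V_{[k]}^\trans S_0)} \cdot \inparen{\frac{\sigma_{k+1}}{\sigma_k}}^{L-1}
\;\leq\; n\,(1-\gamma)^L,
\]
where $\gamma$ is read as (a lower bound on) the effective spectral gap separating the targeted block from the tail; when $\gamma$ is small I would use the standard generalization that bounds each individual principal angle by $(1-\gamma)^L$ up to the $\poly(n)$ initialization factor. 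At this point $R_L$ spans a subspace within $n(1-\gamma)^L$ of $V_{[k]}$ in principal angle, and the ordering of its columns tracks the ordering of the singular values.

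Finally, I would convert the subspace bound into a per-coordinate singular-value bound. Expand $(R_L)_i = \sum_j c_{ij} v_j$, so that $\tilde{\sigma}_i^2 = \sum_j c_{ij}^2 \sigma_j^2$. Split the sum into indices $j$ with $\sigma_j \geq (1-\gamma)^k \sigma_i$ (the ``nearby block'') and the rest. The former contribute a value in $[\,(1-\gamma)^{2k}\sigma_i^2,\, \sigma_i^2(1+O(k\gamma))\,]$, accounting for the $\sigma_i(1-(1-\gamma)^k)$ term; the latter are controlled by the subspace-iteration contraction above, since any mass of $(R_L)_i$ outside the top block must have come through directions that contracted by $(1-\gamma)^L$, and contributes at most $n\sigma_1(1-\gamma)^L$ in operator norm. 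Taking a square root and using $\sqrt{a^2+b^2}\leq a+b$ yields the stated bound $|\tilde{\sigma}_i - \sigma_i| \leq \sigma_i(1-(1-\gamma)^k) + 2n\sigma_1(1-\gamma)^L$, and substituting $\gamma=o(1/k)$ and $L = C\log(n)/\gamma$ gives the corollary.

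The main obstacle is the third step: subspace iteration natively guarantees closeness of the \emph{span} $\mathcal{R}(R_L)$ to $V_{[k]}$, but the estimates $\tilde{\sigma}_i$ depend on the \emph{individual} columns $(R_L)_i$. Controlling them column-by-column requires carefully separating singular values that cluster within a factor $(1-\gamma)^k$ of $\sigma_i$ (which may be mixed into column $i$ and produce the unavoidable $\sigma_i(1-(1-\gamma)^k)$ slack) from those that are genuinely separated (which must have been suppressed by the geometric contraction). Making this split rigorous, while only paying the $\poly(n)$ factor from the random start once, is where the bookkeeping is delicate.
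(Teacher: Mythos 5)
Your plan follows the same broad route as the paper: control the random-start inner products with the top singular space, invoke geometric contraction of subspace iteration, and then convert a subspace-level bound into a per-column singular-value estimate by expanding $(R_L)_i$ in the singular basis and separating the contribution from singular values ``near'' $\sigma_i$ from the rest. The paper's proof has the same skeleton. However, there are two concrete gaps in your step 3, and they both collapse to the same missing ingredient.

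First, the split by the one-sided threshold $\sigma_j \geq (1-\gamma)^k \sigma_i$ cannot be right as written: that condition is satisfied by every $j \leq i$, in particular by $j = 1$, so your ``nearby block'' always contains the direction corresponding to $\sigma_1$. The claimed upper bound $\sigma_i^2(1 + O(k\gamma))$ on the nearby contribution is then false unless you already know that the coefficients $c_{ij}$ for $j \ll i$ are tiny --- but that is precisely what you would need to prove, not what the split provides. The paper avoids this by partitioning $\{1,\dots,k\}$ into maximal contiguous blocks with no relative gap of size $\gamma$ inside a block, and a relative gap $\geq \gamma$ between adjacent blocks. Column $i$ belongs to exactly one block of size at most $k$, so the within-block two-sided comparison $(1-\gamma)^k \sigma_i \leq \sigma_j \leq (1-\gamma)^{-k}\sigma_i$ holds automatically; mass outside the block, on \emph{either} side, must be controlled by a convergence argument.

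Second, and more fundamentally, the statement that ``any mass of $(R_L)_i$ outside the top block must have come through directions that contracted by $(1-\gamma)^L$'' does not follow from the subspace-level bound $\sin\theta(\mathcal{R}(R_L), V_{[k]}) \lesssim n(1-\gamma)^L$, nor from the naive picture of contraction: orthogonal iteration re-orthogonalizes at every step via \GS, so $(R_L)_i$ is not a fixed power of $A$ applied to a fixed column of $S_0$, and its components within $V_{[k]}$ but outside $i$'s own block are not automatically small. What you actually need is the \emph{block-wise} convergence of subspace iteration: for each block $j$ (separated from its neighbors by relative gaps $\geq \gamma$), the corresponding block of columns $R_L^{(j)}$ aligns with $U^{(j)}$ at rate $\poly(n)\,(1-\gamma)^L$. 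The paper gets this from a cited perturbation-style theorem (Stewart, \emph{Matrix Algorithms II}, Chapter 6, Theorem 1.1). With that theorem in hand, both the upper bound on the nearby contribution and the bound on the far contribution follow, and the rest of your computation (the elementary algebra with $\sqrt{a^2+b^2}\le a+b$ and the substitution of $L$ and $\gamma$) goes through. So the missing ingredient is a single cite-able block-convergence lemma; your outline correctly identifies the difficulty but does not supply it, and the threshold-based split would have to be replaced with the block-based one.
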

\begin{proof}
Let $r_1 \leq r_2 \leq \cdots \leq r_t$ be the indices $r \leq k$ so that
$\sigma_{r+1}/\sigma_r \leq 1 - \gamma.$
Notice that we may assume without loss of generality that $r_t = k$.  Indeed, the result of running \SI with target rank $k$ is the same as the result of running \SI with a larger rank and restricting to the first $k$ columns of $R_\ell$.
Write
$A = \sum_j U^{(j)} \Sigma_j V^{(j)},$
where $\Sigma_j$ contains the singular values $\sigma_{r_j + 1}, \ldots, \sigma_{r_{j+1}}$.  
Then using~\cite[Chapter 6, Thm 1.1]{Stewart01} 
and deviation bounds for the
principal angle between a random subspace and fixed subspace, we have
\[ 
\Pr\Set{\sin \theta\inparen{ U^{(j)}, R_L^{(j)} } \leq C n^c\inparen{1 -
\gamma}^L}\ge 1-1/n^{c'}\mper
\]
Here, $c'$ can be made any constant by increasing $c$ and $C$ is an absolute
constant. Fix $i$ and let $x_i = (R_L)_i$ denote the $i-th$ column of $R_L$.  
Suppose that $i \in \inset{ r_j + 1, \ldots, r_{j+1} }$.
Then, the estimates $\tilde{\sigma_i}$ satisfy
\begin{align*}
\tilde{\sigma}_i &= x_i^\trans A^\trans A x_i
= \twonorm{ A^{(j)} x_i } + \sum_{s \neq j} \twonorm{ A^{(s)} x_i }^2.
\end{align*}
The second term satisfies
\[ \sum_{s \neq j} \twonorm{ A^{(s)} x_i }^2 \leq  \sigma_1^2 \sin^2\theta( U^{(s)}, R_L^{(s)} ) \leq \sigma^2_1 n^2 (1 - \gamma)^{2L}.\]
The first term has
\[ \twonorm{ A^{(j)} x_i }^2 \leq \opnorm{ A^{(j)} }^2 = \sigma^2_{r_{j+1}} \]
and
\[ \twonorm{ A^{(j)} x_i }^2 \geq \cos^2\theta\inparen{ U^{(s)}, R_L^{(s)} } \cdot \sigma_{\min}(A^{(j)}) \geq \inparen{1 - n^2(1 - \gamma)^{2L}} \cdot \sigma^2_{r_j}. \]
By definition, as there are no significant gaps between $\sigma_{r_j+1}$ and $\sigma_{r_j}$, we have
\[ \frac{ \sigma_{r_j + 1} }{\sigma_{r_{j+1}}} \geq (1 - \gamma)^k, \]
and so this completes the proof after collecting terms.
\end{proof}

\subsection{Matrix concentration inequalities}
We will repeatedly use the Matrix Bernstein and Matrix Chernoff inequalities; we use the versions from~\cite{Tropp12}:

\begin{lemma}\label{lem:mbernstein}[Matrix Bernstein~\cite{Tropp12}]
Consider a finite sequence $\inset{ Z_k }$ of independent, random, $d\times d$  matrices.  Assume that each matrix satisfies 
\[ \EE X_k = 0 , \qquad \| X_k \| \leq R \text{ almost surely.} \]
Define
\[ \sigma^2 := \max \inset{ \norm{ \sum_k \EE X_k X_k^\trans } , \norm{ \sum_k \EE X_k^\trans X_k } }\mper\]
Then, for all $t \geq 0$, 
\[ \PR{ \norm{ \sum_k X_k } \geq t } \leq 2d\exp\inparen{ \frac{ -t^2/2 }{\sigma^2 + R/3 } }. \]
\end{lemma}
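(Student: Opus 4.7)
The plan is to use the matrix Laplace transform method, which reduces a spectral-norm tail bound to controlling a trace exponential. First I would reduce to the Hermitian case via the dilation $\tilde{X}_k = \begin{pmatrix} 0 & X_k \\ X_k^\trans & 0 \end{pmatrix} \in \R^{2d \times 2d}.$ This preserves the operator norm ($\opnorm{\tilde{X}_k} = \opnorm{X_k} \leq R$), keeps the matrices centered, and has the useful property that $\tilde{X}_k^2$ is block-diagonal with blocks $X_k X_k^\trans$ and $X_k^\trans X_k$; hence $\opnorm{\sum_k \EE \tilde{X}_k^2} = \sigma^2$, exactly the quantity in the hypothesis. Since $\opnorm{\sum_k X_k}$ equals $\lambda_{\max}$ of the dilation, it suffices to prove a one-sided bound on $\lambda_{\max}(\sum_k \tilde{X}_k)$ and replace $d$ by $2d$ (absorbing the factor of $2$ into the conclusion).

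For the Hermitian case, let $S = \sum_k \tilde{X}_k$. For any $\theta > 0$, Markov's inequality applied to $e^{\theta \lambda_{\max}(S)} \leq \tr e^{\theta S}$ gives $\PR{\lambda_{\max}(S) \geq t} \leq e^{-\theta t} \EE \tr e^{\theta S}.$ The crucial step is to bound $\EE \tr e^{\theta S}$ by the sum of individual matrix cumulant generating functions. Here I would invoke Lieb's concavity theorem, which states that $A \mapsto \tr \exp(H + \log A)$ is concave on positive-definite $A$; by an inductive application (conditioning on one $\tilde{X}_k$ at a time and applying Jensen's inequality) one obtains
\[ \EE \tr \exp(\theta S) \leq \tr \exp\inparen{\sum_k \log \EE e^{\theta \tilde{X}_k}}. \]

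Next I would bound each matrix MGF. Using $\tilde{X}_k$ centered with $\opnorm{\tilde{X}_k} \leq R$, the Taylor expansion $e^{\theta x} = 1 + \theta x + \sum_{p \geq 2} (\theta x)^p/p!$ together with $\tilde{X}_k^p \preceq R^{p-2} \tilde{X}_k^2$ (valid for even $p$ as a PSD bound, extended appropriately to odd $p$) yields $\EE e^{\theta \tilde{X}_k} \preceq I + g(\theta)\, \EE \tilde{X}_k^2 \preceq \exp\inparen{g(\theta)\, \EE \tilde{X}_k^2}$ for $g(\theta) = \theta^2/(2(1 - \theta R/3))$ (valid for $\theta < 3/R$), using $I + A \preceq e^A$. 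Operator monotonicity of the logarithm then gives $\log \EE e^{\theta \tilde{X}_k} \preceq g(\theta)\, \EE \tilde{X}_k^2$. Plugging back, $\EE \tr e^{\theta S} \leq \tr \exp(g(\theta) V) \leq 2d \cdot e^{g(\theta)\opnorm{V}} \leq 2d\, e^{g(\theta)\sigma^2}$, where $V = \sum_k \EE \tilde{X}_k^2$.

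Combining, $\PR{\lambda_{\max}(S) \geq t} \leq 2d\exp\inparen{g(\theta)\sigma^2 - \theta t}$. Optimizing with $\theta = t/(\sigma^2 + Rt/3)$ gives the Bernstein exponent $-t^2/(2(\sigma^2 + Rt/3))$. Applying the same argument to $-\tilde{X}_k$ and taking a union bound controls $\opnorm{S}$, completing the proof. The main obstacle is Lieb's concavity theorem, which is a deep functional-analytic fact whose proof is nontrivial; without it one can substitute the weaker Golden--Thompson inequality (as in the Ahlswede--Winter method) at the cost of a factor-$2$ loss in $\sigma^2$. Everything else reduces to routine scalar Bernstein-type manipulations and elementary functional calculus.
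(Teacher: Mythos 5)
The paper does not prove this lemma at all---it imports it verbatim from the cited reference \cite{Tropp12}, so there is no internal proof to compare against. Your sketch correctly reproduces Tropp's argument in that reference: dilation to the Hermitian case, the matrix Laplace transform reduction, subadditivity of the matrix CGF via Lieb's concavity theorem (applied iteratively with Jensen), the Bernstein-type MGF bound $\log \EE e^{\theta \tilde X_k} \preceq \frac{\theta^2/2}{1-\theta R/3}\,\EE \tilde X_k^2$ for $0<\theta<3/R$, and optimization over $\theta$. One small internal inconsistency: you correctly observe that the dilation has symmetric spectrum so $\opnorm{\sum X_k}=\lambda_{\max}(\sum\tilde X_k)$ and a one-sided bound on a $2d\times 2d$ Hermitian matrix already yields the prefactor $2d$; the closing remark about additionally union-bounding over $-\tilde X_k$ is redundant (and, taken literally, would inflate the prefactor to $4d$). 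It is also worth flagging that the paper's statement of the lemma contains a typo---the denominator should read $\sigma^2 + Rt/3$, not $\sigma^2 + R/3$, and the index variable $Z_k$ should be $X_k$---and your derivation arrives at the correct $Rt/3$ form, which is also the form the paper actually uses downstream in Lemma~\ref{lem:matrixbernstein}.
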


One corollary of Lemma \ref{lem:mbernstein} is the following lemma about the concentration of the matrix $P_{\Omega}(A)$.
\begin{lemma}\label{lem:matrixbernstein}
Suppose that $A \in \R^{n\times n}$ and let $\Omega \subset [n] \times [n]$ be a random subset where each entry is included independently with probability $p$.  Then
\[
\PR{ \opnorm{ P_{\Omega}(A) - A} > u } \leq 
2n \exp\inparen{ 
	\frac{-u^2/2}{\inparen{\frac{1}{p} - 1} \inparen{ \max_i \twonorm{e_i^\trans A}^2 + \frac{u}{3}  \infnorm{A} }}
}.
\]
\end{lemma}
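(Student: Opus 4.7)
The plan is to write $P_\Omega(A) - A$ as a sum of independent mean-zero random matrices indexed by the entries of $A$ and then apply the Matrix Bernstein inequality (Lemma \ref{lem:mbernstein}) essentially as a black box. Let $\delta_{ij} \in \{0,1\}$ be the indicator that $(i,j) \in \Omega$, so that $\delta_{ij} \sim \text{Bernoulli}(p)$ independently for each pair. By definition of the (renormalized) operator $P_\Omega$,
\[
P_\Omega(A) - A \;=\; \sum_{i,j \in [n]} X_{ij}, \qquad X_{ij} \;:=\; \inparen{\tfrac{\delta_{ij}}{p} - 1}\, A_{ij}\, e_ie_j^\trans,
\]
and the $X_{ij}$ are independent with $\EE X_{ij} = 0$. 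Thus the only things left to compute are the almost-sure norm bound $R$ and the variance parameter $\sigma^2$ appearing in Lemma \ref{lem:mbernstein}.

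First I would control $\opnorm{X_{ij}}$ almost surely. Since $e_ie_j^\trans$ has unit operator norm and $|\delta_{ij}/p - 1|$ is either $1/p - 1$ (when $\delta_{ij}=1$) or $1$ (when $\delta_{ij} = 0$), we obtain $\opnorm{X_{ij}} \leq \max\{1,\, 1/p - 1\}\,\infnorm{A}$, which is bounded by $R := (1/p - 1)\infnorm{A}$ in the relevant regime $p \leq 1/2$. (The $p \geq 1/2$ case is vacuous for our applications and can be absorbed into constants.)

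Next I would compute the matrix variance. Using $\EE[(\delta_{ij}/p - 1)^2] = 1/p - 1$ together with $e_j^\trans e_j = 1$,
\[
\sum_{i,j} \EE\!\inbrak{X_{ij} X_{ij}^\trans} \;=\; \inparen{\tfrac{1}{p} - 1} \sum_{i,j} A_{ij}^2\, e_ie_i^\trans \;=\; \inparen{\tfrac{1}{p} - 1}\, \diag\!\inparen{\twonorm{e_i^\trans A}^2}_{i \in [n]},
\]
whose operator norm is exactly $(1/p - 1)\max_i \twonorm{e_i^\trans A}^2$. A symmetric computation gives $\opnorm{\sum_{i,j}\EE[X_{ij}^\trans X_{ij}]} = (1/p - 1)\max_j \twonorm{A e_j}^2$; in our symmetric setting these two quantities coincide, so $\sigma^2 = (1/p - 1)\max_i\twonorm{e_i^\trans A}^2$ suffices.

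Finally I would plug $R$ and $\sigma^2$ into the Matrix Bernstein bound of Lemma \ref{lem:mbernstein} (with ambient dimension $n$) and factor the common $(1/p - 1)$ out of the denominator, producing precisely the claimed inequality. There is really no obstacle here, as the argument is a routine bookkeeping exercise; the only point requiring care is verifying that the second-moment matrix is diagonal and that the row-norm expression $\max_i\twonorm{e_i^\trans A}^2$ indeed dominates both variance terms in the symmetric case.
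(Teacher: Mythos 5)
Your proof is correct and follows the same route as the paper: decompose $P_\Omega(A)-A$ into independent mean-zero rank-one pieces $(\xi_{ij}/p - 1)A_{ij}e_ie_j^\trans$, compute the variance parameter $\sigma^2 = (1/p-1)\max_i\twonorm{e_i^\trans A}^2$ and the almost-sure bound $R = (1/p-1)\infnorm{A}$, and plug into Matrix Bernstein. In fact you are slightly more careful than the paper, which silently uses $R = (1/p-1)\infnorm{A}$ without noting that this is only the correct almost-sure bound when $p\le 1/2$ (otherwise the $\xi_{ij}=0$ branch gives the larger value $\infnorm{A}$); your remark that this is harmless in the regime of interest is accurate.
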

\begin{proof}
Let $\xi_{ij}$ be independent Bernoulli-$p$ random variables, which are $1$ if $(i,j) \in \Omega$ and $0$ otherwise.
\[ P_\Omega(A) - A = \sum_{i,j} \inparen{\frac{\xi_{ij}}{p} - 1} A_{i,j} e_ie_j^T,\]
which is a sum of independent random matrices.  Using the Matrix Bernstein
inequality, Lemma \ref{lem:mbernstein}, we conclude that
\[\PR{ \opnorm{ P_{\Omega}(A) - A} > u } \leq 2n \exp\inparen{ \frac{-u^2/2 }{ \sigma^2 + Ru/3 }},\]
where
\[ \sigma^2 = \opnorm{\EE \sum_{i,j} \inparen{\frac{\xi_{ij}}{p} - 1}^2 A_{i,j}^2 e_ie_j^Te_je_i^T} = \inparen{\frac{1}{p} -1 } \max_i \twonorm{A_i}^2 \]
and
\[  \opnorm{\inparen{\frac{\xi_{ij}}{p} - 1} A_{i,j} e_ie_j^T} \leq R = \inparen{ \frac{1}{p} -1}\infnorm{A} \]
almost surely.
This concludes the proof.
\end{proof}

Finally, we will use the Matrix Chernoff inequality.
\begin{lemma}\label{lem:mchernoff}[Matrix Chernoff~\cite{Tropp12}]
Consider a finite sequence $\inset{ X_k }$ of independent, self-adjoint, $d\times d$ matrices.  Assume that each $X_k$ satisfies
\[ X_k \succcurlyeq 0, \qquad \lambda_{\max}( X_k ) \leq R \text{ \ \ almost surely. } \]
Define
\[ \mu_{\min} := \lambda_{\min}\inparen{ \norm{ \sum_k \EE X_k } }, \qquad \mu_{\max} := \lambda_{\max}\inparen{ \norm{ \sum_k \EE X_k } }. \]
Then for $\delta \in (0,1)$, 
\[ \PR{ \lambda_{\min}\inparen{\sum_k X_k } \leq (1 - \delta) \mu_{\min}} \leq d\exp( - \delta^2 \mu_{\min} /2R ) \]
and
\[ \PR{ \lambda_{\max} \inparen{ \sum_k X_k } \geq (1 + \delta) \mu_{\max} } \leq d \exp( -\delta^2 \mu_{\max} / 3R ). \]
\end{lemma}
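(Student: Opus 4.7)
The plan is to establish Lemma \ref{lem:mchernoff} by the matrix Laplace transform method of Ahlswede--Winter, as refined by Tropp. The upper- and lower-tail bounds follow the same three-step template, differing only in the sign of the exponentiation parameter and in the scalar tangent-line inequality used at the final stage.

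First, I would reduce the tail bound to a statement about the matrix moment generating function. For any self-adjoint random $Y$ and any $\theta > 0$, an application of Markov's inequality to $\lambda_{\max}(e^{\theta Y}) = e^{\theta \lambda_{\max}(Y)}$, combined with the trace dominance $\lambda_{\max}(e^{\theta Y}) \leq \tr\, e^{\theta Y}$ valid for PSD matrices, yields
\[ \PR{ \lambda_{\max}(Y) \geq t } \leq e^{-\theta t}\, \EE \tr \exp(\theta Y). \]
The lower tail is obtained by applying the same estimate to $-Y$ (equivalently, taking $\theta < 0$).

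Second, I would invoke Lieb's concavity theorem---the statement that $A \mapsto \tr \exp(H + \log A)$ is concave on positive definite $A$ for fixed self-adjoint $H$---together with Jensen's inequality to prove subadditivity of matrix cumulants:
\[ \EE \tr \exp\!\inparen{\theta \sum_k X_k} \leq \tr \exp\!\inparen{ \sum_k \log \EE e^{\theta X_k}}. \]
This is where the noncommutativity bites and a naive scalar argument would fail. Then, since $0 \preceq X_k \preceq R I$, the scalar bound $e^{\theta x} \leq 1 + \frac{e^{\theta R} - 1}{R} x$ on $[0, R]$ lifts via the spectral theorem to $\EE e^{\theta X_k} \preceq \exp\!\inparen{\frac{e^{\theta R} - 1}{R}\, \EE X_k}$. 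Combining with the previous two steps and using the crude bound $\tr \exp(\cdot) \leq d\, \lambda_{\max} \exp(\cdot)$ gives
\[ \PR{\lambda_{\max}(\sum_k X_k) \geq (1+\delta)\mu_{\max}} \leq d\, \exp\!\inparen{ -\theta(1+\delta)\mu_{\max} + \tfrac{e^{\theta R} - 1}{R}\, \mu_{\max}}. \]

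Finally, I would optimize by setting $\theta = \log(1+\delta)/R$ and applying the elementary inequality $(1+\delta)\log(1+\delta) - \delta \geq \delta^2/3$ for $\delta \in (0,1)$ to obtain the claimed upper-tail bound. The lower tail is analogous with $\theta < 0$ and uses the sharper estimate $(1-\delta)\log(1-\delta) + \delta \geq \delta^2/2$, which explains the better constant in the $\mu_{\min}$ statement. The main obstacle is Lieb's concavity theorem, a genuinely nontrivial piece of matrix analysis; I would invoke it as a black box, after which the remainder of the argument is mechanical.
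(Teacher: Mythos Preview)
Your proof sketch is correct and is precisely Tropp's argument: the matrix Laplace transform reduction, Lieb's concavity to obtain subadditivity of cumulants, the operator convexity bound $e^{\theta x} \le 1 + (e^{\theta R}-1)x/R$ on $[0,R]$, and the final scalar optimizations $(1+\delta)\log(1+\delta)-\delta \ge \delta^2/3$ and $(1-\delta)\log(1-\delta)+\delta \ge \delta^2/2$ are exactly the steps in~\cite{Tropp12}.

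However, you should be aware that the paper does \emph{not} prove this lemma at all. It is stated in the appendix as a quoted result from~\cite{Tropp12} and used as a black box (in the proof of Claim~\ref{claim:Bi}). So there is no ``paper's own proof'' to compare against; you have supplied a proof where the authors simply cited one. If the goal is to match the paper, a one-line reference to Tropp suffices; if the goal is to make the manuscript self-contained, your sketch is adequate, though you would then also need to state Lieb's theorem explicitly rather than invoke it by name.
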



\subsection{ Medians of vectors }
For $v \in \R^k$, let $\median(v)$ be the entry-wise median.
\begin{lemma}\label{lem:vectormedians}
Suppose that $v^{(s)}$, for $s=1,\ldots,T$ are i.i.d. random vectors, so that for all $s$,
\[ \PR{ \twonorm{v^{(s)}}^2 > \alpha } \leq 1/5. \]
Then
\[ \PR{ \twonorm{ \median( v^{(s)} ) }^2 > 4\alpha } \leq \exp( -\Omega(T) ). \]
\end{lemma}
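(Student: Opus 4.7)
The plan is to reduce the statement to a deterministic claim about the coordinate-wise median, via a Chernoff bound on the scalar indicators of the good event. The core observation is that, although the median acts coordinate-by-coordinate, the resulting per-coordinate inequalities sum to a dimension-free estimate if one is careful.

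First I would set $G = \{s \in [T] \suchthat \twonorm{v^{(s)}}^2 \leq \alpha\}$. Since the $v^{(s)}$ are independent and each $s$ lies in $G$ with probability at least $4/5$, the Chernoff bound (Lemma \ref{lem:mchernoff}, applied to the scalar indicators) gives $|G| \geq 2T/3$ with probability $1 - \exp(-\Omega(T))$. I will condition on this event for the remainder.

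Let $m = \median(v^{(s)})$. For each coordinate $i$, the definition of the median guarantees that at least $T/2$ of the indices $s$ satisfy $\sgn(m_i)\, v^{(s)}_i \geq |m_i|$, and in particular $(v^{(s)}_i)^2 \geq m_i^2$. Intersecting with $G$ leaves at least $|G| - T/2 \geq T/6$ such indices in $G$, so
\[
 (|G| - T/2)\, m_i^2 \;\leq\; \sum_{s \in G} (v^{(s)}_i)^2.
\]
Summing over $i$ and using that $s \in G$ implies $\sum_i (v^{(s)}_i)^2 \leq \alpha$,
\[
 (|G| - T/2)\,\twonorm{m}^2 \;\leq\; \sum_{s \in G}\twonorm{v^{(s)}}^2 \;\leq\; |G|\,\alpha.
\]
Since $|G| \geq 2T/3$, the ratio $|G|/(|G| - T/2)$ is at most $4$, and hence $\twonorm{m}^2 \leq 4\alpha$ on this event.

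The only subtlety is that the constant $4$ forces the Chernoff threshold to be tight enough to guarantee $|G| \geq 2T/3$; a weaker bound like $|G| \geq 3T/5$ (which is all a bare majority argument would give) would yield only $\twonorm{m}^2 \leq 6\alpha$. It is equally important to sum the per-coordinate inequalities $(|G|-T/2)\,m_i^2 \leq \sum_{s \in G}(v^{(s)}_i)^2$ \emph{before} bounding the right-hand side by $\alpha$. If one instead bounded $m_i^2 \leq \alpha$ per coordinate (which follows from the existence of a single $s \in G$ on the far side of $m_i$), the resulting sum would give $\twonorm{m}^2 \leq k\alpha$, losing a factor of $k$ in the dimension; the pigeonhole across coordinates hidden inside $\sum_{s \in G}\twonorm{v^{(s)}}^2 \leq |G|\alpha$ is what avoids this loss.
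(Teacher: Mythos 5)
Your proof is correct and follows essentially the same route as the paper's: a Chernoff bound to ensure a $\geq 2T/3$-fraction of the $v^{(s)}$ have $\twonorm{v^{(s)}}^2\leq\alpha$, then the per-coordinate observation that at least $T/2$ of the $s$ satisfy $(v^{(s)}_i)^2\geq m_i^2$, intersected with the good set and summed over coordinates before invoking the $\leq\alpha$ bound. The only differences are cosmetic: the paper thresholds at $|S|>3T/4$ and phrases the per-coordinate step via $\median_s((v^{(s)}_j)^2)\geq m_j^2$ rather than counting indices with $\sgn(m_i)v^{(s)}_i\geq|m_i|$, but the skeleton and the crucial order of operations (sum over $i$ before bounding by $\alpha$) are identical.
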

\begin{proof}
Let $S \subset [T]$ be the set of $s$ so that $\twonorm{v^{(s)}}^2 \leq \alpha.$  By a Chernoff bound,
\[ 
\PR{ |S| \leq \frac{ 3T}{4 } } = 
\PR{ \sum_{s=1}^T \ind{ \twonorm{ v^{(s)} }^2 > \alpha } > \frac{ T}{4} } \leq \exp( -\Omega(T) ). \] 
Suppose that the likely event occurs, so $|S| > 3T/4$.  For $j \in [k]$, 
let 
\[ S_j = \inset{ s \in S \suchthat (v_j^{(s)})^2 \geq \median_s(( v_j^{(s)})^2) }. \]
Because $|S| > 3T/4$, we have $|S_j| \geq T/4$.  Then
\begin{align*}
\twonorm{ \median_s( v_j^{(s)} ) }^2 &= \sum_{j=1}^n \median_s\inparen{ ( v_j^{(s)})^2 } 
\leq \sum_{j=1}^n \frac{1}{|S_j|} \sum_{s \in S_j} (v_j^{(s)})^2 \\
&\leq \sum_{j=1}^n \frac{4}{T} \sum_{s \in S_j} (v_j^{(s)})^2 
\leq \sum_{j=1}^n \frac{4}{T} \sum_{s \in S} (v_j^{(s)})^2 
\leq \frac{4}{T} \sum_{s \in S} \twonorm{ v^{(s)} }^2 
\leq \frac{ 4|S|\alpha }{T} 
\leq 4\alpha.
\end{align*}
This completes the proof. 
\end{proof}

\section{Proof of Lemma \ref{thm:noisealtmin}}\label{app:proofGell}
In this section, we prove Lemma \ref{thm:noisealtmin}, which bounds the noise matrices $G_\ell^{(s)}$.
The proof of Lemma \ref{thm:noisealtmin} is similar to the analysis in \cite{Hardt13b}, Lemmas 4.2 and 4.3. 
For completeness, we  include the details here.
Following Remark \ref{rem:subsample}, we assume that sets $\Omega_\ell^{(s)}$ are independent random sets, which include each index independently with probability  
\[ p' := \frac{ p_t' }{ \Smax L_t }. \]
Consider each noise matrix $G_\ell^{(s)}$, as in \eqref{eq:defGs}.  
In Lemma 4.2 in \cite{Hardt13b}, an explicit expression for $G^{(s)}_\ell$ is derived:
\begin{proposition}\label{prop:Gdecomp}
Let $G_\ell^{(s)}$ be as in \eqref{eq:defGs}.  Then we have
\[ G^{(s)}_\ell = (G^{(s)}_\ell)^M + (G^{(s)}_\ell)^N, \]
where
\[ e_i^\trans ( G^{(s)}_\ell)^M = e_i^\trans M_t (I - R_{\ell-1} R_{\ell-1}^\trans) P^{(s)}_i R_{\ell-1} (B^{(s)}_i)^{-1}.
\]
and
\[ e_i^\trans (G^{(s)}_\ell)^N = e_i^\trans \inparen{ N_tP^{(s)}_i R_{\ell-1} (B^{(s)}_i)^{-1} - N_tR_{\ell-1} }. \]
Above,
$P_i^{(s)}$ is the projection onto the coordinates $j$ so that $(i,j) \in \Omega_\ell^{(s)}$, and
\[ B^{(s)}_i = R_{\ell-1}^\trans P^{(s)}_i R_{\ell -1}. \]
\end{proposition}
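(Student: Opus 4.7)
The plan is to derive the explicit formula for $G_\ell^{(s)}$ by solving the row-decoupled least-squares problem \eqref{eq:defGs} in closed form and then splitting the resulting minimizer using $A = M_t + N_t$. First, I would observe that the Frobenius-norm objective $\fronorm{ P_{\Omega_\ell^{(s)}}(A - R_{\ell-1} S^\trans) }^2$ decouples across the rows of $S$: letting $s_i := (e_i^\trans S)^\trans$ and $P_i^{(s)}$ denote the diagonal coordinate projection onto $\{j : (i,j) \in \Omega_\ell^{(s)}\}$, the objective becomes
\[ \sum_{i=1}^n \twonorm{ P_i^{(s)} A^\trans e_i - P_i^{(s)} R_{\ell-1} s_i }^2. \]
Each inner problem is an ordinary linear regression whose normal equations, together with the idempotence $(P_i^{(s)})^2 = P_i^{(s)}$ and the symmetry of $B_i^{(s)}$, yield
\[ s_i = (B_i^{(s)})^{-1} R_{\ell-1}^\trans P_i^{(s)} A^\trans e_i. \]
Combining with the definition $G_\ell^{(s)} = S - A R_{\ell-1}$ gives the row-wise expression
\[ e_i^\trans G_\ell^{(s)} = e_i^\trans A P_i^{(s)} R_{\ell-1} (B_i^{(s)})^{-1} - e_i^\trans A R_{\ell-1}. \]

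Next I would substitute $A = M_t + N_t$. The $N_t$ contribution is literally the claimed expression for $(G_\ell^{(s)})^N$, so nothing further is needed there. For the $M_t$ contribution, the goal is to produce the projector $(I - R_{\ell-1}R_{\ell-1}^\trans)$ on the left. The key (and essentially only) algebraic identity is
\[ R_{\ell-1}^\trans P_i^{(s)} R_{\ell-1} (B_i^{(s)})^{-1} = I, \]
which is immediate from the definition $B_i^{(s)} = R_{\ell-1}^\trans P_i^{(s)} R_{\ell-1}$. Using it to rewrite the trailing term as $-e_i^\trans M_t R_{\ell-1} R_{\ell-1}^\trans P_i^{(s)} R_{\ell-1} (B_i^{(s)})^{-1}$ and then factoring the common right factor $P_i^{(s)} R_{\ell-1} (B_i^{(s)})^{-1}$ gives exactly
\[ e_i^\trans (G_\ell^{(s)})^M = e_i^\trans M_t (I - R_{\ell-1} R_{\ell-1}^\trans) P_i^{(s)} R_{\ell-1} (B_i^{(s)})^{-1}, \]
completing the decomposition row by row.

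The one nontrivial issue, which I expect to be the only real obstacle, is the invertibility of $B_i^{(s)}$. If too few columns in row $i$ are sampled, $B_i^{(s)}$ can be rank-deficient and the least-squares minimizer $s_i$ is then not unique. For the downstream application in \lemmaref{noisealtmin}, this is handled by the sampling rate $p_t'$ together with the incoherence hypothesis \eqref{inc}: a matrix Chernoff argument (\lemmaref{mchernoff} applied to the rank-one summands $\xi_{ij} R_{\ell-1}^\trans e_j e_j^\trans R_{\ell-1}$) gives $B_i^{(s)} \succcurlyeq (p'/2) I$ uniformly in $i$ with probability $1 - 1/\poly(n)$, so the inverse is well-defined on the event we care about. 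For the purposes of \propref{Gdecomp} itself, one either takes this well-conditionedness as a conditioning event or interprets $(B_i^{(s)})^{-1}$ as a pseudoinverse; either way the algebraic identity above survives, since both manipulations happen inside the column range of $R_{\ell-1}^\trans P_i^{(s)} R_{\ell-1}$.
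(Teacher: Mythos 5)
Your derivation is correct and is the standard (and essentially the only) way to prove this: decouple the least-squares objective across rows, write the minimizer via the normal equations as $s_i = (B_i^{(s)})^{-1} R_{\ell-1}^\trans P_i^{(s)} A^\trans e_i$, split $A = M_t + N_t$, and use $B_i^{(s)} (B_i^{(s)})^{-1} = I$ to pull out the projector $(I - R_{\ell-1}R_{\ell-1}^\trans)$. The paper itself states this proposition by citing Lemma~4.2 of \cite{Hardt13b} rather than re-deriving it, so there is no in-text proof to compare against, but your argument is exactly the one underlying that reference; your remark on the invertibility of $B_i^{(s)}$ (handled elsewhere by Claim~\ref{claim:Bi}, or by the pseudoinverse) is an appropriate caveat.
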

We first bound the expression for $(G^{(s)}_\ell)^M$ in terms of the decomposition in Proposition \ref{prop:Gdecomp}.
Let 
\[ D_{\ell-1}^j = (I - R_{\ell-1}R_{\ell-1}^\trans )U^{(j)}.\]
Thus, $D_{\ell-1}^j$ is similar to $E_{\ell-1}^j$, and more precisely we have
\begin{equation}\label{eq:DE}
\norm{D_{\ell-1}^j} \leq \norm{E_{\ell-1}^j}. 
\end{equation}
To see \eqref{eq:DE}, observe that (dropping the $\ell$ subscripts for readability)
\begin{align*}
\opnorm{E^j} &= \max_{ \twonorm{x} = 1, \twonorm{y} = 1} x^\trans E^j y \\
&= \max_{x,y} x^\trans \left[ \begin{array}{c|c} (R^{(j+1:t)})^\trans U^{(<j)} & (R^{(j+1:t)})^\trans U^{(j)} \\\hline
(R_\perp)^\trans U^{(<j)} & (R_\perp)^\trans U^{(j)} \end{array}\right] y \\
&\geq \max_{ x = (0,x'), y = (0,y') } (x')^\trans (R_\perp)^\trans U^{(j)} y' \\
&= \opnorm{D^j}
\end{align*}

First, we observe that with very high probability, $B_i^{(s)}$ is close to the identity.
\begin{claim}\label{claim:Bi}
There is a constant $C$ so that the following holds.
Suppose that $p' \geq C k\mu_t \log(n) / (n\delta^2)$.  Then
\[ \PR{ \lambda_{\min} (B_i^{(s)}) \leq 1 - \delta/2 
\text{\ \ or \ \ }
 \lambda_{\max} (B_i^{(s)}) \geq 1 + \delta/2 } \leq 1/n^5. \] 
\end{claim}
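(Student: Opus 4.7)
The plan is to apply the Matrix Chernoff inequality (Lemma \ref{lem:mchernoff}) to the decomposition of $B_i^{(s)}$ as a sum of independent rank-one positive semidefinite matrices, using the inductive incoherence hypothesis \eqref{inc} on $R_{\ell-1}$ to control the spectral norm of each summand.

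More concretely, let $\xi_{ij}$ be the indicator that $(i,j) \in \Omega_\ell^{(s)}$, so the $\xi_{ij}$ are independent Bernoulli$(p')$ random variables. With the normalization convention of $P_\Omega$, we can write
\[ B_i^{(s)} = \sum_{j=1}^n X_j \qquad \text{where} \qquad X_j = \frac{\xi_{ij}}{p'}\,(R_{\ell-1}^\trans e_j)(e_j^\trans R_{\ell-1}). \]
Each $X_j \succeq 0$, and because $R_{\ell-1}$ has orthonormal columns,
\[ \sum_{j=1}^n \EE X_j = \sum_{j=1}^n (R_{\ell-1}^\trans e_j)(e_j^\trans R_{\ell-1}) = R_{\ell-1}^\trans R_{\ell-1} = I_k, \]
so $\mu_{\min} = \mu_{\max} = 1$ in the notation of Lemma \ref{lem:mchernoff}. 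The incoherence hypothesis \eqref{inc} gives $\twonorm{e_j^\trans R_{\ell-1}}^2 \leq k\mu_t/n$ for every $j$, and hence almost surely
\[ \lambda_{\max}(X_j) \leq \frac{1}{p'}\,\twonorm{R_{\ell-1}^\trans e_j}^2 \leq \frac{k\mu_t}{p'\,n} =: R. \]

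Applying Matrix Chernoff with parameter $\delta/2$ then yields
\[ \PR{\lambda_{\min}(B_i^{(s)}) \leq 1 - \delta/2} + \PR{\lambda_{\max}(B_i^{(s)}) \geq 1 + \delta/2} \leq 2k\exp\!\inparen{-\frac{(\delta/2)^2}{3R}} \leq 2k \exp\!\inparen{-\frac{\delta^2 p'\,n}{12 k \mu_t}}. \]
Choosing the constant $C$ in the hypothesis $p' \geq C k\mu_t \log(n)/(n\delta^2)$ large enough (say $C \geq 72$) makes this probability at most $1/n^5$, proving the claim. The only nontrivial input is the incoherence bound on $R_{\ell-1}$, which is guaranteed by the inductive hypothesis \eqref{inc}; everything else is a direct application of Matrix Chernoff, so no real obstacle is expected.
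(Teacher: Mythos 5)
Your proof is correct and follows essentially the same route as the paper: the same rank-one decomposition of $B_i^{(s)}$ over coordinates, the same use of the incoherence hypothesis \eqref{inc} to bound the spectral norm of each summand by $k\mu_t/(p'n)$, and the same application of Matrix Chernoff (Lemma \ref{lem:mchernoff}). The only cosmetic difference is that you use the dimension $k$ in the Chernoff prefactor where the paper more loosely writes $n$; both suffice for the $1/n^5$ conclusion after choosing $C$ large enough.
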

\begin{proof}
We write
\[ B_i^{(s)} = R_{\ell-1}^T P_i^{(s)} R_{\ell-1} = \sum_{r=1}^n \frac{1}{p'} \xi_r ( R_{\ell-1}^\trans e_r) (e_r^\trans R_{\ell-1}), \]
where $\xi_r$ is $1$ with probability $p'$ and $0$ otherwise.
We
apply the Matrix Chernoff bound (Lemma \ref{lem:mchernoff}); we have
\[ \opnorm{\frac{1}{p'} \xi_r ( R_{\ell-1}^\trans e_r) ( e_r^\trans R_{\ell-1}) } \leq \frac{ \twonorm{ e_r^\trans R_{\ell-1} }^2 }{ p'} \leq \frac{ \mu_t k }{np' } \quad \text{almost surely,} \]
and
$\lambda_{\min}( \EE B_i^{(s)} ) = \lambda_{\max}( \EE B_i^{(s)} ) = 1$.  Then Lemma \ref{lem:mchernoff} implies that
\[ \PR{ \lambda_{\min}(B_i^{(s)}) \leq 1 - \delta/2 \text{   or   } \lambda_{\max}(B_i^{(s)}) \geq 1 + \delta/2 }
\leq n \exp( -\delta^2 p'n / (8\mu_t k ) ) + n \exp( - \delta^2 p'n / (12 \mu_t k) ). \]
The claim follows from the choice of $p'$.
\end{proof}
Next, we will bound the other part of the expression for $(G_\ell^{(s)})^M$ in Proposition \ref{prop:Gdecomp}.
\begin{claim}\label{claim:maybespiky} There is a constant $C$ so that the following holds. Suppose that $p' \geq \frac{ C \mu_t k }{ n\delta^2}$.  Then 
for each $s$, 
\[ \PR{ \twonorm{ e_i^T M_t ( I - R_{\ell-1}R_{\ell-1})^\trans P^{(s)}_i R_{\ell-1} } \geq \frac{\delta}{4} \inparen{ \sum_{j=1}^t \twonorm{ e_i^\trans M^{(j)} } \opnorm{ E_{\ell-1}^j } } } \leq \frac{1}{20}. \]
\end{claim}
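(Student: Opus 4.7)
The plan is to exploit the mean-zero structure of the random projection $P_i^{(s)}$ sandwiched between $M_t(I - R_{\ell-1}R_{\ell-1}^\trans)$ on the left and $R_{\ell-1}$ on the right, and then use a second-moment argument combined with Markov's inequality to get the desired constant-probability bound.

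First, I would decompose
\[ M_t(I - R_{\ell-1}R_{\ell-1}^\trans) = \sum_{j=1}^t M^{(j)}(I - R_{\ell-1}R_{\ell-1}^\trans) = \sum_{j=1}^t U^{(j)}\Lambda_j (D_{\ell-1}^j)^\trans, \]
where $D_{\ell-1}^j = (I - R_{\ell-1}R_{\ell-1}^\trans)U^{(j)}$ satisfies $\opnorm{D_{\ell-1}^j} \leq \opnorm{E_{\ell-1}^j}$ by the inequality \eqref{eq:DE} established just before the claim. Pre-multiplying by $e_i^\trans$ and post-multiplying by $P_i^{(s)}R_{\ell-1}$, the triangle inequality combined with $\twonorm{e_i^\trans U^{(j)}\Lambda_j} = \twonorm{e_i^\trans M^{(j)}}$ (since $(U^{(j)})^\trans$ has orthonormal rows) reduces the claim to showing
\[ \sum_{j=1}^t \twonorm{e_i^\trans M^{(j)}} \opnorm{A_j} \leq \frac{\delta}{4} \sum_{j=1}^t \twonorm{e_i^\trans M^{(j)}}\opnorm{E_{\ell-1}^j} \]
with probability at least $19/20$, where I abbreviate $A_j := (D_{\ell-1}^j)^\trans P_i^{(s)}R_{\ell-1}$.

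The crucial observation is that each $A_j$ is mean-zero: $\EE A_j = (D_{\ell-1}^j)^\trans R_{\ell-1} = (U^{(j)})^\trans(I - R_{\ell-1}R_{\ell-1}^\trans)R_{\ell-1} = 0$. So writing $P_i^{(s)} = \frac{1}{p'}\sum_k \xi_k e_ke_k^\trans$ with $\xi_k \sim \mathrm{Bernoulli}(p')$, the matrix $A_j$ is a sum of independent rank-one mean-zero random matrices. Expanding $\fronorm{A_j}^2$, the cross terms for $k \neq k'$ contribute $\tr((D_{\ell-1}^j)^\trans R_{\ell-1} R_{\ell-1}^\trans D_{\ell-1}^j) = 0$ upon taking expectation, leaving only the diagonal contribution:
\[ \EE \opnorm{A_j}^2 \leq \EE \fronorm{A_j}^2 \leq \frac{1}{p'}\sum_k \twonorm{(D_{\ell-1}^j)^\trans e_k}^2 \twonorm{R_{\ell-1}^\trans e_k}^2 \leq \frac{1}{p'}\inparen{\max_k \twonorm{R_{\ell-1}^\trans e_k}^2}\fronorm{D_{\ell-1}^j}^2, \]
and the inductive incoherence \eqref{inc} together with the rank bound $\fronorm{D_{\ell-1}^j}^2 \leq k\opnorm{E_{\ell-1}^j}^2$ yields $\EE \opnorm{A_j}^2 \lesssim \frac{k^2 \mu_t}{np'}\opnorm{E_{\ell-1}^j}^2$.

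Combining Jensen's inequality with the triangle-inequality reduction gives
\[ \EE \sum_j \twonorm{e_i^\trans M^{(j)}}\opnorm{A_j} \lesssim \sqrt{\frac{k^2\mu_t}{np'}}\sum_j \twonorm{e_i^\trans M^{(j)}}\opnorm{E_{\ell-1}^j}, \]
and Markov's inequality establishes the claim once $p'$ is chosen large enough that $\sqrt{k^2\mu_t/(np')}$ is a suitably small constant times $\delta$. The main obstacle is tightening the concentration so the sample complexity scales with $\mu_t k/n$ (as written in the hypothesis) rather than $\mu_t k^2/n$; I expect this requires replacing the blunt Frobenius bound by a matrix Bernstein argument on $A_j$ that exploits the rank-$(r_j-r_{j-1})$ structure of $D_{\ell-1}^j$ together with the $\log(n)$ factor absorbed into the outer statement of Lemma \ref{thm:noisealtmin}, but the mean-zero reduction and the overall two-step (variance bound $+$ Markov) strategy is unchanged.
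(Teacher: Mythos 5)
Your high-level plan (exploit the mean-zero structure of $P_i^{(s)}$, compute a second moment, then apply Markov) is exactly the paper's, but there is a genuine quantitative gap in the middle of your argument, and you have already spotted where it is.

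The problem is that you apply the triangle inequality to the \emph{random} quantity before taking expectations: you split $e_i^\trans M_t(I-R_{\ell-1}R_{\ell-1}^\trans)P_i^{(s)}R_{\ell-1}$ into $\sum_j e_i^\trans U^{(j)}\Lambda_j A_j$ and bound each $\opnorm{A_j}$ separately. To control $\opnorm{A_j}$ via a second moment you pass to $\fronorm{A_j}$, which then forces you to use $\fronorm{D_{\ell-1}^j}^2 \leq k\,\opnorm{D_{\ell-1}^j}^2$; the $k$ here is real (the block $r_j-r_{j-1}$ can be as large as $k$), and it propagates into $p' \gtrsim k^2\mu_t/(n\delta^2)$. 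As you note, this is off by a factor of $k$ from the stated hypothesis, so the proof as written establishes only a weaker claim, and your proposed matrix-Bernstein patch is speculative, would inject a $\log(n)$ factor into the constant-probability statement, and is not carried out.

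The paper's proof avoids this loss by never passing through a Frobenius norm of a $D_{\ell-1}^j$-type matrix. Writing $Y := M_t(I-R_{\ell-1}R_{\ell-1}^\trans)$, it works directly with the row vector $e_i^\trans Y P_i^{(s)}R_{\ell-1}$ and computes its squared $\ell_2$-norm in expectation in one shot:
\[
\EE\,\twonorm{e_i^\trans Y P_i^{(s)} R_{\ell-1}}^2 = \twonorm{e_i^\trans Y R_{\ell-1}}^2 + \inparen{\frac{1}{p'}-1}\sum_{r=1}^n Y_{i,r}^2\,\twonorm{e_r^\trans R_{\ell-1}}^2 \leq \frac{\mu_t k}{np'}\,\twonorm{e_i^\trans Y}^2,
\]
using $YR_{\ell-1}=0$ and the incoherence bound on $R_{\ell-1}$ \emph{once}. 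Only afterward is the triangle inequality applied, to the deterministic quantity $\twonorm{e_i^\trans Y}\leq \sum_j\twonorm{e_i^\trans M^{(j)}}\opnorm{D_{\ell-1}^j}\leq\sum_j\twonorm{e_i^\trans M^{(j)}}\opnorm{E_{\ell-1}^j}$, which uses the operator norm of $D_{\ell-1}^j$ and therefore picks up no spurious $k$. Markov's inequality on the resulting scalar then gives the $1/20$ failure probability with $p'\gtrsim \mu_t k/(n\delta^2)$ as stated. The fix for your argument is therefore not a sharper concentration bound on $A_j$; it is to postpone the block decomposition so that it is applied to $\twonorm{e_i^\trans Y}$ rather than to the random matrices $A_j$.
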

\begin{proof} We compute the expectation of $\twonorm{ e_i^T M_t ( I - R_{\ell-1}R_{\ell-1})^\trans P_i^{(s)} R_{\ell-1} } $ and use Markov's inequality.
For the proof of this claim, let $Y = M_i(I - R_{\ell-1}R_{\ell-1}^\trans )$.
\begin{align*}
\EE \twonorm{ e_i^T Y P_i^{(s)} R_{\ell-1} }^2 &= \EE e_i^T Y P_i^{(s)} R_{\ell-1} R_{\ell-1}^\trans P_i^{(s)} Y^\trans e_i \\
&= e_i^\trans Y \EE\inparen{ P_i^{(s)} R_{\ell-1} R_{\ell-1}^\trans P_i^{(s)} } Y^\trans e_i \\
&= e_i^\trans Y \inparen{ R_{\ell-1}R_{\ell-1}^\trans + \inparen{ \frac{1}{p'} - 1 } \diag_r\left( \twonorm{e_r^T R_{\ell-1}}^2 \right) } Y^\trans e_i \\
&= \twonorm{ e_i^\trans Y R_{\ell-1}}^2 + \inparen{ \frac{1}{p'} - 1 } \sum_{r=1}^n \twonorm{ e_r^\trans R_{\ell-1} }^2 (Y_{i,r})^2 \\
&= \inparen{ \frac{1}{p'} - 1 } \sum_{r=1}^n \twonorm{ e_r^\trans R_{\ell-1} }^2 (Y_{i,r})^2 \\
&\leq \twonorm{ e_i^\trans Y }^2 \inparen{ \frac{1}{p'} - 1 } \inparen{ \frac{ \mu_t  k }{n}  } \\
&\leq \frac{\delta^2\twonorm{ e_i^\trans Y}^2}{400},
\end{align*}
using the fact that $Y R_{\ell-1} = 0$, and finally our choice of $p'$ (with an appropriately large constant $C$).
Now, using \eqref{eq:DE},
\[ \twonorm{ e_i^\trans Y} = \twonorm{ e_i^\trans U^{(\leq t)} \Lambda_{(t)} U^{(\leq t)} (I - R_{\ell-1}R_{\ell-1}^\trans)} \leq \sum_{j=1}^t \twonorm{ e_i^\trans M^{(j)}} \opnorm{ D_{\ell-1}^j } \leq \sum_{j=1}^t \twonorm{ e_i^\trans M^{(j)} } \opnorm{ E_{\ell-1}^j}. \]
Along with Markov's inequality, this completes the proof.
\end{proof}
Finally, we control the term $(G_\ell^{(s)})^N$.
\begin{claim}\label{claim:Gn}
There is a constant $C$ so that the following holds.
Suppose that $p' \geq C k\log(n) \mu_t /(\delta^2 n)$ for a constant $C$.  Then for each $s \leq T$,
\[
\PR{ \twonorm{ e_i^\trans (G_\ell^{(s)})^N } \geq \frac{\delta}{4} \twonorm{ e_i^\trans N_t } } \leq \frac{1}{15}. \]
\end{claim}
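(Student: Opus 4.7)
The plan is to mimic the structure of Claim~\ref{claim:maybespiky}: compute a second moment via the explicit decomposition from Proposition~\ref{prop:Gdecomp}, apply Markov, and combine with Claim~\ref{claim:Bi} by a union bound.

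First I would rewrite the expression algebraically so that the cancellation structure is visible. Using $R_{\ell-1}^\trans P_i^{(s)} R_{\ell-1} = B_i^{(s)}$, we have $e_i^\trans N_t R_{\ell-1} = e_i^\trans N_t R_{\ell-1} R_{\ell-1}^\trans P_i^{(s)} R_{\ell-1} (B_i^{(s)})^{-1}$, so subtracting:
\[
 e_i^\trans (G_\ell^{(s)})^N \;=\; e_i^\trans N_t (I - R_{\ell-1} R_{\ell-1}^\trans) P_i^{(s)} R_{\ell-1} (B_i^{(s)})^{-1}.
\]
Let $Y_i := e_i^\trans N_t (I - R_{\ell-1}R_{\ell-1}^\trans)$. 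The essential feature is that $Y_i R_{\ell-1} = 0$. Next I would take $\opnorm{(B_i^{(s)})^{-1}} \leq 2$ on the favorable event from Claim~\ref{claim:Bi} (which holds with probability $\geq 1-1/n^5$ when $p' \gtrsim \mu_t k\log(n)/(\delta^2 n)$), so that it suffices to control $\twonorm{Y_i P_i^{(s)} R_{\ell-1}}$.

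Then I would compute the second moment of that quantity, exactly as in Claim~\ref{claim:maybespiky}. Using $P_i^{(s)} = \frac{1}{p'}\sum_r \xi_r e_r e_r^\trans$ with independent Bernoulli indicators $\xi_r$,
\[
 \EE P_i^{(s)} R_{\ell-1}R_{\ell-1}^\trans P_i^{(s)} \;=\; R_{\ell-1}R_{\ell-1}^\trans + \inparen{\tfrac{1}{p'} - 1}\diag_r(\twonorm{e_r^\trans R_{\ell-1}}^2).
\]
Sandwiching with $Y_i$ and using $Y_i R_{\ell-1}=0$ kills the first term, leaving
\[
 \EE \twonorm{Y_i P_i^{(s)} R_{\ell-1}}^2 \;=\; \inparen{\tfrac{1}{p'} - 1}\sum_r \twonorm{e_r^\trans R_{\ell-1}}^2 Y_{i,r}^2 \;\leq\; \frac{\mu_t k}{np'}\twonorm{e_i^\trans N_t}^2,
\]
where I used the inductive hypothesis~\eqref{inc} on $\max_r\twonorm{e_r^\trans R_{\ell-1}}^2$ together with $\twonorm{Y_i}\leq \twonorm{e_i^\trans N_t}$ (since $I - R_{\ell-1}R_{\ell-1}^\trans$ is a projection).

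Finally, Markov's inequality applied to this second moment bound yields $\twonorm{Y_i P_i^{(s)} R_{\ell-1}} \leq (\delta/8)\twonorm{e_i^\trans N_t}$ with probability $\geq 14/15$, provided $p' \gtrsim \mu_t k/(\delta^2 n)$; multiplying by the factor of $2$ from $(B_i^{(s)})^{-1}$ gives the claimed bound of $(\delta/4)\twonorm{e_i^\trans N_t}$. A union bound over the failure events of Claim~\ref{claim:Bi} and Markov absorbs the extra $1/n^5$ into the final constant probability. No step here is really a bottleneck — the whole argument is parallel in form to Claim~\ref{claim:maybespiky}; the only ``hard'' part is noticing the cancellation $Y_i R_{\ell-1}=0$ after the add-and-subtract trick, which is what allows the $1/p'$ term (rather than something of order $1$) to drive the second moment calculation.
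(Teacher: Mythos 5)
Your proof is correct and is slightly cleaner than the one in the paper, though it rests on the same underlying tools (Proposition~\ref{prop:Gdecomp}, the second-moment computation, Markov's inequality, and Claim~\ref{claim:Bi}). The difference is in the algebraic decomposition. The paper writes
\[
 e_i^\trans (G_\ell^{(s)})^N = \inparen{ e_i^\trans \Nt (P_i^{(s)} - I )R_{\ell-1} + e_i^\trans \Nt R_{\ell-1}(I - B_i^{(s)} ) } \inparen{ B_i^{(s)}}^{-1} =: (y_1 + y_2)(B_i^{(s)})^{-1},
\]
and handles the two terms separately: a second-moment and Markov argument for $y_1$, and the $\opnorm{I - B_i^{(s)}}\leq \delta/2$ bound from Claim~\ref{claim:Bi} for $y_2$. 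You instead insert $I = R_{\ell-1}R_{\ell-1}^\trans P_i^{(s)} R_{\ell-1}(B_i^{(s)})^{-1}$ (valid by definition of $B_i^{(s)}$) to collapse the two terms into the single expression $e_i^\trans N_t (I - R_{\ell-1}R_{\ell-1}^\trans) P_i^{(s)} R_{\ell-1}(B_i^{(s)})^{-1}$, where the key cancellation $Y_i R_{\ell-1}=0$ (with $Y_i = e_i^\trans N_t(I - R_{\ell-1}R_{\ell-1}^\trans)$) makes the entire argument a literal transcription of Claim~\ref{claim:maybespiky} with $M_t$ replaced by $N_t$. Your version buys a unified treatment and one fewer Markov application, so slightly better constants; the paper's version requires no add-and-subtract insight but pays with the extra $y_2$ term. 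Both give the claimed probability bound after a union over the failure event of Claim~\ref{claim:Bi}; the $\log(n)$ in the hypothesis on $p'$ is needed only for Claim~\ref{claim:Bi}, not for the second-moment step, in both arguments.
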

\begin{proof}
Using Proposition \ref{prop:Gdecomp}, 
\begin{align*}
e_i^\trans (G_\ell^{(s)})^N &= e_i^\trans \inparen{ \Nt P_i^{(s)} R_{\ell-1} \inparen{ B_i^{(s)}}^{-1} - \Nt R_{\ell-1} } \\
&= e_i^\trans \inparen{ \Nt P_i^{(s)} R_{\ell-1}  - \Nt R_{\ell-1}B_i^{(s)} }(B_i^{(s)})^{-1} \\
&= \inparen{ e_i^\trans \Nt (P_i^{(s)} - I )R_{\ell-1} + e_i^\trans \Nt R_{\ell-1}(I - B_i^{(s)} ) } \inparen{ B_i^{(s)}}^{-1} \\
&=: (y_1 + y_2)\inparen{ B_i^{(s)}}^{-1}.
\end{align*}
We have already bounded $\opnorm{ (B_i^{(s)})^{-1}}$ with high probability in Claim \ref{claim:Bi}, when the bound on $p'$ holds, and so we now bound $\twonorm{y_1}$ and $\twonorm{y_2}$ with decent probability.
As we did in Claim \ref{claim:maybespiky}, we compute the expectation of $\twonorm{y_1}^2$ and use Markov's inequality.
\begin{align*}
\EE \twonorm{ y_1}^2 &= \EE \twonorm{ e_i^\trans \Nt \inparen{ P_i^{(s)} - I } R_{\ell-1} }^2 \\
&= e_i^\trans \Nt \EE \inbrak{ (P_i^{(s)} - I) R_{\ell-1}R_{\ell-1}^\trans (P_i^{(s)} - I ) } \Nt^\trans e_i \\
&= e_i^\trans \Nt \inparen{ \frac{1}{p'} - 1} \diag_r( \twonorm{ e_r^\trans R_{\ell-1} }^2 ) \Nt^\trans e_i \\
&= \inparen{ \frac{1}{p'} - 1} \sum_{r=1}^n \inparen{ \Nt }_{ir}^2 \twonorm{ e_r^\trans R_{\ell-1} }^2 \\
&\leq \inparen{ \frac{ \mu_t k }{np'} } \twonorm{ e_i^\trans \Nt }^2. 
\end{align*}
Thus, by Markov's inequality, we have
\[ \PR{ \twonorm{ y_1} \geq 20 \sqrt{ \frac{ \mu_t k }{np'} } \twonorm{ e_i^\trans \Nt }} \leq \frac{1}{20}. \]
Next, we turn our attention to the second term $\twonorm{y_2}$.  We have
\[ \twonorm{y_2} = \twonorm{ e_i^\trans \inparen{ \Nt  R_{\ell-1} } \inparen{ I - B_i^{(s)}}  } \leq \twonorm{ e_i^\trans \Nt R_{\ell-1} } \opnorm{ I - B_i^{(s)}}. \]
By Claim \ref{claim:Bi}, we established that with probability $1 - 1/n^5$, $\opnorm{ I - (B_i^{(s)}) } \leq \frac{\delta}{2}$, with our choice of $p'$.  
Thus, with probability at least $1 - 1/n^5$, 
\[ \twonorm{ y_2 } \leq \delta \twonorm{ e_i^\trans \Nt R_{\ell-1} } \leq \frac{\delta}{2} \twonorm{ e_i^\trans \Nt }. \]
Altogether, we conclude that with probability at least $1 - 1/20 - 2/n^5$, we have
\[ \twonorm{ e_i^\trans (G_\ell^{(s)})^N } \leq \inparen{ \twonorm{y_1} + \twonorm{y_2} }\opnorm{ (B_i^{(s)})^{-1}}
\leq  \frac{ 3\delta }{4(1 - \delta/2)} \twonorm{ e_i^\trans \Nt } \leq \delta \twonorm{ e_i^\trans \Nt }\]
as long as $\delta \leq 1/2$.  This proves the claim.
\end{proof}

Putting Claims \ref{claim:Bi}, \ref{claim:maybespiky} and \ref{claim:Gn} together, along with the choice of $p_t' = L_t\Smax p'$, we conclude that, for each $s \in [T]$ and for any $\delta < 1/2$,
\begin{equation}\label{eq:Gssmall} 
\PR{ \twonorm{e_i^\trans G^{(s)}_\ell } \geq \frac{ \delta}{4(1 - \delta/2)} \inparen{ \twonorm{ e_i^\trans \Nt} + \sum_{j=1}^t \twonorm{ e_i^\trans M^{(j)} } \opnorm{ E_{\ell-1}^{j} } } } \leq \frac{ 1}{5}. 
\end{equation}
This implies that
\[ \twonorm{e_i^\trans G_\ell } = \twonorm{ e_i^\trans{ \median_s G_\ell^{(s)} } } = \twonorm{ \median_s ( e_i^\trans G_\ell^{(s)} ) } \]
is small with exponentially large probability.  Indeed, by Lemma \ref{lem:vectormedians}, 
\begin{equation*} 
\PR{ \twonorm{e_i^\trans G_\ell } \geq \frac{ \delta}{2(1 - \delta/2)} \inparen{ \twonorm{ e_i^\trans \Nt} + \sum_{j=1}^t \twonorm{ e_i^\trans M^{(j)} } \opnorm{ E_{\ell-1}^{j} } } }\leq \exp(-c \Smax),
\end{equation*}
for some constant $c$.
By the choice of $\Smax$, the failure probability is at most $1/n^6$, and a union bound over all $i$ shows that, with probability at least $ 1- 1/n^5$, 
\begin{equation}\label{eq:Gsmall}
 \twonorm{ e_i^\trans G_\ell } \leq \delta \inparen{  \twonorm{ e_i^\trans \Nt} + \sum_{j=1}^t \twonorm{ e_i^\trans M^{(j)} } \opnorm{ E_{\ell-1}^{j} } } = \omega_{\ell-1}^{(i)}. 
\end{equation}
This was the second claim in Lemma \ref{thm:noisealtmin}.  
Now, we show that in the favorable case that \eqref{eq:Gsmall} holds, so does the first claim of Lemma \ref{thm:noisealtmin}, and this will complete the proof of the lemma.  Suppose that \eqref{eq:Gsmall} holds.  Then 
\begin{align*}
\fronorm{ G_\ell } &= \sqrt{ \sum_{i=1}^n \twonorm{e_i^\trans G_\ell }^2} \\
&\leq \sqrt{ \sum_{i=1}^n \delta^2 \inparen{  \twonorm{ e_i^\trans \Nt} + \sum_{j=1}^t \twonorm{ e_i^\trans M^{(j)} } \opnorm{ E_{\ell-1}^{j} } }^2} \\
&\leq 
 \delta \sqrt{ \sum_{i=1}^n  \twonorm{ e_i^\trans \Nt}^2 } + \delta \sqrt{ \sum_{i=1}^n \inparen{ \sum_{j=1}^t \twonorm{ e_i^\trans M^{(j)} } \opnorm{ E_{\ell-1}^{j}} }^2 } \\
&\leq 
 \delta \fronorm{\Nt} + \delta \sqrt{ \sum_{i=1}^n \inparen{ \sum_{j=1}^t \twonorm{ e_i^\trans M^{(j)} } \opnorm{ E_{\ell-1}^{j}} }^2 } \mper
\end{align*}
Notice that, for any real numbers $(a_{i,j})$, $i \in [n], j \in [t]$, and for any real number $b_j$, $j \in [t]$, we have
\begin{align*}
\inparen{ \sum_{i=1}^n \inparen{ \sum_{j=1}^t a_{i,j} b_j }^2 }^{1/2} &= \twonorm{Ab} 
= \max_{ \twonorm{z} = 1} z^\trans A b 
= \max_{ \twonorm{z} = 1} \sum_{j=1}^t (z^\trans A e_j ) b_j \\
&\leq \sum_{j=1}^t \max_{z^{(j)}} ( (z^{(j)})^\trans A e_j ) b_j 
= \sum_{j=1}^t \twonorm{Ae_j} b_j 
= \sum_{j=1}^t \inparen{ \sum_{i=1}^n a_{i,j}^2 }^{1/2} b_j.
\end{align*}
Thus, we may bound the second term above by
\begin{align*}
\delta \sqrt{ \sum_{i=1}^n \inparen{ \sum_{j=1}^t\twonorm{ e_i^\trans M^{(j)} } \opnorm{ E_{\ell-1}^j } }^2 } 
&\leq \delta \sum_{j=1}^t \inparen{ \sum_{i=1}^n \twonorm{ e_i^\trans M^{(j)} }^2 }^{1/2} \opnorm{ E_{\ell-1}^j } \notag\\
&= \delta \sum_{j=1}^t \fronorm{ M^{(j)} } \opnorm{ E_{\ell - 1}^j }. 
\end{align*}
Altogether, we conclude that, in the favorable case the \eqref{eq:Gsmall} holds, 
\[ \fronorm{G_\ell} \leq \delta\inparen{ \fronorm{\Nt} + \sum_{j=1}^t \fronorm{M^{(j)}} \opnorm{ E_{\ell-1}^j }} = \omega_{\ell-1}, \]
as desired.
This completes the proof of Lemma \ref{thm:noisealtmin}.

\end{document}